\newcommand{\lo}[1]{\ensuremath{\underline{#1}}}
\newcommand{\hi}[1]{\ensuremath{\overline{{#1}}}}
\newcommand{\diff}{\ensuremath{\,\mathrm{d}}}
\newcommand{\kldsym}{{\ensuremath{\mathrm{D}}}}
\newcommand{\kld}[2]{{\ensuremath{\vphantom{f^2}\kldsym\!\left(#1\left.\vphantom{#1#2}\!\right\|#2\right)}}}
\newcommand{\supp}{\ensuremath{\mbox{\textnormal{supp}}}}
\newcommand{\subs}[1]{\ensuremath{_{\mbox{\scriptsize {#1}}}}}       
\newcommand{\sxn}{{\ensuremath{\mbox{\textnormal{S}}}}}       
\newcommand{\sx}{{\ensuremath{\mbox{\tiny{\sxn}}}}}     
\newcommand{\xsize}{\ensuremath{{\ell_{x}}}} 
\newcommand{\usize}{\ensuremath{{\ell_{u}}}} 
\newcommand{\ysize}{\ensuremath{{\ell_{y}}}} 
\newcommand{\zsize}{\ensuremath{{\ell_{z}}}} 
\newcommand{\unitvect}[1]{\ensuremath{\mathrm{1}_{(#1)}}}
\newcommand{\fu}[2][]{f#1\left(#2\right)}              
\newcommand{\f}[3][]{f#1\left(#2\vphantom{#2#3}\right|\!\left. #3\vphantom{#2#3}\right)}
\newcommand{\slo}[1][]{\ensuremath{\underline{s}}}  
\newcommand{\spu}[1][]{\ensuremath{\overline{s}}}   
\newcommand{\mS}[1]{\ensuremath{\mathbb{\uppercase{#1}}}} 
\newcommand{\xnparv}{\ensuremath{\omega}}  
\newcommand{\ynparv}{\ensuremath{\nu}}     
\newcommand{\xnpars}{\ensuremath{\rho}}  
\newcommand{\ynpars}{\ensuremath{r}}     
\newcommand{\anal}{\ensuremath{^{(a)}}}     
\newcommand{\synt}{\ensuremath{^{(s)}}}     
\newcommand{\aanal}{\ensuremath{A\anal}}   
\newcommand{\asynt}{\ensuremath{A^{(s)}}}   
\newcommand{\ljrem}[1]{}
\newtheorem{rem}{Remark}
\newtheorem{thm}{Theorem}
\newtheorem{cor}[thm]{Corollary}
\begin{document}

\begin{frontmatter}


\title{Fully probabilistic design for knowledge fusion between Bayesian filters under uniform disturbances}
%


\author[a1]{Lenka Kukli\v{s}ov\'{a}~Pavelkov\'{a}\corref{c1}}
\author[a1]{Ladislav Jirsa}
\author[a1,a2]{Anthony Quinn}

\date{}

\address[a1]{Czech Academy of Sciences, Institute of Information Theory and Automation  \newline
                Pod vod\'{a}renskou v\v{e}\v{z}\'{\i} 4, Prague, Czech Republic}
\address[a2]{Trinity College Dublin, the University of Dublin, Ireland                }

\cortext[c1]{Corresponding author. E-mail address: pavelkov@utia.cas.cz.}

\begin{abstract}
This paper considers the problem of Bayesian transfer learning-based
knowledge fusion between linear state-space processes driven by
uniform state and observation noise processes. The target task conditions on
probabilistic state predictor(s) supplied by the source filtering
task(s) to improve its own state estimate.  A joint
model of the target and source(s) is not required and is not elicited.
The resulting decision-making problem for choosing the optimal conditional target filtering distribution
under incomplete modelling is solved via fully probabilistic design (FPD),
i.e. via appropriate minimization of Kullback-Leibler divergence (KLD).
The resulting FPD-optimal target learner is robust, in the sense that it can reject poor-quality source knowledge.
In addition, the fact that this Bayesian transfer learning (BTL) scheme does not depend on a model of interaction between the source and target tasks
ensures robustness to the misspecification of such a model. The latter is a problem that affects conventional transfer learning methods.
The properties of the proposed BTL scheme are demonstrated via extensive simulations,
and in comparison with two contemporary alternatives.
\end{abstract}

\begin{keyword}

knowledge fusion  \sep Bayesian transfer learning \sep fully probabilistic design  \sep
state-space models  \sep bounded noise  \sep Bayesian inference 


\end{keyword}

\end{frontmatter}


\section{Introduction}\label{sec:intro}

Methods of data and information fusion are receiving much attention at present, because of their range of applications in industry 4.0, in the navigation and localization problems, in sensor networks, robotics, and so on \cite{Fou:17}\nocite{DiezEtal:19} -- \cite{AlaHan:20}.

The terms \emph{data} fusion and \emph{information} fusion are often used as synonyms.
However, in some scenarios, the term data fusion is used for raw data that are obtained directly from the sensors, 
while the term information fusion concerns processed or transformed data.
Other terms associated with data fusion include 
data combination, data aggregation, multi-sensor data fusion, and sensor fusion~\cite{Cas:13}.

Data fusion techniques combine multiple sources in order  to obtain improved
(less expensive, higher quality, or more relevant) inferences and decisions compared to a single source. These techniques can be classified into three non-exclusive categories: (i) data association, (ii) state estimation, and (iii) decision fusion \cite{Cas:13}. In this paper, we focus on state estimation methods.

Conventional data fusion methods work with multiple data channels from one common domain, and originating from the same source.
In contrast, cross-domain fusion methods work with data in different domains, but related by a common latent object \cite{Zhe:2015}.
Data from different domains cannot be merged directly. Instead, knowledge---or ``information'' above---has to be extracted from these data and only then fused.
One method of knowledge fusion is transfer learning, also known as knowledge transfer.
This framework aims to extract knowledge from a source domain via a source learning task 
and to use it in the target domain with a given target learning task. The domains or tasks may differ between the source and target \cite{PanYan:10}.
%
%
Examples of successful deployment of transfer learning in data fusion are found in \cite{OuyLow:20}  \nocite{WanZha:16} \nocite{Her:18} -- \cite{ LinHuXiaAlhPir:20}.
In accordance with the DIKW classification scheme proposed in \cite{Bed:20}, we will refer to transfer learning-based fusion as  knowledge fusion.

The performance of transfer learning methods can be improved using computational intelligence
\cite{Leetal:15}.
Bayesian inference provides a consistent approach to building in computational intelligence. It does so via probabilistic uncertainty quantification in decision-making,
taking into consideration the uncertainty associated with model parameters, as well as, the uncertainty associated with combining multiple sources of data.
In the Bayesian transfer learning (BTL) framework---to be championed in this paper---the source and target can be related through a joint prior distribution, as in
\cite{KarQiaDou:18}
%
\nocite{ChaKap:19}
%
--\cite{WanTsu:20}.
%
BTL usually adopts a complete stochastic modelling framework, such as Bayesian networks \cite{LiWanLiWan:20}, Bayesian neural networks \cite{ChaKap:19} or hierarchical Bayesian approaches \cite{WilFerTad:12}.
As already noted, these methods require  a complete model of source-target interaction.
In contrast, in \cite{PapQui:21}, BTL is defined as the task of conditioning a target probability distribution on a transferred source distribution.  A~dual-modeller framework is adopted, where the target modeller conditions on a probabilistic data predictor provided by an independent local source modeller. No joint interaction model between the source and target is specified, and so the source-conditional target distribution is non-unique and can be optimized in this incomplete modelling scenario. The target undertakes this distributional decision-making task optimally, by minimizing an approximate Kullback-Leibler divergence \cite{KulLei:51}. This generalized approach to Bayesian conditioning in incomplete modelling scenario is known as fully probabilistic design~\cite{QuiKarGuy:16}.

Our aim in this paper is to derive a BTL algorithm for knowledge fusion that will use knowledge from several source state-space filters to improve state estimation in a~single target state-space filter. All (observational and modelling) uncertainties are assumed 
to be bounded.
State estimation under bounded noises represents a significant focus for state filtering methods, 
since, in practice, the statistical properties of noises are rarely known,
with only their bounds being available. They avoid the adoption of unbounded noises, that can 
lead to over-conservative design~\cite{Ono:13}.
To the best of our knowledge, the topic of BTL-based multi-task/filter state estimation 
with bounded noises has not yet been addressed in the literature,
except in the author's previous publications \cite{JirPavQui:19a, JirPavQui:20}. 
In those papers, BTL between a~\emph{pair} of filters affected by 
bounded noises is presented. The source knowledge is represented by a bounded output 
(i.e. data) predictor. The optimal target state filtering distribution is 
then designed via FPD. In \cite{JirPavQui:19a}, the support of the state inference 
is an orthotope, while in \cite{JirPavQui:20}, it is relaxed to a parallelotope.

There are fusion techniques for state estimation with bounded noises, but these are conventional fusion methods as defined above.
%
Data fusion methods using set membership  estimation are addressed, for instance, in
\cite{WanSheXiaZhu:19} 
\nocite{SheLiuZhoQinWanWan:19} 
\nocite{Bec:10} 
\nocite{YuaWu:18} 
 -- \cite{XiaYanQin:18}. 
In \cite{HanHor:00}, set membership and stochastic estimation are combined.
In \cite{CheHoYu:17},  local Kalman-like estimates are computed in the presence of bounded noises.
Particle filters \cite{Lietal:16} can also effectively solve the Bayesian estimation problem with bounded noises. However, they are computationally demanding. When used in data fusion context, reduced computational complexity is obtained in \cite{HoaDenHarSlo:15}. 
In \cite{WanSheZhuPan:16} and \cite{BalCaiCri:06}, particle filtering techniques and set membership approaches are combined.

The current paper significantly extends and formalizes results on BTL reported in the above-mentioned authors' papers \cite{JirPavQui:19a} and \cite{JirPavQui:20}.
Both of those papers report an improvement in target performance in the case of concentrated source knowledge (positive transfer) and rejection of diffuse source knowledge (robust transfer). However, the improvement was only minor compared to the performance of the isolated target,
whereas \emph{ad hoc} proposed variants exhibited significantly improved positive transfer.
In the current paper, we formalize the above-mentioned informal variant, showing it to be FPD-optimal. The task of transfer learning-based knowledge fusion with bounded noises is solved in the case where the transferred knowledge is the source's probabilistic state predictor.
An extension to multiple sources is also provided in this paper.

The paper is organized as follows: This section ends with a brief summary of the notation used throughout the paper.
Section \ref{sec:problem} presents the general problem of FPD-optimal Bayesian state inference and estimation in the target, conditioning on transferred knowledge from a~source in the form of the probabilistic state predictor.
In Section \ref{sec:main-res}, these general results are specialized to source
and target state-space models with uniform noises, and
are finally extended to the case of multiple sources in Section \ref{sec:extend}.
Section \ref{sec:experiment-discussion} provides the extensive simulation evidence to illustrate the performance of our FPD-optimal BTL scheme.
Comparison with a contemporary (non-Bayesian) fusion method for uniformly driven state-space models is also provided, as well as comparison with a completely modelled Bayesian network approach.
Section \ref{sec:conclusion} concludes the paper.
The proofs of all the theorems are provided in Appendix~A. 

 \subparagraph*{Notation:}
Matrices are denoted by capital letters (e.g. $A$), vectors and scalars by lowercase
letters (e.g. $b$). $A_{ij}$ is the (i,j)-th element of matrix $A$.  $A_i$ denotes the $i$-th row of $A$.  {$\xsize$ denotes the length of a~(column) vector ${x}$,} and $\mathbb{X}$ denotes
the~set of $x$.  Vector inequalities, e.g. $\underline{x}<\overline{x}$, 
as well as vector maximum and minimum operators, e.g. $\min\{x,y\}$,
are meant entry-wise. $I$ is the identity matrix.  $\chi_{x}(\mathbb{X})$ is
the set indicator, equalling $1$ if $x \in \mathbb{X}$ and $0$ otherwise.
${x}_{t}$ is the value of a~time-variant column
vector ${x}$, at a~discrete time instant, $t \in \mathbb{T} \equiv
\{1,2,\ldots,\overline{t}\}$; $x_{t;i}$ is the $i$-th entry of
${x}_t$; $x(t)\equiv\{x_t, x_{t-1},\ldots, x_1\}$.
$\|x\|_2$ is the Euclidean norm of $x$ and
$\|x\|_{\infty}$ 
is the $H_\infty$ norm of $x$. 
Note that no~notational distinction is made between a~random variable and
its realisation, $X(\omega)\equiv x$. The context will make clear which is meant.

\section{FPD-optimal Bayesian transfer learning (FPD-BTL)} \label{sec:problem}

Assume two stochastically independent modellers, the source (with subscript
\sxn) and the target (without subscript), each modelling their local environment.
Here, we will formulate the task of FPD-optimal Bayesian transfer learning (FPD-BTL) between this source and target, the aim being to improve the target's model of its local environment via transfer of probabilistic knowledge from the source's local environment, as depicted in Figure \ref{fig:transfer-diagram}.

Before addressing the two-task context, let us recall the state estimation problem (filtering) for an \emph{isolated} target, i.e. in the absence of knowledge transfer from a source.

In the Bayesian filtering framework~\cite{Karat:05}, a system of interest is described by the following probability density functions (pdfs):
\begin{eqnarray}\label{eqn:system-pdfs}
\nonumber
&& \mbox{prior pdf } \fu{x_{1}}, \mbox{   observation model } \f{y_{t}}{x_{t}}, t \in \mathbb{T} \\
&& \mbox{and time evolution model } \f{x_{t+1}}{x_{t},u_{t}}, t \in \mathbb{T} \setminus \overline{t}.
\end{eqnarray}
\noindent Here,
${y}_{t}$ is an $\ysize$-dimensional observable output,
${u}_{t}$ is an~optional~$\usize$-dimensional known (exogenous) system input, and
${x}_{t}$ is an $\xsize$-dimensional unobservable (hidden) system state. 
We assume that (i) the hidden state process, $x_{t}$, satisfies the Markov property;
(ii)~no direct relationship between input and output exists in the
observation model; and (iii) the optional inputs constitute of a~known sequence $u_t$, $t \in \mS{t}$, as already stated.

\begin{figure}[t]
  \centering
\includegraphics[width=10cm]{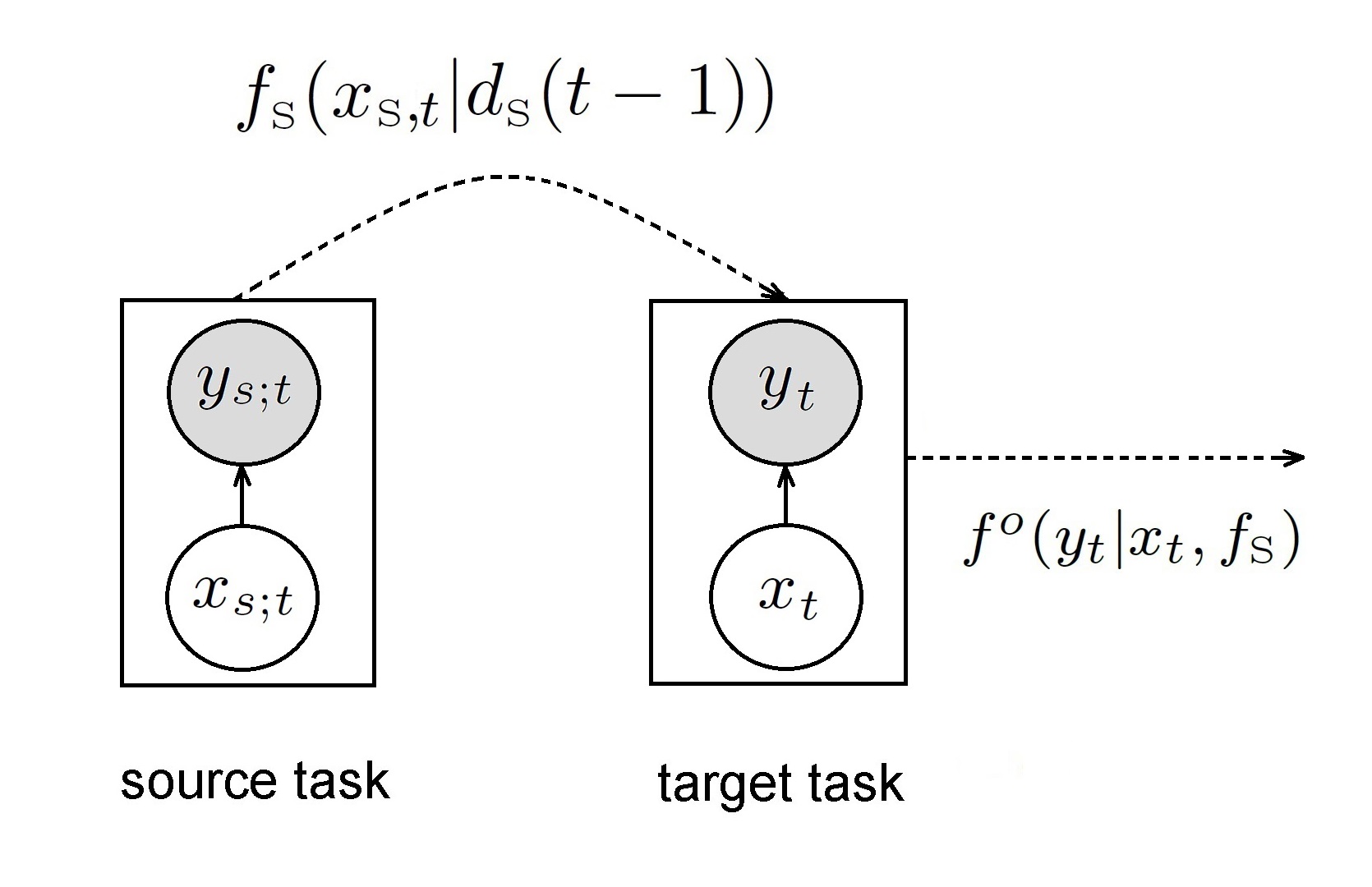}
\caption{Bayesian transfer learning (BTL), involving probabilistic knowledge transfer
from the source to the target Bayesian filter.
The source filter transfers its state predictor,
$f_\sx(x_{\sx,t}|d_\sx(t-1))$, statically at each time, $t=1,2,\ldots,\overline{t}$, to the target
filter to improve the target's filtering performance.
The source data, $d_\sx(t)$, are unobserved by the target,
which computes an optimal conditional model of its local observations,
i.e. $f^o(y_t|x_t,f_\sx)$, at each time, $t$.
Probabilistic knowledge flow is depicted by dashed lines
above.}
\label{fig:transfer-diagram}
\end{figure}

Bayesian filtering, i.e. the inference task of learning the
unknown state process, $x_t$, given the data history $d(t)$,
involves sequential computation of the posterior pdf, $f(x_{t}|d(t))$.
Specifically,
$d(t)$ is a~(multivariate) sequence of observed data,
$d_{t} = \{y_{t}, u_{t}\}$, $t \in \mS{t}$.
Evolution of $f(x_{t}|d(t))$ is described by a~two-step recursion
(the data update and time update) initialized with the prior pdf,
$f(x_1)\equiv f(x_1|d(0))$ \eqref{eqn:system-pdfs}, and ending with
a~data update at the final time, $t=\overline{t}$.

The data update (Bayes' rule) processes the latest datum, $d_t$:
    \begin{equation}\label{eqn:data-updt}
      f(x_{t}|d(t))= \frac{f(y_{t}|x_{t})f(x_{t}|d(t-1))}
      {\int\limits_{\mS{x}_{t}}f(y_{t}|x_{t})f(x_{t}|d(t-1)) \diff x_{t}}.
     \end{equation}

The time update (marginalization) infers the evolution of the
state at the next time:
    \begin{equation}  \label{eqn:time-updt}
   f(x_{t+1}|d(t))=\int\limits_{\mS{x}_{t}}\!f(x_{t+1}|u_{t},x_{t})f(x_{t}|d(t)) \diff x_{t}.
\end{equation}


Next, we return to two stochastically independent modellers,
i.e. the source and target (Figure \ref{fig:transfer-diagram}).
Each filter models its local system,  $\{x_\sx,d_\sx\}$ and $\{x,d\}$,
respectively.
%
The target has access only to the (probabilistic)
state predictor of the source, $f_\sx(x_{\sx,t}|d_\sx(t-1))$, but
not to the actual data or states of the source
(Figure\,\ref{fig:transfer-diagram}).

In the isolated target task, the modeller's complete knowledge
about the evolution of its local state and output
is expressed uniquely by the joint pdf
(i.e. the numerator in \eqref{eqn:data-updt}):
\begin{equation}                      \label{eqn:joint-output-state-pdf}
f(y_{t},x_{t}|d(t-1)) = f(y_{t}|x_{t})\,f(x_{t}|d(t-1)),
\end{equation}
%
where $x_t \in \mS{x}_t$ and $y_t\in\mS{y}_t|x_t$.

Now, performing knowledge transfer as depicted in
Figure~\ref{fig:transfer-diagram}, the target joint
pdf~\eqref{eqn:joint-output-state-pdf} must be conditioned by the
transferred source state predictor, $f_\sx\equiv f_\sx(x_{\sx,t}|d_\sx(t-1))$,
and so the target's knowledge-conditional joint pdf takes the form
$\breve{f}(y_{t},x_{t}|d(t-1),f_\sx)$. Since no joint model of the source and
target relationship is assumed, this pdf is non-unique, and
unknown. Specifically, it is a~variational quantity, $\breve{f}$, in a function set,
$\mS{\breve{f}}$, of possible candidates, as follows:
\begin{equation}                                 \label{e:fac1fac2}
  \breve{f}\equiv\breve{f}(y_{t},x_{t}|d(t-1),f_\sx)
= \breve{f}(y_t|x_t,d(t-1),f_\sx) \, \breve{f}(x_t|d(t-1),f_\sx)
  \in\mS{\breve{f}}
\end{equation}
We now separately examine the two factors on the right-hand side of~\eqref{e:fac1fac2}:
\begin{enumerate}
\item The factor $\breve{f}(x_t|d(t-1),f_\sx)$ represents the target's
  knowledge about its (local) state $x_t$, after transfer
  of the source's state predictor, $f_\sx$,
 to the target. The target chooses to accept the source's
 predictor as its own state model
with \emph{full} acceptance.
  The consequences of this definition will be discussed in Section~\ref{ss:discussion}.
Based on this full acceptance, the target accepts that $x_{\sx,t}$
and $x_t$ are equal in distribution:
\begin{equation}                    \label{e:fac1}
  \breve{f}(x_t|d(t-1),f_\sx) \equiv
  f_\sx(x_{\sx,t}|d_\sx(t-1))\Big|_{x_{\sx,t}\rightarrow x_t} =
  f_\sx(x_{t}|d_\sx(t-1)).
\end{equation}
In consequence, the factor  \eqref{e:fac1} is fixed in~\eqref{e:fac1fac2}.
\item The factor $\breve{f}(y_t|x_t,d(t-1),f_\sx)$ now remains
as the only variational factor,
being a consequence of the
target's choice not to elicit an interaction model
between the source and target (Figure \ref{fig:transfer-diagram}).
According to~\eqref{eqn:system-pdfs}, $f(y_t|x_t,d(t-1))=f(y_t|x_t)$, i.e. the observation model is
conditionally independent, given $x_t$, of $d(t-1)$. This conditional independence is preserved by the knowledge transfer.
Therefore,
\begin{equation}                    \label{e:fac2}
  \breve{f}(y_t|x_t,d(t-1),f_\sx) \equiv \breve{f}(y_t|x_t,f_\sx).
\end{equation}
The main design---i.e. decision---problem for the target is now to
choose an optimal form of  \eqref{e:fac2}.
\end{enumerate}
Inserting \eqref{e:fac1} and~\eqref{e:fac2} into \eqref{e:fac1fac2}:
\begin{equation}                        \label{eqn:joint-output-state-pdf-constr}
 \breve{f} \equiv  \breve{f}(y_{t},x_{t}|d(t-1),f_\sx) =
  \breve{f}(y_{t}|x_{t},f_\sx)\
  f_\sx(x_{t}|d_\sx(t-1)),
\end{equation}
where $x_{t} \in \mS{x}_{\sx,t}$ and $y_{t}\in\mS{y}_{t}|x_{t},f_\sx$.
%
The set, $\breve{\mS{f}}$, of the target's admissible joint models,
$\breve{f}$, following knowledge transfer from the source, is therefore
\begin{equation}      \label{eqn:set-of-models}  
  \breve{f} \in \breve{\mS{f}}
  \equiv
  \{\mbox{set of models}\ \eqref{eqn:joint-output-state-pdf-constr} \mbox{ with } f_\sx(x_t|\cdot)
\mbox{ fixed and } \breve{f}(y_t|\cdot)
\mbox{ variational}\}.
\end{equation}

The optimal pdf, ${f}^o(y_t|x_t,f_\sx)\in\breve{\mS{f}}$, respecting both the
transferred knowledge and the target filter behaviour, is sought using
fully probabilistic design (FPD), which is an axiomatically justified
procedure for distributional decision-making~\cite{KarKro:12}, \cite{Ber:79}.
It seeks $\breve{f} = {f}^o \in \breve{\mS{f}}$, being the joint
pdf~\eqref{eqn:joint-output-state-pdf-constr}
that minimizes the Kullback-Leibler divergence (KLD)
(below)~\cite{KulLei:51} from $\breve{f}$ to the target's fixed ideal, $f^{I}$.
This ideal is defined as~\eqref{eqn:joint-output-state-pdf}, i.e.
the joint pdf of the isolated target filter, modelling its behaviour
prior to (i.e. without) the transfer of source knowledge.
To summarize, the ideal pdf, and the knowledge-conditional pdf to be
designed by the target are, according to~\eqref{eqn:joint-output-state-pdf},
\eqref{eqn:joint-output-state-pdf-constr} and~\eqref{eqn:set-of-models}:
\begin{eqnarray}
    \mbox{ideal:}\hspace{0.6em} f^{I} &\equiv&  f(y_{t}|x_{t})\ f(x_{t}|d(t-1)), \label{e:ideal}\\
	\mbox{variational:}\hspace{1em} \breve{f} &\equiv&
    \underbrace{\breve{f}(y_{t}|x_{t},f_\sx)}_{\mbox{\scriptsize to be optimized}} \label{e:optimized}
    f_\sx(x_{t}|{d_\sx}(t-1)).
\end{eqnarray}
%
%
Recall that the KLD \cite{KulLei:51} from $\breve{f}$ to $f^{I}$
is defined as
\begin{equation}                                         \label{e:klddef}
  \kld{\breve{f}}{f^{I}} = \mathsf{E}_{\breve{f}}\left[\ln\frac{\breve{f}}{f^{I}}\right],
\end{equation}
where $\mathsf{E}_{\breve{f}}$ denotes expectation with respect to $\breve{f}$.
%
FPD consists in minimizing this KLD~\eqref{e:klddef} objective as a function of
$\breve{f}(y_{t}|x_t,f_\sx)$ in~\eqref{e:optimized}, for the fixed ideal~\eqref{e:ideal}, i.e.
\begin{equation}                                \label{e:fpdoptim-wild}
  {f}^o(y_{t}|x_{t},f_\sx) \equiv
  \arg\mathop{\min}\limits_{\breve{f}(y_{t}|x_t,f_\sx)} \kld{\breve{f}}{f^I},
\end{equation}
\eqref{e:fpdoptim-wild} conditions the target's knowledge about
future $y_{t}$ on the transferred $f_\sx$
in an  FPD-optimal manner. For simplicity, the superscript $^o$ will
be omitted in the resulting FPD-optimal pdf,
i.e. ${f}(y_{t}|x_{t},f_\sx) \equiv  {f}^o(y_{t}|x_{t},f_\sx)$.

\pagebreak[2]
\noindent
We note the following: \nopagebreak[4]
\begin{itemize}
\item The transferred source knowledge, $f_\sx(x_{\sx,t}|d_\sx(t-1))$,
can be elicited in various ways that are unknown to
the target; e.g. as an empirical distribution
of a quantity similar to $x_t$, or
some unspecified distributional approximation, etc. \cite{QuiKarGuy:17}.
In this paper, involving multiple state filtering tasks, we will assume
that $f_\sx(x_{\sx,t}|d_\sx(t-1))$ is the output of the source's synchronized
time update at $t-1$ \eqref{eqn:time-updt}.
\item In the authors' previous publications \cite{JirPavQui:19a}, \cite{JirPavQui:20}, \cite{FolQui:18},
it was the source \emph{data} predictor which was transferred. Instead, here,
\emph{for the first time}, it is the source state predictor,
$f_\sx(x_{\sx,t}|d_\sx(t-1)), t \in \mS{t}$, that is transferred.
As we will see later, \emph{this setting ensures robust knowledge transfer}.

\end{itemize}

Recall that our aim is to specialize FPD-optimal Bayesian transfer
learning (FPD-BTL framework) defined in~\eqref{e:ideal}, \eqref{e:optimized},
\eqref{e:fpdoptim-wild} to a pair of Bayesian filters under \emph{bounded}
observational and state noises. We now address this aim.

\section{FPD-BTL between LSU-UOS filtering tasks} \label{sec:main-res}

As noted in Section~\ref{sec:intro}, we are specifically interested in
knowledge processing among interacting Bayesian state-space
filters with uniform noises (LSU models, see below). We
therefore instantiate the FPD-optimal scheme~\eqref{e:fpdoptim-wild}
for conditioning the target's observation model on the source's
transfered state predictor in this specific context.
%
Firstly, in Section~\ref{sec:prelim}, we
review the isolated LSU-UOS filter,
and derive the approximate solution to the related state estimation problem.
Then, the required instantiation of FPD-BTL to a pair of these LSU-UOS
filters is presented in Section~\ref{sec:main}.  In
Section~\ref{sec:extend},
the framework is extended to
multiple LSU-UOS source filters, transferring probabilistic state
knowledge to a single target.


\subsection{LSU-UOS filtering task for the isolated target} \label{sec:prelim}

The  general stochastic system description in~\eqref{eqn:system-pdfs} is now
instantiated as a linear state-space model \cite{Sod:02}
\begin{alignat}{2}
     \label{eqn:SS-model-observ}
     y_{t} &=  Cx_t  + v_{t}           &&   \equiv \tilde{y}_t + v_t, \\
     \label{eqn:SS-model-state-evo}
     x_{t+1}  &=  Ax_t + Bu_t  + w_{t+1}  &\ & \equiv \tilde{x}_{t+1} + w_{t+1},
\end{alignat}
where $x_t\in\mS{r}^{\xsize}$, $y_t\in\mS{r}^{\ysize}$,
$u_t\in\mS{r}^{\usize}$.  $A$, $B$, $C$ are known model matrices of
appropriate dimensions; $v_t$ and $w_{t}$ are additive
random processes expressing observational and modelling uncertainties,
respectively, and their stochastic model must now be specified.
We assume that $v_t$ and $w_{t}$ are mutually independent
white noise processes uniformly distributed on \textit{known} supports of finite measure:
%
%
\begin{equation}
  f(v_t)=\mathcal{U}_v(-\ynparv, \ynparv),
  \hspace{2em}f(w_t)=\mathcal{U}_w(-\xnparv, \xnparv), \label{eqn:SS-model-noises-unif}\\
\end{equation}
where $\xnparv\in\mS{r}^{\xsize}$, $\ynparv\in\mS{r}^{\ysize}$, with finite
positive entries, and $\mathcal{U}_{\bullet}$ denotes the uniform
pdf on an orthotopic support (UOS), as now defined.

\begin{rem}\label{rem:UOS-def}
Consider a~finite-dimensional vector random variable, $z \in \mS{Z}_\mS{o}$, with
realisations in the following bounded subset of $\mS{r}^{\zsize}$:
\begin{equation}                                          \label{eqn:ortho-set}
\mS{z}_\mS{o} \equiv \{z: \lo{z} \leq  z \leq  \hi{z}\}, \ \ \lo{z} <  \hi{z}
\end{equation}
where $\lo{z},\,\hi{z}\in \mS{r}^{\zsize}$.
This convex polytope,  $\mS{Z}_\mS{o}$, is called an \emph{orthotope}.

The~uniform pdf of $z$ on the orthotopic support~\eqref{eqn:ortho-set} called the \emph{UOS pdf} is
defined as
\begin{equation}                                   \label{eqn:unif-notation-ortho}
  \mathcal{U}_z(\lo{z},\hi{z})\equiv
  \mathcal{V}^{-1}_O \chi_z\left(\lo{z} \leq z   \leq  \hi{z}\right),
\end{equation}
where $\mathcal{V}_O=
\prod\limits_{i=1}^{\zsize}(\hi{z}_i-\lo{z}_i)$.
\end{rem}

Model \eqref{eqn:SS-model-observ}, \eqref{eqn:SS-model-state-evo},
\eqref{eqn:SS-model-noises-unif}, together with
\eqref{eqn:unif-notation-ortho}, defines the linear state-space mode
with uniform additive noises on orthotopic supports, denote the
LSU-UOS model.  Its observation and state evolution models
\eqref{eqn:system-pdfs} are equivalently specified as
\begin{align}   
f(y_t|x_t)&\equiv\mathcal{U}_y({\tilde{y}_t}-\ynparv, {\tilde{y}_t}+\ynparv) \label{e:LSU-dupdt}\\
f(x_{t+1}|x_{t},u_{t})&\equiv\mathcal{U}_x({\tilde{x}_{t+1}} - \xnparv,
{\tilde{x}_{t+1}} + \xnparv).                                      \label{e:LSU-tupdt}
\end{align}

Exact Bayesian filtering for the LSU model~\eqref{e:LSU-dupdt} and~\eqref{e:LSU-tupdt}%
---i.e. computation of $f(x_{t}|d(t))$ following~\eqref{eqn:data-updt} and~\eqref{eqn:time-updt}---is intractable, since
the UOS class of pdfs (Remark \ref{rem:UOS-def}) is not
closed under those filtering operations.  One consequence is that the
dimension of the sufficient statistic of the filtering pdf
\eqref{eqn:data-updt} is unbounded as $t$ grows i.e. at an infinite filtering horizon and so cannot be implemented (the curse of dimensionality~\cite{Karat:05}).  In~\cite{JirPavQui:19b,PavJir:18},
approximate Bayesian filtering with the LSU
model~\eqref{e:LSU-dupdt} and~\eqref{e:LSU-tupdt},
closed within the UOS class \eqref{eqn:unif-notation-ortho}, is proposed.
This involves a local approximation after each data update
\eqref{eqn:data-updt} and time update \eqref{eqn:time-updt}, as
recalled below.  This tractable but approximate Bayesian filtering
procedure will be called \emph{LSU-UOS Bayesian filtering}.

\subsubsection{LSU-UOS data update}\label{sec:data-updt-UOS}

Define a~\emph{strip}, $\mS{z}_{\mS{S}}$, as
a set in $\mS{r}^{\zsize}$ bounded by two parallel hyperplanes,
as follows:
\begin{equation}                                               \label{eqn:strip}
 \mS{z}_{\mS{S}} = \{z: a \leq c'\,z \leq b\}.
\end{equation}
Here $a < b$ are scalars, and $c\,\in \mS{r}^{\zsize}$.

In the data update \eqref{eqn:data-updt}, prior $f(x_{t}|d(t-1))=\mathcal{U}_x(\underline{x}^{+}_t,\overline{x}^{+}_t)$
is processed together with $f(y_t|x_{t})$ in~\eqref{e:LSU-dupdt}, and with the latest observation, $y_t$,
via Bayes' rule.  It starts at $t\!=\!1$ with
$f(x_{1})=\mathcal{U}_x(\underline{x}^{+}_1,\overline{x}^{+}_1)$.  The
resulting filtering pdf is uniformly distributed on a~polytopic support that results
from the intersection of~the~ortho\-topic support of $f(x_{t}|d(t-1))$ 
and $\ysize$ strips induced by the latest observation, $y_t$: 
$$
f(x_{t}|d(t)) \propto
\mathcal{U}_x(\underline{x}^{+}_t,\overline{x}^{+}_t)\
\mathcal{U}_x(y_{t}-\ynparv \leq Cx_{t} \leq y_{t}+\ynparv)
\propto$$

\begin{equation}                                \label{eqn:data-updt-unif2}
\propto  \chi_{x} \left(
 \left[
   \begin{array}{c}
     \underline{x}^{+}_t \\
     y_{t} -\ynparv \\
   \end{array}
 \right]
 \leq
 \left[
   \begin{array}{c}
     I \\
     C \\
   \end{array}
 \right]
 x_t
 \leq
 \left[
   \begin{array}{c}
     \overline{x}^{+}_t \\
     y_{t} +\ynparv \\
   \end{array}
 \right]
  \right).
\end{equation}

\noindent In \cite{JirPavQui:19b}, a local approximation is proposed so that the
resulting polytopic support of \eqref{eqn:data-updt-unif2} is
circumscribed by an orthotope, giving
\begin{equation}\label{eqn:data-updt-UOS-approx}
f(x_{t}|d(t))\approx \mathcal{U}_{ x_{t}}(\underline{x}_t, \, \overline{x}_t).
\end{equation}
The approximate Bayesian sufficient statistics, $\underline{x}_t$ and
$\overline{x}_t$, process $d(t)$ tractably $\forall t$, yielding an implementable algorithm.
The details are provided
in \cite{JirPavQui:19b}.

\subsubsection{LSU-UOS time update}\label{sec:time-updt-UOS}
It now remains to ensure that each data update (above) is, indeed,
presented with a UOS output from the preceding time update, as
presumed.  In each time update, the 
UOS posterior, $f(x_{t}|d(t))$
\eqref{eqn:data-updt-UOS-approx}, is processed together with
$f(x_{t+1}|x_t,u_t)$ \eqref{e:LSU-tupdt}---uniform on 
$\xsize$ $x_t$-dependent strips---via the marginalization
operator in \eqref{eqn:time-updt}. The resulting pdf does have an orthotopic 
support, but is not uniform on it.
In \cite{JirPavQui:19b}, the following local
approximation projects $f(x_{t+1}|d(t))$ back into the UOS class, $\forall t$:
%
%
\begin{equation}                                          \label{eqn:time-updt-UOS-approx}
f(x_{t+1}|d(t))\approx \mathcal{U}_{x_{t+1}}(\underline{x}^{+}_{t+1},\overline{x}^{+}_{t+1})
\end{equation}
where
%
%
\begin{equation}                                    \label{eqn:m-computation}
  \lo{x}^{+}_{i,t+1}=\sum_{j=1}^{\xsize}\min\{A_{ij}\lo{x}_{t;j}+B_i u_{t},A_{ij}\hi{x}_{t;j}+B_i u_{t}\}-\xnparv_i
\end{equation}
$$
~~\hi{x}^{+}_{i,t+1}=\sum_{j=1}^{\xsize}\max\{A_{ij}\lo{x}_{t;j}+ B_i u_{t},A_{ij}\hi{x}_{t;j}+B_i u_{t}\}+\xnparv_i,
$$
$i=1,\ldots,\xsize$.

\subsection{FPD-BTL between a pair of LSU-UOS filtering tasks} \label{sec:main}

We now return to the central concern of this paper: the static
FPD-optimal transfer of the state predictor, $f_\sx(x_{\sx,t}|d_\sx(t-1))$,
from the source LSU-UOS filter (``the source task'') to the target
LSU-UOS filter (``the target task''). The transfers will occur
\emph{statically}, $\forall t \in\mS{t}$, meaning that the
\emph{marginal} state predictor, $f_\sx(x_{\sx,t}|d_\sx(t-1))$ is transferred
in each step of FPD-BTL. (For a derivation of joint source knowledge
transfer---i.e. dynamic transfer---in Kalman filters, see~\cite{PapQui:18}.)

Although there exists an explicit functional minimizer
of~\eqref{e:fpdoptim-wild} (see.~\cite{QuiKarGuy:17}), our
specific purpose here is to instantiate this FPD-optimal solution
for UOS-closed filtering in the source and target tasks, as defined
in Section~\ref{sec:prelim}.

We propose that the FPD-optimal target knowledge-constrained
observation model~\eqref{e:fpdoptim-wild} (i.e. after transfer),
$f^o\equiv f(y_t|x_t,f_\sx)$, be uniform with its support,
$\breve{\mS{y}}_t|x_t$, bounded (here, our set notation emphasizes
the fact that the support is a~function of $x_t$).
We now prove that this choice is closed under the FPD
optimization~\eqref{e:fpdoptim-wild}.
While the following theorem is formulated for uniform pdfs on
\emph{general} bounded sets, it is applied to our UOS class in the sequel.

\begin{thm}                       \label{t:t1a}
  Let the target's ideal pdf in FPD~\eqref{e:fpdoptim-wild}
  be its isolated joint predictor~\eqref{e:ideal}. Assume
  that the target's (pre-transfer) state predictor,
  $f(x_t|d(t-1))$ is uniform on bounded support, $\mS{x}_t$.
  $f(y_{t}|x_{t})$ is defined in~\eqref{e:LSU-dupdt}.
The transferred source state predictor,
$f_\sx(x_{\sx,t}|d_\sx(t-1))$,
is also uniform, with bounded support,
$\mS{x}_{\sx,t}$.
Define the bounded intersection (Figure~\ref{f:venn12}):
\begin{equation}                             \label{e:xconstr}
\mS{x}^\cap_{t} = \mS{x}_{\sx, t} \cap \mS{x}_t.
\end{equation}
%
Assume that the (unoptimized) variational target observation model,
$\breve{f}(y_t|x_t,f_\sx)$~\eqref{e:optimized}, is also
uniform with bounded support.

If $\mS{x}^\cap_{t}\neq\emptyset$, then the optimal choice
of $\breve{f}(y_t|x_t,f_\sx)$ minimizing the
FPD objective~\eqref{e:fpdoptim-wild} is
\begin{equation}                                       \label{e:fpdoptim-alt}
  f(y_{t}|x_{t}, f_\sx) \propto   f(y_{t}|x_{t}\in \mS{X}^{o}_{t})
\end{equation}
where the FPD-optimal set of $x_t$ after transfer of the source knowledge is
deduced to be $\mS{x}^{o}_{t}=\mS{x}^\cap_{t}$.

If $\mS{x}^\cap_{t}=\emptyset$---a testable condition \emph{before}
transfer---then knowledge transfer is
stopped,\footnote{This decision is consistent with the definition of
  conditional probability.} and $f(y_t|x_t,
f_\sx)\equiv f(y_t|x_t)$, i.e. the optimal target conditional
observation model is defined to be
that of the isolated target.
\end{thm}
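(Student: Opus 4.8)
The plan is to convert the functional minimization in~\eqref{e:fpdoptim-wild} into a chain-rule decomposition of the KLD, isolate the single free factor, and then pin down its optimizer from two ingredients: (i) the absolute-continuity requirement that any finite-KLD candidate must respect the support of the ideal $f^I$, and (ii) the non-negativity of conditional KLD. First I would substitute the factorizations~\eqref{e:ideal} and~\eqref{e:optimized} into~\eqref{e:klddef} and apply the chain rule, obtaining
\[
  \kld{\breve f}{f^I}=\kld{f_\sx(x_t|d_\sx(t-1))}{f(x_t|d(t-1))}
  +\mathsf{E}_{f_\sx}\!\left[\kld{\breve f(y_t|x_t,f_\sx)}{f(y_t|x_t)}\right].
\]
The first term contains no variational factor and so does not respond to the choice of $\breve f(y_t|x_t,f_\sx)$; only the expectation term can be controlled.

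The crux is the support bookkeeping. A candidate $\breve f$ has finite divergence only if $\breve f\ll f^I$. Since $f^I=f(y_t|x_t)\,f(x_t|d(t-1))$ vanishes whenever $x_t\notin\mS{x}_t$ or $y_t\notin[\,\tilde{y}_t-\ynparv,\tilde{y}_t+\ynparv\,]$ (the support in~\eqref{e:LSU-dupdt}), while the $x_t$-marginal of every $\breve f$ in the admissible family~\eqref{eqn:set-of-models} is the fixed density $f_\sx$, strictly positive on all of $\mS{x}_{\sx,t}$, absolute continuity forces the $y_t$-support of $\breve f(y_t|x_t,f_\sx)$ into $[\,\tilde{y}_t-\ynparv,\tilde{y}_t+\ynparv\,]$ and the joint support onto $\mS{x}^\cap_t\times(\text{observation strips})$. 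This is exactly where the hypothesis $\mS{x}^\cap_t\neq\emptyset$ is spent: if the intersection is empty, no member of~\eqref{eqn:set-of-models} is absolutely continuous w.r.t.\ $f^I$, the objective is identically $+\infty$, and the optimization degenerates.

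I would then observe that, under the Step-2 restriction, each inner divergence $\kld{\breve f(y_t|x_t,f_\sx)}{f(y_t|x_t)}$ is $\geq 0$, and that since $f(y_t|x_t)$ is itself uniform on the bounded set $[\,\tilde{y}_t-\ynparv,\tilde{y}_t+\ynparv\,]$ the admissible (uniform, bounded-support) choice $\breve f(y_t|x_t,f_\sx)=f(y_t|x_t)$ lies in the candidate class and attains the value $0$ for every $x_t\in\mS{x}^\cap_t$; hence it minimizes the expectation term. Reinstating the support restriction extracted above gives the FPD-optimal observation model $f(y_t|x_t,f_\sx)\propto f(y_t|x_t\in\mS{x}^{o}_t)$ with $\mS{x}^{o}_t=\mS{x}^\cap_t$ — equivalently, the optimal joint $\breve f$ is the ideal $f^I$ renormalized onto $\mS{x}^\cap_t$ and the observation strips, which is why the proportionality (rather than equality) sign is required: the displayed object is improper as a conditional for $x_t\notin\mS{x}^\cap_t$, which is precisely why those $x_t$ are excised. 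In the degenerate case $\mS{x}^\cap_t=\emptyset$, Step 2 leaves no finite-KLD candidate, so by the convention that conditioning on a null event is undefined the target declines the transfer and keeps $f(y_t|x_t,f_\sx)\equiv f(y_t|x_t)$; this is decidable before $y_t$ is processed because $\mS{x}_t$ and $\mS{x}_{\sx,t}$ are the known orthotopic supports produced by the target's own time update and by the transferred predictor.

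The main obstacle is the argument of Step 2: one must show cleanly that minimizing the KLD under a \emph{fixed} uniform $x_t$-marginal $f_\sx$ and a \emph{variational} uniform conditional forces the optimal joint onto the intersection support $\mS{x}^\cap_t$, rather than either demanding the stronger inclusion $\mS{x}_{\sx,t}\subseteq\mS{x}_t$ from the start or being swamped by the (possibly infinite) first term when that inclusion fails. In other words, the heart of the proof is identifying the FPD-optimal $\breve f$ as the ideal $f^I$ restricted and renormalized to $\mS{x}^\cap_t\times(\text{strips})$ and then reading off the implied conditional $f(y_t|x_t,f_\sx)$; once that is settled, the remaining ``uniform matches uniform, divergence zero'' computation and the empty-intersection convention are routine.
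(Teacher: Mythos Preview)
Your chain-rule decomposition is exactly what the paper writes (via Fubini, as the terms $\kldsym_y(x_t)$, $\kldsym_{x1}$, $\kldsym_{x2}$), and your handling of the easy case $\mS{x}_{\sx,t}\subseteq\mS{x}_t$ and of the empty-intersection convention agrees with the paper. The genuine gap is in the intermediate case $\mS{x}_{\sx,t}\not\subseteq\mS{x}_t$ with $\mS{x}^\cap_t\neq\emptyset$. There your absolute-continuity step does not do what you claim: the $x_t$-marginal of every candidate in $\breve{\mS{f}}$ is \emph{fixed} at $f_\sx$, which is strictly positive on $\mS{x}^\bullet_t\equiv\mS{x}_{\sx,t}\setminus\mS{x}_t$, where $f^I$ vanishes. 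Hence no proper member of $\breve{\mS{f}}$ is absolutely continuous with respect to $f^I$, and $\kld{\breve f}{f^I}=+\infty$ for \emph{every} admissible $\breve f$. You cannot ``force the joint support onto $\mS{x}^\cap_t$'' without abandoning the fixed marginal, and the first term in your decomposition is not a harmless constant but an infinite one, so the minimization over the second term is vacuous as written.

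The paper meets this head-on with a limiting construction rather than an absolute-continuity restriction: it replaces the ideal state predictor by a strictly positive sequence $f^{(k)}(x_t|d(t-1))\to f(x_t|d(t-1))$ and simultaneously lets $\breve f^{(k)}(y_t|x_t,f_\sx)\propto f(y_t|x_t)\,\xi^{(k)}(x_t)$ with $\xi^{(k)}\to\chi(x_t\in\mS{x}^\cap_t)$, then shows that the $\mS{x}^\bullet_t$-contributions (its $Q^{(k)}$ and $R^{(k)}$ terms) vanish in the limit while the $\mS{x}^\cap_t$-part reduces to the easy case. The optimizer is identified as the limit, namely the improper conditional $f(y_t|x_t)\,\chi(x_t\in\mS{x}^\cap_t)$. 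You correctly flag this as ``the main obstacle,'' but your proposal supplies no device---regularization, limiting sequence, or extended-real comparison---to get past it; without one, the argument works only under the inclusion $\mS{x}_{\sx,t}\subseteq\mS{x}_t$ that you yourself warn against assuming.
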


\begin{proof}
  See~\ref{a:proof1}
\end{proof}
%
\begin{figure}[ht]
\hspace*{-1.5em}
\bgroup
\def\arraystretch{0.3}%
\begin{tabular}{cc}
 {\includegraphics[height=0.33\textwidth]{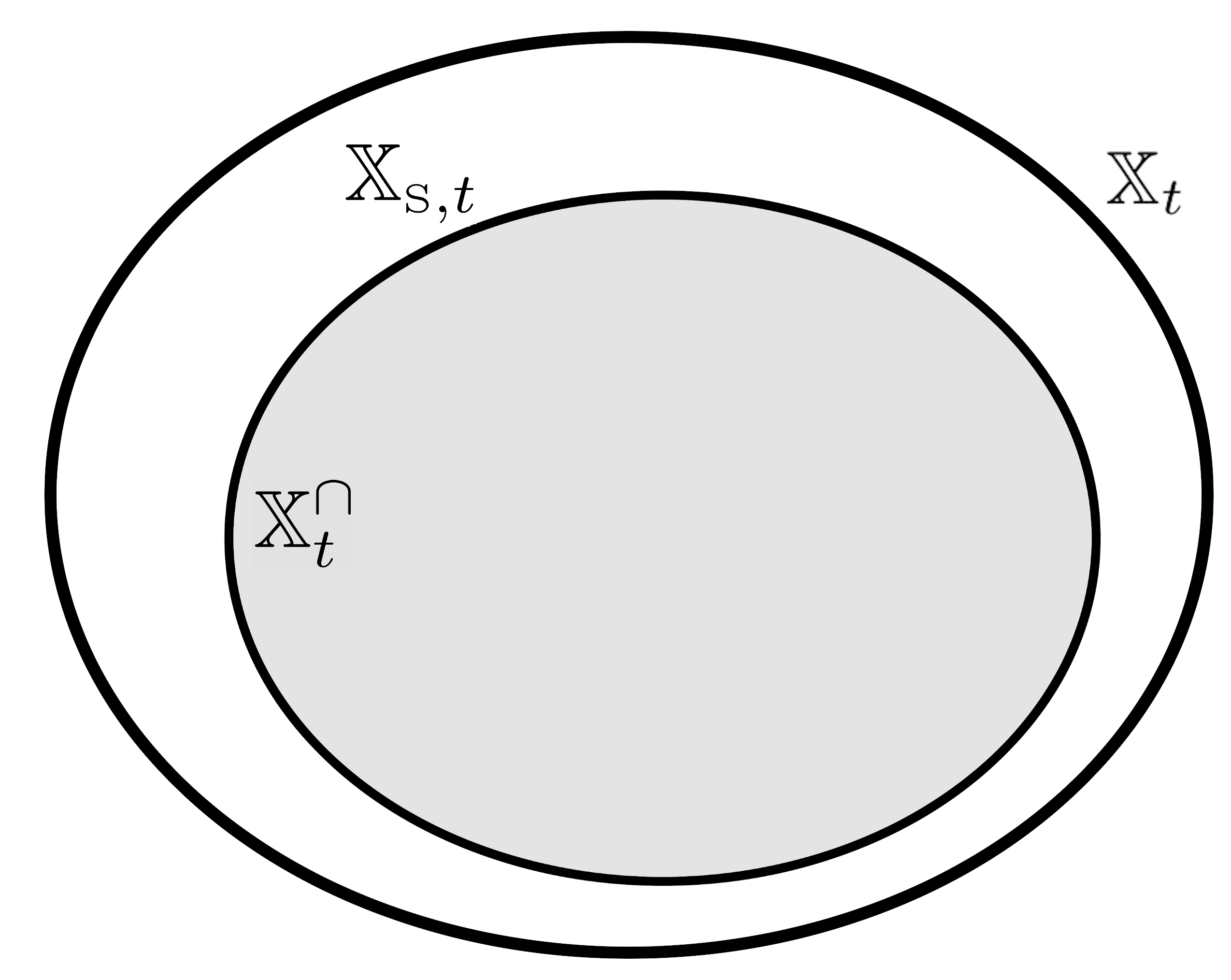}} &
 {\includegraphics[height=0.33\textwidth]{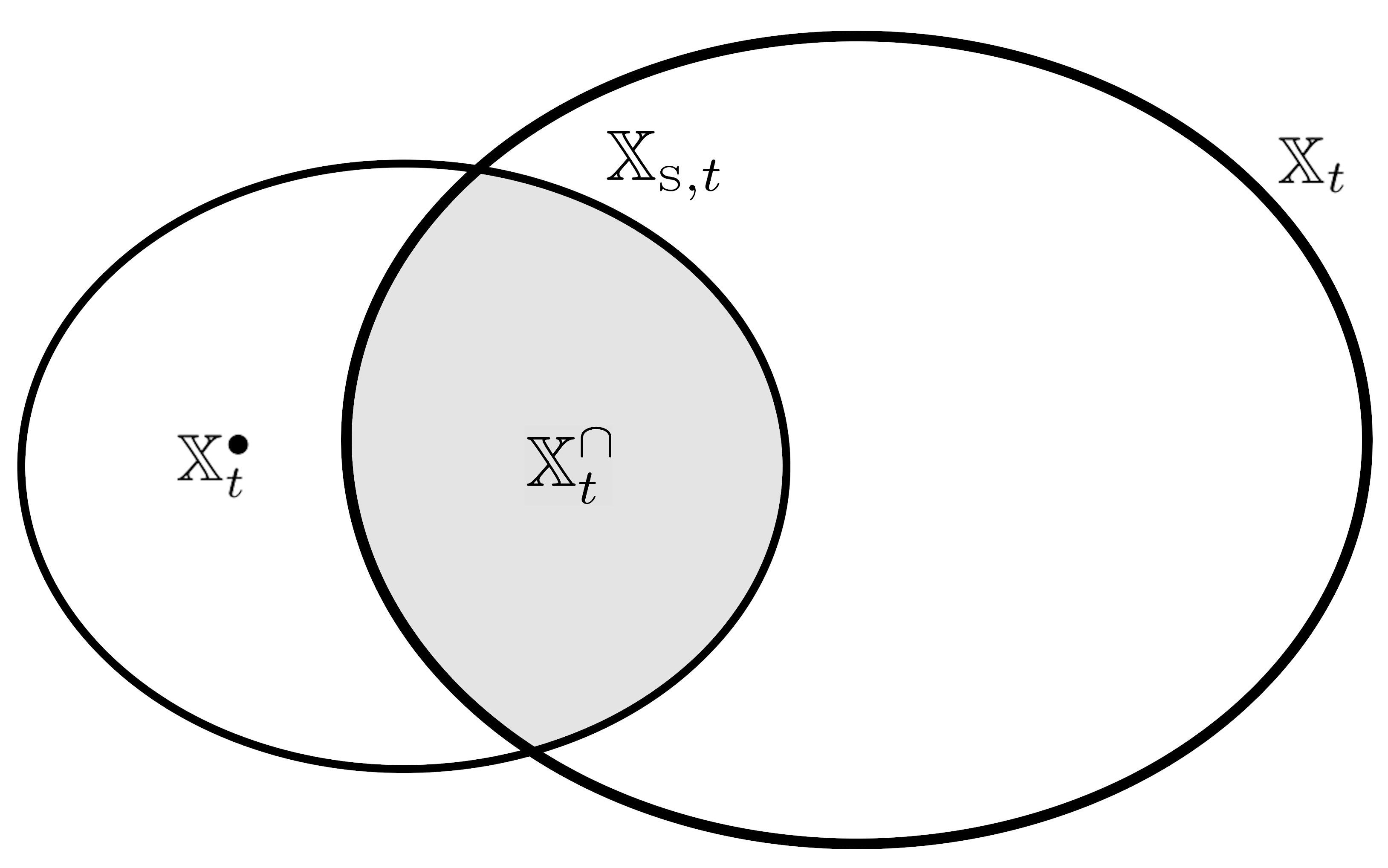}} \\
case~\ref{itemi} & case~\ref{itemii} \\
~\\~\vphantom{{\LARGE B}}\\
 \multicolumn{2}{c}{\includegraphics[height=0.33\textwidth]{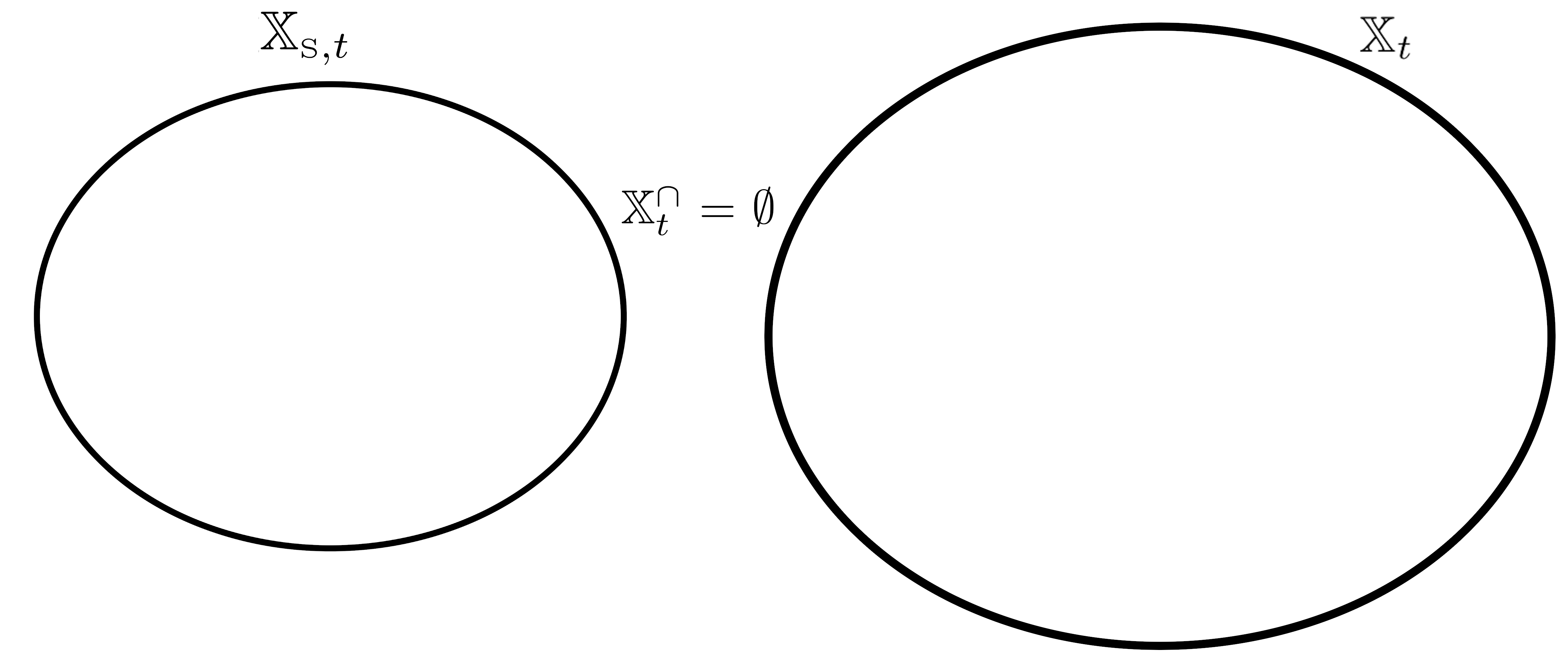}} \\
 \multicolumn{2}{c}{case~\ref{itemiii}}
\end{tabular}
\egroup
 \caption{The mutual positions of the supports, $\mS{x}_{\sx,t}$ of
   $f_\sx(x_{_\sx,t}|d_\sx(t-1))$, and $\mS{x}_{t}$ of
   $f(x_{t}|d(t-1))$. The cases~\ref{itemi}, \ref{itemii}
   and~\ref{itemiii} are
   separately considered in the
   proof of
   Theorem\,\ref{t:t1a}.
}
 \label{f:venn12}
\end{figure}

The sets $\mS{x}_t$ and $\mS{x}_{\sx, t}$ are functions only of
$d(t-1)$ and $d_\sx(t-1)$, respectively, i.e. they are local \emph{statistics} of
the target or source tasks, respectively. In this way, FPD-BTL is
effecting transfer of optimal statistics  (knowledge) from source to target,
in the spirit of knowledge fusion (Section~\ref{sec:intro}).
This is in contrast to any requirement to transfer raw data from
the source, for processing in the target, as occurs in
conventional multi-task inference
(see Section~\ref{sec:experiment-discussion}). This property of transfer
of source-optimal statistics to the target is a defining characteristic
of FPD-BTL.

\begin{cor}[Specialization to the UOS case] \label{cor:spec2UOS}
Orthotopic sets~\eqref{eqn:ortho-set} are closed under
the intersection operator~\eqref{e:xconstr} (if $\mS{x}^\cap_t \neq\emptyset$).
Specifically, if
$\mS{x}_{\sx, t}=\left\{x: \lo{x}_{\sx,t}\leq x \leq \hi{x}_{\sx,t} \right\}$
and $\mS{x}_{t}=\left\{x: \lo{x}_{t}\leq x \leq \hi{x}_{t} \right\}$,
then
the FPD-optimal set of $x_t$ after transfer~\eqref{e:fpdoptim-alt} is
\begin{equation}                                 \label{eqn:spec2UOS}
\mS{x}^{o}_t = \mS{x}^\cap_t =
\left\{x:\max(\lo{x}_{\sx,t},\lo{x}_{,t})\leq x \leq
\min(\hi{x}_{\sx,t},\hi{x}_{,t}) \right\}.
\end{equation}
%
\end{cor}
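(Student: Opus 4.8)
The plan is to reduce the statement to an elementary coordinate-wise fact about closed intervals on the real line. First I would invoke Theorem~\ref{t:t1a}, which has already identified the FPD-optimal set after transfer as $\mS{x}^{o}_t = \mS{x}^\cap_t = \mS{x}_{\sx,t}\cap\mS{x}_t$; hence only the second equality in~\eqref{eqn:spec2UOS}---the explicit description of the intersection of two orthotopes---remains to be proved, together with the assertion that this intersection is again orthotopic.

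Next I would unpack set membership using the definition~\eqref{eqn:ortho-set} and the convention that vector inequalities, $\max$ and $\min$ all act entry-wise. A point $x$ lies in $\mS{x}_{\sx,t}\cap\mS{x}_t$ if and only if, for every coordinate $i\in\{1,\ldots,\xsize\}$, the scalar $x_i$ satisfies simultaneously $\lo{x}_{\sx,t;i}\leq x_i\leq\hi{x}_{\sx,t;i}$ and $\lo{x}_{t;i}\leq x_i\leq\hi{x}_{t;i}$. The single elementary step is that the conjunction of the two lower-bound constraints is equivalent to $x_i\geq\max(\lo{x}_{\sx,t;i},\lo{x}_{t;i})$, and of the two upper-bound constraints to $x_i\leq\min(\hi{x}_{\sx,t;i},\hi{x}_{t;i})$; reassembling the resulting $\xsize$ scalar double-inequalities into a single vector double-inequality yields exactly the right-hand side of~\eqref{eqn:spec2UOS}. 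This also exhibits the intersection as a set of the form~\eqref{eqn:ortho-set}, with lower bounding vector $\max(\lo{x}_{\sx,t},\lo{x}_t)$ and upper bounding vector $\min(\hi{x}_{\sx,t},\hi{x}_t)$, which establishes closure of the orthotopic class under intersection.

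Finally I would dispose of the non-emptiness proviso. By the same coordinate-wise reasoning, $\mS{x}^\cap_t\neq\emptyset$ is equivalent to $\max(\lo{x}_{\sx,t},\lo{x}_t)\leq\min(\hi{x}_{\sx,t},\hi{x}_t)$ holding entry-wise, so the two bounding vectors above are consistently ordered and the intersection is a genuine (possibly lower-dimensional) box; if one additionally wants the strict ordering demanded in~\eqref{eqn:ortho-set}---equivalently, a finite positive normalizing volume $\mathcal{V}_O$ in~\eqref{eqn:unif-notation-ortho}, so that the FPD-optimal observation model~\eqref{e:fpdoptim-alt} remains a well-defined UOS pdf---it suffices that the interiors of $\mS{x}_{\sx,t}$ and $\mS{x}_t$ overlap, which is the generic situation. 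There is no substantive obstacle in this corollary: the only points requiring care are the consistent entry-wise interpretation of the vector $\max$, $\min$ and $\leq$ operators, and the distinction between a non-empty intersection and a full-dimensional one.
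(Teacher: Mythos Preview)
Your proposal is correct and matches the paper's approach: the paper states this corollary without a separate proof, treating both the identification $\mS{x}^{o}_t=\mS{x}^\cap_t$ (from Theorem~\ref{t:t1a}) and the coordinate-wise formula for the intersection of two orthotopes as immediate. Your write-up simply fills in the elementary details the paper leaves implicit, and your remark on the non-emptiness proviso (weak vs.\ strict ordering of the bounding vectors) is a reasonable clarification that the paper does not address.
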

\begin{cor} \label{cor:dupd-optim}
  \eqref{e:fpdoptim-alt} constraints the allowed set of states in the
  target's subsequent (i.e. post-transfer) processing of local
  datum, $d_t$, via data update~\eqref{eqn:data-updt}.
  The latter can be written as
\begin{eqnarray}                                  \label{e:dupd-optim}
f(x_t|d(t),f_\sx) &\propto&
f(y_{t}|x_t, f_\sx)
\,f(x_{t}|d(t\!-\!1)) \propto
\nonumber \\
&\propto& f(y_{t}|x_{t})
\underbrace{
    f(x_{t}|d(t\!-\!1))
    \,
  \chi\!\left(x_{t}\in \mS{x}^{o}_{t}\right)
}_{\propto\ f(x_{t}|d(t-1), f_\sx)}.
\end{eqnarray}
Effectively, then the FPD-optimal transfer restricts the support of the
target's (prior isolated) state predictor, $f(x_t|d(t-1))$, to
$\mS{x}^\cap_t$~\eqref{e:xconstr}, and this then forms the prior for
the subsequent processing of the target's local datum, via a
conventional data update~\eqref{e:dupd-optim}.
\end{cor}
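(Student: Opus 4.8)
The plan is to read off \eqref{e:dupd-optim} by composing the FPD-optimal observation model already delivered by Theorem~\ref{t:t1a} with a conventional target data update, and then recognising the post-transfer state predictor inside the resulting factorisation; no fresh optimisation is needed.

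First I would rewrite the conclusion of Theorem~\ref{t:t1a}, with $\mS{x}^o_t=\mS{x}^\cap_t$, as a joint statement in $(y_t,x_t)$: up to a normalising constant, $f(y_t|x_t,f_\sx)\propto f(y_t|x_t)\,\chi(x_t\in\mS{x}^o_t)$, the indicator merely encoding the restriction $x_t\in\mS{x}^\cap_t$ found there, while the $y_t$-dependence stays that of the isolated observation model $f(y_t|x_t)$ of \eqref{e:LSU-dupdt}. The target's subsequent processing of the local datum $d_t$ is then the data update \eqref{eqn:data-updt} in which this designed model plays the role of the observation model and the target's own isolated predictor $f(x_t|d(t-1))$ plays the role of the prior, i.e. $f(x_t|d(t),f_\sx)\propto f(y_t|x_t,f_\sx)\,f(x_t|d(t-1))$. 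Substituting the previous line and regrouping the indicator with the prior gives $f(x_t|d(t),f_\sx)\propto f(y_t|x_t)\,\left[f(x_t|d(t-1))\,\chi(x_t\in\mS{x}^o_t)\right]$, which is exactly \eqref{e:dupd-optim}.

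It then remains only to identify the bracketed factor with the post-transfer state predictor. Since $f(x_t|d(t-1))$ is uniform on $\mS{x}_t$ (a hypothesis of Theorem~\ref{t:t1a}) and $\mS{x}^o_t=\mS{x}^\cap_t=\mS{x}_{\sx,t}\cap\mS{x}_t\subseteq\mS{x}_t$, the product $f(x_t|d(t-1))\,\chi(x_t\in\mS{x}^\cap_t)$ is constant on $\mS{x}^\cap_t$ and vanishes off it, hence after renormalisation it is the uniform pdf on $\mS{x}^\cap_t$, which is what is denoted $f(x_t|d(t-1),f_\sx)$; the stated conclusion follows, namely that FPD-BTL acts by truncating the target's prior state predictor to $\mS{x}^\cap_t$ and then running a conventional target data update. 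In the LSU-UOS instance, Corollary~\ref{cor:spec2UOS} ensures $\mS{x}^\cap_t$ is again an orthotope, so the truncated predictor is a UOS pdf and \eqref{e:dupd-optim} can be executed by the LSU-UOS data-update machinery of Section~\ref{sec:data-updt-UOS}. I do not anticipate a genuine obstacle: the corollary is essentially a re-expression of Theorem~\ref{t:t1a}, and the only delicate point is interpretive, namely reading the optimal observation model as the joint object $f(y_t|x_t)\,\chi(x_t\in\mS{x}^o_t)$ and noting that it is immaterial whether the post-transfer data update is started from $f(x_t|d(t-1))$ or from the transferred $f_\sx(x_t|d_\sx(t-1))$, since truncating either uniform pdf to $\mS{x}^\cap_t$ produces the same uniform pdf there.
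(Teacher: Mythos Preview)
Your proposal is correct and follows exactly the line the paper intends: the corollary is stated without a separate proof, being treated as an immediate reformulation of Theorem~\ref{t:t1a} via substitution of the FPD-optimal observation model into the data update~\eqref{eqn:data-updt}. Your observation that it is immaterial whether one starts the post-transfer data update from $f(x_t|d(t-1))$ or from $f_\sx(x_t|d_\sx(t-1))$, since both truncate to the same uniform pdf on $\mS{x}^\cap_t$, is a nice clarification that the paper leaves implicit.
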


\noindent Additional notes:
\begin{itemize}
\item The knowledge is processed sequentially in the target, i.e.
\emph{firstly} the target processes local
  $f_\sx(x_{_\sx,t}|d_\sx(t-1))$, yielding
  $f(y_{t}|x_t, f_\sx)$),~\eqref{e:fpdoptim-alt};
  \emph{secondly}, the target filter
  processes local $y_t$ (data update~\eqref{e:dupd-optim});
  \emph{thirdly}
  the target predicts $x_{t+1}$ via the \emph{local} target time
  update~\eqref{eqn:time-updt},
  making available to the next (3-part) step of FPD-BTL its
  knowledge-conditional state predictor, $f(x_{t+1}|d(y), f_\sx)$.
  Knowledge transfer is therefore interleaved between the time and data updates.
\item The FPD-optimal intersection~\eqref{e:xconstr}, \eqref{e:fpdoptim-alt}
is a~concentration operator in the inference scheme (see Figure~\ref{f:venn12}),
ensuring entropy reduction and consistency properties~\cite{Vaa:98}
which---though evident---are not proven here.
\item Recall the full acceptance of the source's state predictor by the
  target, this induces a~discontinuity between
  $\mS{x}^\cap_t\neq\emptyset$ and $\mS{x}^\cap_t\equiv\emptyset$
  in Theorem~\ref{t:t1a} (see Figure~\ref{f:venn12}). This artefact will be
  discussed further in Section~\ref{ss:discussion}.
\end{itemize}

\noindent The implied algorithmic sequence for FPD-BTL between
a~pair of LSU-UOS filters is provided in Algorithm~\ref{alg:alg}.

\begin{algorithm}
  \hspace{-1.5em}
\textbf{Initialization:}
\begin{itemize}[itemsep=-0.7em]
\item [-] set the initial time $t=1$ and the final time $\hi{t}>1$
\item [-] set prior values $\lo{x}^{+}_1$,
      $\hi{x}^{+}_1$
      for $f(x_1|d(0))\equiv f(x_1) =
      \mathcal{U}_x(\lo{x}^{+}_1,\hi{x}^{+}_1)$
\item [-] set
      $f(x_{\sx,1}|d_\sx(0))\equiv f_\sx(x_{\sx,1}) =
    \mathcal{U}_{\sx}(\lo{x}^{+}_{\sx,1},\hi{x}^{+}_{\sx,1})$
\item [-] set noise bounds $\ynparv$, $\xnparv$ \eqref{eqn:SS-model-noises-unif}
\end{itemize}
  \hspace{-1.5em}
\textbf{Recursion:}
\For{$t = 1, \ldots, \hi{t}-1$}
    {
      \vspace{-0.5em}
  \begin{itemize}
    \item[I.]  \emph{Knowledge transfer:}\\
    transfer orthotopic $f_\sx(x_{\sx,t}|d_\sx(t-1))$ \eqref{e:fac1}\\ and compute
    $f(x_{t}|d(t-1),f_\sx)$ \eqref{e:dupd-optim} via \eqref{e:xconstr} and \eqref{eqn:spec2UOS}
    \item[II.] \emph{Data update:}\\
      process local target datum, $d_t$,
      into $f(x_{t}|d(t))$ \eqref{eqn:data-updt-UOS-approx}
      via orthotopic approximation of \eqref{eqn:data-updt-unif2},
      specified in \cite{JirPavQui:19b}
    \item[III.] \emph{Time update:}\\
    compute $f(x_{t+1}|d(t)) = \mathcal{U}_x(\lo{x}^{+}_{t+1},\hi{x}^{+}_{t+1})$~\eqref{eqn:time-updt-UOS-approx}
      via \eqref{eqn:m-computation}
  \end{itemize}
  }

  \hspace{-1.5em}
\textbf{Termination:} set $t=\hi{t}$
      \vspace{-1em}
  \begin{itemize}
  \item[I.]  \emph{Knowledge transfer:}\\
    transfer final orthotopic
      $f_\sx(x_{\sx,\hi{t}}|d_\sx(\hi{t}-1))$ \eqref{e:fac1}\\ and
      compute $f(x_{\hi{t}}|d(\hi{t}-1),f_\sx)$ \eqref{e:dupd-optim}
      via \eqref{e:xconstr} and \eqref{eqn:spec2UOS}
    \item[II.] \emph{Data update:}\\
      process final local target datum,
      $d_{\hi{t}}$, into $f(x_{\hi{t}}|d(\hi{t}))$
      \eqref{eqn:data-updt-UOS-approx} via orthotopic approximation of
      \eqref{eqn:data-updt-unif2} \cite{JirPavQui:19b}
  \end{itemize}
\caption{FPD-BTL between two LSU-UOS filtering tasks}
\label{alg:alg}
\end{algorithm}

\subsection{FPD-BTL for multiple LSU-UOS sources and a single LSU-UOS target} \label{sec:extend}

Here, we extend FPD-BTL (Sec.~\ref{sec:main}) to the case of multiple
bounded-support sources, which can be specified to the case of
multiple interacting LSU-UOS tasks, again via
Corollaries~\ref{cor:spec2UOS} and~\ref{cor:dupd-optim}.  Assume the
same scenario as in Figure \ref{fig:transfer-diagram}, with one target
but, now, $n-1$ sources, $n\geq 2$ (i.e.\ $n\geq 2$ interacting
LSU-OUS tasks in total).
Once again, the instantiation of the $n$ tasks is avoided
(i.e.\ incomplete modelling). Each source provides its state predictor
$f_{\sx_{i}}(x_{\sx_{i},t}|d_{\sx_{i}}(t-1))$, $i=1,\ldots,n-1$,
statically, $\forall t$, to the target in the same way as in the
single source setting.
\begin{thm}                                  \label{t:t2}
  Let there be $n\geq 2$ state-space filters, $f_1$, $f_2$, \ldots,
  $f_{n}$, having bounded supports of their state predictors,
  $\mS{x}_{1,t}$, $\mS{x}_{2,t}$,\ldots, $\mS{x}_{n,t}$, respectively.
  Assume $f_1$ is the~target filter, and $f_2$, \ldots, $f_{n}$ the
  source filters.
%
%
Then the FPD-optimal target observation model after transfer for the
$n-1$ source state predictors is
$$f(y_t|x_t, f_2,\ldots,f_n)=f(y_{t}|x_{t}\in \mS{X}^{o}_{t}),$$
where
  \begin{equation}                                   \label{e:xconstrnet}
    \mS{x}^{o}_t = \bigcap\limits_{k=1}^{n} \mS{x}_{k,t}. 
  \end{equation}
%
\end{thm}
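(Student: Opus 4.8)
The plan is to reduce Theorem~\ref{t:t2} to Theorem~\ref{t:t1a}, which already covers a single source. I see two equivalent routes: a \emph{direct} route, in which the target ``fully accepts'' all $n-1$ transferred source state predictors simultaneously (static/joint transfer), and an \emph{inductive} route, in which the sources are processed one at a time. I would present the direct route as the main argument and note the inductive one as a consistency check.

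\textbf{Direct route.} Mirroring the single-source construction~\eqref{e:fac1}--\eqref{eqn:joint-output-state-pdf-constr}, the first step is to extend the full-acceptance postulate: the target's knowledge-conditional state factor is taken as the renormalized product
\[
\breve f(x_t\mid d(t-1),f_2,\dots,f_n)\ \propto\ \prod_{k=2}^{n} f_k(x_{k,t}\mid d_k(t-1))\Big|_{x_{k,t}\to x_t}.
\]
Since each transferred $f_k$ is uniform on its bounded support $\mS{x}_{k,t}$, this product is, up to normalization, the uniform pdf on $\bigcap_{k=2}^{n}\mS{x}_{k,t}$ --- again a bounded set, and, in the UOS instantiation, an orthotope by Corollary~\ref{cor:spec2UOS}. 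The accepted state factor therefore has exactly the structural form assumed of ``the transferred source state predictor'' in Theorem~\ref{t:t1a}, with support $\bigcap_{k=2}^{n}\mS{x}_{k,t}$. Substituting it, together with the target's own ideal~\eqref{e:ideal} (whose state predictor is uniform on $\mS{x}_{1,t}$), into the FPD objective~\eqref{e:fpdoptim-wild} and running the KLD minimization exactly as in the proof of Theorem~\ref{t:t1a}, one obtains $f(y_t\mid x_t,f_2,\dots,f_n)\propto f(y_t\mid x_t\in\mS{x}^o_t)$ with
\[
\mS{x}^o_t=\Big(\bigcap_{k=2}^{n}\mS{x}_{k,t}\Big)\cap\mS{x}_{1,t}=\bigcap_{k=1}^{n}\mS{x}_{k,t},
\]
which is~\eqref{e:xconstrnet}. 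As in Theorem~\ref{t:t1a}, emptiness of this intersection --- testable before transfer --- halts the transfer and leaves the isolated target observation model in place.

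\textbf{Inductive route (consistency check).} Alternatively, process the sources sequentially: by Corollary~\ref{cor:dupd-optim}, transferring $f_2$ replaces the target's prior state predictor (uniform on $\mS{x}_{1,t}$) by its restriction to $\mS{x}_{1,t}\cap\mS{x}_{2,t}$, still uniform on a bounded (orthotopic, by Corollary~\ref{cor:spec2UOS}) support; this restricted predictor then plays the role of the target's pre-transfer state predictor when $f_3$ is transferred, and so on. After $n-1$ applications of Theorem~\ref{t:t1a}/Corollary~\ref{cor:dupd-optim}, the admissible state set is $\bigcap_{k=1}^{n}\mS{x}_{k,t}$; since intersection is commutative and associative, the outcome does not depend on the processing order, and it coincides with the joint transfer of the direct route. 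One must verify at each stage that the running intersection is nonempty (else transfer stops) and that each restricted uniform predictor still satisfies the hypotheses of Theorem~\ref{t:t1a}, which it does.

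\textbf{Main obstacle.} The KLD algebra itself is routine --- it is verbatim the single-source computation. The substantive step is justifying the multi-source full-acceptance rule, i.e.\ that the target should adopt the renormalized product of the $n-1$ source state predictors as its own state model: one should argue that this is the unique extension of the single-source convention~\eqref{e:fac1} that is independent of the order in which sources are fused (equivalently, that the simultaneous and sequential treatments agree), and one must carry through the degenerate $\bigcap_{k=1}^{n}\mS{x}_{k,t}=\emptyset$ bookkeeping, including the discontinuity already flagged after Theorem~\ref{t:t1a}.
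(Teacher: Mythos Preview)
Your proposal is correct, and your inductive route is essentially the paper's argument: reduce to Theorem~\ref{t:t1a} and invoke commutativity/associativity of intersection. The only cosmetic difference is the order of reduction. The paper first performs a source-to-source transfer (for $n=3$, it applies Theorem~\ref{t:t1a} between filters~2 and~3 to obtain an ``abstract'' source with support $\mS{x}_{2,t}\cap\mS{x}_{3,t}$, then transfers this to filter~1), whereas you keep filter~1 as the target throughout and absorb sources one at a time via Corollary~\ref{cor:dupd-optim}. Both collapse to the same intersection~\eqref{e:xconstrnet} by associativity, as you note. Your direct route --- positing a multi-source full-acceptance rule as a renormalized product of the source predictors --- is not in the paper, but it is a clean way to make the simultaneous-transfer semantics explicit and to show order-independence; the paper simply does not formalize this step and proceeds straight to the inductive reduction.
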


\begin{proof}
See~\ref{a:proof2}.
\end{proof}

\section{Simulations studies}         \label{sec:experiment-discussion}

In this section, we provide a detailed study of the performance of the
proposed Bayesian transfer learning algorithm (FPD-BTL) between
LSU-UOS filtering tasks. We compare it to Bayesian complete (network)
modelling (BCM, to be defined below) for the UOS class and to the
distributed set-membership fusion algorithm (DSMF) for ellipsoidal
sets~\cite{WanSheXiaZhu:19}, which also involves complete modelling of
the networked LSU filters.

In the design of these comparative experiments, our principal concerns
are the following:
\begin{enumerate}
\item To study the influence of the number of sources on the
  performance of the target filter in FPD-BTL (experiment \#1).
\item To compare FPD-BTL to complete modelling alternatives (BCM and
  DSMF, experiment \#2).
\item To study the robustness of FPD-BTL---which does not require for
  tasks interaction (i.e.\ it is incompletely modelled)---to model
  mismatches that inevitably occur between source and target tasks in
  the complete modelling approaches (BCM and DSMF) (experiments
  \#3--\#5).
\item To assess the computational demands of the proposed FPD-BTL
  algorithm in comparison to the competitive methods (BCM and DSMF).
\end{enumerate}

Section~\ref{ss:shapegraphs} explains the necessary background,
emphasizing the important distinction between the \emph{synthetic} and
\emph{analytic} model in these simulation studies. Then, model
mismatch and its types are specified (Sections~\ref{sss:noisemis}
and~\ref{sss:matrixmis}).  The specific LSU-UOS
systems~\eqref{e:LSU-dupdt}, \eqref{e:LSU-tupdt} used in our studies
are described in Section~\ref{ss:systems}.  The completely modelled
alternatives (BCM and DSMF) are reviewed in
Section~\ref{ss:methodcompar}, and the evaluation criteria are defined
in Section~\ref{ss:evalucrit}.  Then, the experimental results are
presented and discussed in Section~\ref{ss:results}, before overall
findings are collected and interpreted in Section~\ref{ss:discussion}.

\subsection{Synthetic vs. analytic models}                 \label{ss:shapegraphs}

In computer-based simulations---such as those which follow---we
explicitly distinguish between the \emph{synthetic} model, used for
data generation, and the \emph{analytic} model on which the derived
state estimation algorithm depends. The synthetic model can be
understood as an abstraction of a natural (physical) data-generating
process, while the analytic model is a~subjective
(i.e.\ epistemic~\cite{Jay:03})---and inevitably
\emph{approximate}---description of this process adopted by the
inference task (here, the LSU-UOS filters).

\noindent
\begin{figure}[h!]
  \centering
\begin{tabular}{ccccc}
\includegraphics[width=0.22\textwidth]{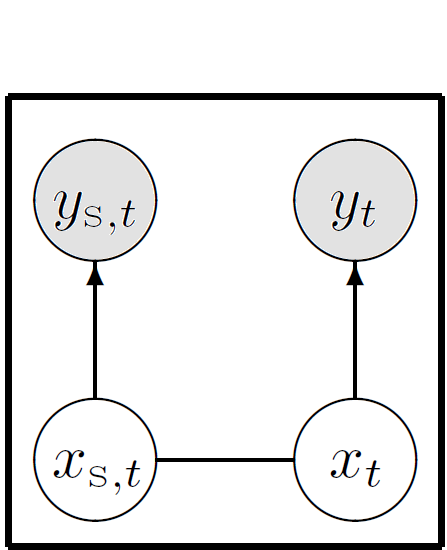}
&
\hspace{1.4em}
&
\includegraphics[width=0.22\textwidth]{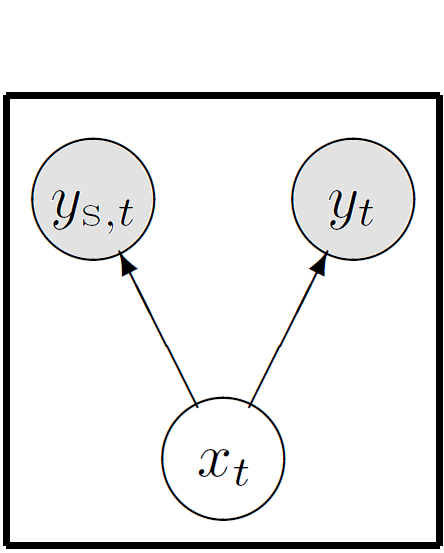}
&
\hspace{1.9em}
&
\includegraphics[width=0.31\textwidth]{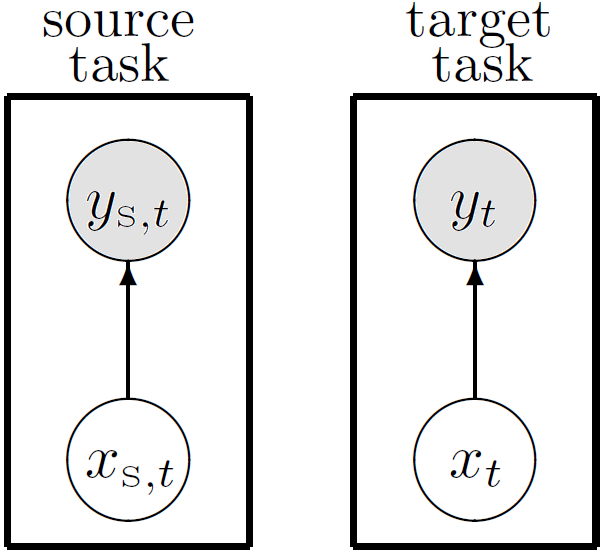}
\\
a) && b) && c)
\end{tabular}
\caption{Models (synthetic and/or analytic) for a pair of state-space
  observation processes, $y_{\sx,t}$ and $y_t$. a) V-shaped graph,
  single modeller; b) U-shaped graph, single modeller; c) multiple
  modellers. A~frame denotes a~modeller, that observes data (shaded
  nodes), and for which stochastic dependencies within the frame are
  known.  Conditional and joint probability models of these
  dependencies are represented by arrows and by lines (i.e.\ undirected
  and directed edges), respectively, as usual~\cite{KolFri:09}. The
  schemes show the marginal relationship between states and
  observations within one filtering step of dynamic modelling, but not
  the (temporal) dynamics themselves.  }

\label{f:shapegraphs}
\end{figure}

Figure~\ref{f:shapegraphs} shows three models for a pair of
state-space filters adopted in this paper, as either synthetic or
analytic models (or both).
If the V-shaped graph (Figure~\ref{f:shapegraphs}a) is used as the
synthetic model, state sequence $\left\{x_t\right\}_{t=1}^{\hi{t}}$ is
realized commonly for all the filters, via the Markov state
process~\eqref{eqn:SS-model-state-evo}, and then locally corrupted via
independent, additive, white UOS observation noise processes
$v_{k,t}$~\eqref{eqn:SS-model-observ}.
If the analytic model is also the V-shaped graph
(Figure~\ref{f:shapegraphs}a) with \emph{known} parameters, then we
refer to this as complete modelling, as adopted in BCM and DSMF.

The U-shaped graph (Figure~\ref{f:shapegraphs}b) is adopted as the
synthetic model in some of the experiments below. here, the target
state sequence, $\left\{x_t\right\}_{t=1}^{\hi{t}}$ and source state
sequence $\left\{x_{\sx,t}\right\}_{t=1}^{\hi{t}}$ are synthesized as
distinct---but mutually correlated---processes, with an appropriate
fully specified interaction model between $x_t$ and $x_{\sx,t}$.  The
U-shaped graph is not used as an analytic model in this paper.

As already explained in sections~\ref{sec:problem}
and~\ref{sec:main-res}, the multiple modeller approach
(Figure~\ref{f:shapegraphs}c) is adopted as the analytic model only in
our proposed FPD-BTL approach, expressing the fact that the target
elicits no model of the source process or of its relationship to
it. The source and target analytic models are therefore stochastically
independent, and can be interpreted as independent 2-node marginals of
a~(unspecified) 4-node complete model.  This arrangement respects the
key notion of \emph{local expertise}, i.e.\ the commonly encountered
situation in distributed inference where the source is a better local
analytic modeller (i.e.\ expert) of its local data than the remote
target modeller ever can be.

In the simulation studies most frequently encountered in the
literature, the synthetic and analytic models are implicitly assumed
to be identical. In the computer-based synthetic-data experiments
below, \emph{modelling mismatch} can be explored, and is, indeed, our
priority.  However, in real-data studies, the notion of a synthetic
model is inadmissible~\cite{Jay:03}. It follows that there is
therefore almost-sure mismatch between the (typically unknowable)
``truth theory'' of synthesis~\cite{Jay:03} and the analysis model
which prescribes the adopted algorithm. It is for this reason that a
study of analysis-synthesis modelling mismatch---as provided
below---is of key importance, particularly in assessment of the
robustness or fragility of the algorithm to such mismatches.

In the forthcoming simulations, modelling mismatch will be arranged at
the level of the state sequence(s), either via mismatches in the
process noise $w_t$~\eqref{eqn:SS-model-noises-unif}, or state matrix
mismatches, $A$~\eqref{eqn:SS-model-state-evo}. These are detailed in
the next two subsections.

\subsubsection{State noise mismatch}                         \label{sss:noisemis}

The target filter's state at time $t$ is synthesized according
to~\eqref{eqn:SS-model-state-evo} with a~uniform noise,
$w_t$~\eqref{eqn:SS-model-noises-unif}.
If the sequence $\left\{x_t\right\}_{t=1}^{\hi{t}}$ is common for both
the filters, data synthesis is described by the V-shaped graph
(Figure~\ref{f:shapegraphs}a), as already noted in
Section~\ref{ss:shapegraphs}. Synthesis via the U-shaped graph
(Figure~\ref{f:shapegraphs}b) realizes distinct state processes, $x_t$
and $x_{\sx,t}$, via the operating parameter, $\alpha\geq0$, which
controls the interaction (i.e.\ correlation) between them:
\begin{align}                                  \label{e:ugraph}
    x_{\sx,t} &= A x_{\sx,t-1} + B u_{\sx,t-1} + w_{\sx,t},
     &w_{\sx,t} &\sim \mathcal{U}(-\xnparv,\xnparv), \nonumber \\
    x_{t} &= x_{\sx,t} + e_{t}, & e_{t} &\sim
    \mathcal{U}(-\alpha\xnparv,\alpha\xnparv).
\end{align}
Here, $w_t$ and $e_t$ are mutually independent, white UOS state
processes in $\mathbb{R}^{\xsize}$. The source's analytic model is
also given by~\eqref{eqn:SS-model-state-evo} with perfectly matched
parameters.  However, we will assume that the target modeller is
\emph{unaware} of the mismatching noise process, $e_t$, and so the
target's analytic model of their local state process, $x_t$, is
also~\eqref{eqn:SS-model-state-evo}. This enforces a mismatch between
the target's synthetic model~\eqref{e:ugraph}, and its analysis model
\eqref{eqn:SS-model-state-evo}, via the state noise mismatch process,
$e_t$~\eqref{e:ugraph}.
Note that if $\alpha=0$, the source and target synthesized states are
identical (Figure~\ref{f:shapegraphs}a)
and matched to the source and target analysis models(s). In contrast,
if $\alpha>0$, then the marginal synthesis model (i.e.\ pdf) of $x_t$
is trapezoidal (being the convolution of two uniform pdfs) with
increased variance, while the target's mismatched local analytic state
model is uniform \eqref{e:LSU-tupdt}.

\subsubsection{State matrix mismatch in the analytic models}        \label{sss:matrixmis}

In this section, we distinguish between the state matrix
\eqref{eqn:SS-model-state-evo} in synthesis, $\asynt$, of the common
state process, $x_t$, (Figure~\ref{f:shapegraphs}a) ), and the state
matrix/matrices used in analysis, $\aanal$. Specifically, we set
$\aanal_\sx=\asynt$ (i.e.\ no synthesis-analysis mismatch in the
source), but $\aanal\neq\asynt$ (i.e.\ mismatch in the target).
There are several ways to achieve $\aanal\neq\asynt$:
\begin{enumerate}
\item Modification of the eigenvalues of invertible
  $\asynt=V\synt\Lambda\synt {V\synt}^{-1}$.  In the target analytic
  model, we modify the eigenvalues of $\aanal$ geometrically in one of
  two ways:
  \begin{enumerate}
  \item Radial shift: a selected eigenvalue $\lambda\synt_{i}$, $i \in
    \{1,\ldots,\xsize\}$ of $\asynt$ is multiplied by a~real scalar
    operating parameter $q>0$, i.e.\ $\lambda\anal_i\equiv
    q\lambda\synt_{i}$, while maintaining Hermitian symmetry.
  \item Rotation: here, $\lambda\synt_{i}$ is multiplied by a factor,
    $\mbox{e}^{j\varphi}$, where the angle of rotation, $\varphi$, is
    the operating parameter, i.e.\ $\lambda\anal_i\equiv
    \mbox{e}^{j\varphi}\lambda\synt_{i} $. Once again, Hermitian
    symmetry is maintained.
  \end{enumerate}
\item Multiplication of $\asynt$ by a scalar, $\sigma > 0$,
  i.e.\ $\aanal\equiv \sigma\asynt$. In this case, all eigenvalues of
  $\asynt$ experience the same radial shift.
\end{enumerate}
%

\subsection{The synthesis models}                          \label{ss:systems}

The following specific LSU systems~\eqref{eqn:SS-model-observ},
\eqref{eqn:SS-model-state-evo}, are simulated in the upcoming
experiments, i.e.\ they specify the synthesis model for both
$y_{\sx,t}$ and $y_t$ in the V-shaped graph
(Figure~\ref{f:shapegraphs}a) or in the U-shaped graph
(Figure~\ref{f:shapegraphs}b), as specified in \eqref{e:ugraph}. The
uncertainty parameters, $\ynparv$
and~$\xnparv$~\eqref{eqn:SS-model-noises-unif}, are specified in each
experiment.

\begin{itemize}
\item
A second-order system with two complex conjugate poles, described by
\eqref{eqn:SS-model-observ} and \eqref{eqn:SS-model-state-evo}, with
$\xsize=2$, $\usize=1$, $\ysize=1$, and
\begin{equation}                                    \label{e:simple_system}
  A =
  \left[
    \begin{array}{rr}
    0.8144 & -0.0905\\
    0.0905 &0.9953
    \end{array}
  \right]\!, \
  B =
  \left[
    \begin{array}{l}
    0.0905\\
    0.0047
    \end{array}
  \right]\!, \
  C =
  \left[
    \begin{array}{ll}
   0  &  1 \\
    \end{array}
  \right]\!.
\end{equation}

This system is studied in \cite{Fri:12}, being the discretization and
randomization of the continuous-time system, $\ddot{y}(\tau)+
2\dot{y}(\tau) + y(\tau) = u(\tau)$, with sampling period,
$T_0=0.1\,s$, and with added random processes, $v_t$ and $w_t$,
representing observational and modelling (i.e.\ state) uncertainties,
respectively.
\item
A third-order system with 3 distinct real poles, described by
\eqref{eqn:SS-model-observ} and \eqref{eqn:SS-model-state-evo} with
$\xsize=3$, $\usize=1$, $\ysize=2$, and
\begin{equation}                                   \label{e:another_system}
  \hspace{-0.3em}
  A =
  \left[
    \begin{array}{rrr}
      0.4  & -0.3 & 0.1 \\
      -0.4 & 0.4 & 0 \\
      0.3 & 0.2 & 0.1
    \end{array}
  \right]\!, \hspace{1em}
  B =
  \left[
    \begin{array}{l}
      0.1 \\ 0.6 \\ 0.3
    \end{array}
  \right]\!, \hspace{1em}
  C =
  \left[
    \begin{array}{lll}
      1 & 0 & 0.5 \\
      0 & 1 & 0.5
    \end{array}
  \right]\!.
\end{equation}
\end{itemize}

\subsection{Alternative multivariate inference algorithms} \label{ss:methodcompar}

The key distinguishing attribute of our FPD-BTL algorithm is its
multiple modeller approach with incomplete modelling of the
interaction between the tasks. Its defining characteristic---the
transfer of source sufficient statistics and not raw data---for
processing at the target, distinguishes it from methods that adopt a
complete model of the networked tasks, often involving joint
processing---at the target or other fusion centre---of the multiple
raw data channels. We will reserve the term transfer learning (TL) for
the former (FPD-BTL in the case of our FPD-optimal Bayesian TL
scheme), and refer to the latter as multivariate inference schemes.
We will compare FPD-BTL against two approaches to the latter:
(i)~Bayesian multivariate inference (Section~\ref{sss:bcm}) consistent
with a complete analysis model (i.e.\ V-shaped network graph in
Figure~\ref{f:shapegraphs}a); and (ii)~distributed set-membership
fusion (DSMF) (Section~\ref{sss:dsmf}), a~state-of-the-art,
non-probabilistic, fusion-based state estimation
algorithm~\cite{WanSheXiaZhu:19}.

\subsubsection{Bayesian complete modelling (BCM)}       \label{sss:bcm}

Here, it is assumed that the $n$ LSU filters, indexed by
$i=1,\ldots,n$, consist of $n$ conditionally independent observation
models with common Markov state evolution
model~\eqref{eqn:SS-model-state-evo} (i.e.\ the V-shaped graph as
analytic model, Figure~\ref{f:shapegraphs}a).  The $n$ observation
models are
\begin{equation}                                     \label{e:multobs}
  y_{i,t}= C_i x_t + v_{i,t}, \hspace{4em} i=1,\ldots,n,
\end{equation}
where the known $C_i\equiv C$ are possibly common, too, and the $n$
UOS white noise channels are mutually independent. The filters are
modelled by a~central modeller with knowledge in all the parameters
and data, i.e.\ the inference algorithm processes all $n$ data
channels, $y_{i,t}$.

The data update~\eqref{eqn:data-updt} uses the joint pdf of the $n$
data channels to process $y_{1t},\ldots,y_{nt}$ (of possibly different
dimensions):
\begin{equation}                                       \label{e:dupdc}
  f(x_{t}|d_1(t), \ldots, d_n(t))
  \propto
  f(x_{t}|d_1(t-1),\ldots,d_n(t-1))\ \prod\limits_{i=1}^n
  f(y_{i,t}|x_{t}),
\end{equation}
initialized (at $t=1$) with the prior,
$f(x_{1}|d_1(0),\ldots,d_n(0))\equiv f(x_1)$.

The time update~\eqref{eqn:time-updt} is
\begin{equation}                                        \label{e:tupdc}
f(x_{t+1}|d_1(t), \ldots, d_n(t)) =
\int\limits_{\mS{x}_{t}}\!\!
f(x_{t+1}|u_{t},x_{t})\,f(x_{t}|d_1(t),\ldots,d_n(t)) \diff x_{t}.
\end{equation}

The algorithmic solution of BCM, closed within the LSU-UOS
class---i.e.\ with $n+1$ projections into the UOS class, per step of
multivariate Bayesian filtering, as explained in
Section~\ref{sec:prelim}---is straightforward, because all the pdfs
have the same parametric form as in the
single-output ($n=1$) case. In~\eqref{e:dupdc}, the $n$ data updates
are computed by~\eqref{eqn:data-updt-unif2}
and~\eqref{eqn:data-updt-UOS-approx}---with arbitrary order---for each
$f(y_{i,t}|x_{t})$ in turn. Then, the LSU-closed approximate time
update is implemented via~\eqref{e:tupdc}, in the usual way
(Section~\ref{sec:time-updt-UOS}).

\subsubsection{Distributed set-membership fusion (DSMF)} \label{sss:dsmf}

The DSMF estimation task for a multi-sensor dynamic system with
unknown but bounded noises is presented in~\cite{WanSheXiaZhu:19}.
The state-space model~\eqref{eqn:SS-model-state-evo}
and~\eqref{e:multobs} is driven by (unmodelled) noises conditioned to
the following ellipsoidal sets:
\begin{align}            \label{eqn:SS-model-noises-ellips}
   \nonumber
   \mathbb{V}_{i,t} &= \{v_{i,t}: v'_{i,t}R^{-1}_{i,t} v_{i,t} \leq 1\},
   \hspace{2em}i=1,\ldots,n, \\
   \mathbb{W}_t &= \{w_t: w'_tQ^{-1}_t w_t \leq 1\}.
\end{align}
Here, matrices $R_{i,t}$ and $Q_t$ are symmetric and positive definite
of appropriate dimensions. The ellipsoids' centres are at the origin,
enforcing a prior assumption of zero (time-averaged) mean noises.

While, the method is non-probabilistic, it adopts complete modelling
via the V-shaped (analytic) graph (Figure~\ref{f:shapegraphs}a), with
the edges defined only via geometric relationships. The resulting
recursive algorithm consists of a deterministic time update of the
combined state estimate from the previous step, then a~data update for
each filter, and finally a fusion step, when individual state
estimates are combined, using also the time-updated state.
Optimal fusion weights are calculated by solving a~convex optimization
problem in each step of the algorithm.

\subsection{Evaluation criteria}                       \label{ss:evalucrit}

For competitive quantitative evaluation of FPD-BTL, BCM and DSMF, we
define the following performance quantities:
\begin{itemize}
\item Total norm squared-error (TNSE) of the state estimate:
  \begin{equation}                              \label{e:deftnse}
  \mbox{TNSE}=\sum\limits_{t=\lo{t}}^{\hi{t}}\|\hat{x}_t-x_t \|_2^2,
  \end{equation}
$\lo{t}=1$ for FPD-BTL and BCM, but is set higher---typically
  $t/2$---in DSMF, as explained later.  $\|\cdot\|_2$ is the Euclidean
  norm. In the FPD-BTL and BCM, $\hat{x_t}$ is mean value of the UOS
  filtering pdf~\eqref{eqn:data-updt-UOS-approx}, either
  via~\eqref{e:dupd-optim} (i.e.\ line~2 of Algorithm~\ref{alg:alg}),
  or via~\eqref{e:dupdc}.  In DSMF, $\hat{x_t}$ is centre of the
  state's estimated ellipsoid.
\item Average posterior volume (AV). Define sequential
  $\mS{x}_t\equiv\supp\left(f(x_t|d(t),f_\sx)\right)$ in the case of
  FPD-BTL~\eqref{e:dupd-optim}, and
  $\mS{x}_t\equiv\supp\left(f(x_t|d(t),d_\sx(t))\right)$ in the case
  of BCM~\eqref{e:dupdc} (both orthotopic), $\mS{x}_t\equiv\left\{x_t:
  x_t'P_t^{-1}x_t\leq 1 \right\}$ (ellipsoidal) in the case of DSMF,
  where $P_t$ is a~sequentially computed matrix defining the bounding
  ellipsoid of~$\mS{x}_t$. Let the Lebesgue measure (hypervolume) of
  $\mS{x}_t$ be $\mbox{V}_t=\mu\left(\mS{x}_t\right)$ in each
  case. Then
  \begin{equation}                               \label{e:defv}
    \mbox{AV} = \frac{1}{\hi{t}-\lo{t}+1}\ \sum\limits_{t=\lo{t}}^{\hi{t}}
    {\mbox{V}_t}.
  \end{equation}
\item Average volume ratio (AVR). Let $\mS{x}_t$ be defined as for the
  AV above, respectively. Let $\mbox{VI}_t$ denote the Lebesgue
  measure of $\mS{x}_t$ in the case of the isolated target
  (Figure~\ref{f:shapegraphs}c) (or without fusion, in the case of
  DSMF) and $\mbox{VT}_t$ the same respective quantity with
  transfer/fusion. Then
  \begin{equation}                               \label{e:defvr}
    \mbox{AVR} = \frac{1}{\hi{t}-\lo{t}+1}\ \sum\limits_{t=\lo{t}}^{\hi{t}}
    \frac{\mbox{VT}_t}{\mbox{VI}_t}.
  \end{equation}
It is an average ratio of concentration of the state inference
under transfer fusion

\item Containment probability ($p\subs{c}$): this is the probability
  that the simulated---i.e.\ ``true''---state $x_t$ is contained in
  the respective inferred $\mS{x}_t$, as defined for AV above. This probability is
  calculated as a relative frequency of containment occurrence:
  \begin{equation}                             \label{e:defcpr}
  p\subs{c} = \frac{1}{\hi{t}-\lo{t}+1}\ \sum\limits_{t=\lo{t}}^{\hi{t}}
  \chi\left(x_t\in\mS{x}_t \right).
  \end{equation}
\item Computation time for comparison of FPD-BTL with DSFM by multiple
  executions with various noise parameters, run on PC Matlab.
\end{itemize}

In the experiments below, the vector noise parameters,
$\ynparv\in\mS{r}^\ysize$ and
$\xnparv\in\mS{r}^\xsize$~\eqref{eqn:SS-model-noises-unif}, are
isotopic, i.e.
\begin{equation}                                     \label{e:defnpars}
  \ynparv=\ynpars\,\unitvect{\ysize}, \hspace{3em}
  \xnparv=\xnpars\,\unitvect{\xsize},
\end{equation}
where $\xnpars$ and $\ynpars$ are positive scalars and $\unitvect{k}$
is the $k$-dimensional unit vector.

The figures below graph the dependence of TNSE~\eqref{e:deftnse},
average volume (AV)~\eqref{e:defv} and average volume ratio
(AVR)~\eqref{e:defvr} against the source-to-target ratio of the
observation noise variances, i.e.\ $\ynpars_\sx/\ynpars$.  For all
these performance quantities, smaller is better. Conversely, the
containment probability, $p\subs{c}$~\eqref{e:defcpr}, where
evaluable, is a~bigger is better quantity. Because of their wide
numerical ranges, the quantities~\eqref{e:deftnse}, \eqref{e:defv}
and~\eqref{e:defvr} are plotted logarithmically.  They are also
evaluated for the (isolated) $y_t$ channel in each case
(Figure~\ref{f:shapegraphs}) providing a benchmark (i.e.\ datum)
against which to assess the impact of the various $y_\sx$-processing
algorithms. FPD-BTL and BCM will be identical in this isolated $y_t$
case.
In these non-$d_{\sx,t}$ cases, performance is of course invariant
with $\ynpars_\sx/\ynpars$.

For each setting of $\ynpars_\sx/\ynpars$, the performance graphs are
obtained as the average over MC$\equiv$50 or 500 Monte Carlo runs,
holding all the synthetic model parameters ($A$, $B$, $C$, $\xnpars$,
$\ynpars$) constant~\eqref{eqn:SS-model-observ},
\eqref{eqn:SS-model-state-evo}, \eqref{e:defnpars}.

In the case of $n=2$ data channels, the operating parameter,
$\ynpars_\sx/\ynpars=1$, constitutes a~\emph{threshold}. If
$\ynpars_\sx/\ynpars<1$ (which we call the above-threshold region),
$y_{s,t}$ reduces uncertainty in $x_t$ in our FPD-BTL algorithm, as we
will see.  We call this \emph{positive transfer} (i.e.\ improved
performance relative to the isolated target task). Conversely, if
$\ynpars_\sx/\ynpars>1$ (below threshold), $y_{s,t}$ may undermine the
estimation of $x_t$, depending on the algorithm.  To be \emph{robust},
state estimation performance should revert to estimation conditioned
only on $y_t$ (i.e.\ the algorithm should be capable of rejecting
knowledge from $y_\sx$). A key priority of our studies below will be
to assess how robust our FPD-BTL algorithm is in comparison with the
alternatives.

\subsection{Experimental results}              \label{ss:results}

The modelling choices of the experiments are summarized in
Table~\ref{t:summaryexp}. Each row refers to a particular experiment
(\#1--\#5) to follow. Columns record which synthetic model---a or b in
Figure~\ref{f:shapegraphs}---is used to simulate the data, and
which---a or c---is adopted as the analytic model in the respective
algorithm (FPD-BTL, BCM and DSMF). The last column indicates whether
the state synthesis and (target) analysis matrices, $\asynt$ and
$\aanal$ respectively, are set equal or not (see
Section~\ref{sss:matrixmis}).

\begin{table}[h!]
  \centering
\begin{tabular}{lccccc} 
  \hline
             &           &    BTL       &    BCM    &   DSMF &
             \\[-0.8em]
  experiment & synthetic &  analytic & analytic & analytic &
  $\asynt\equiv \aanal$
  \\
  \hline
\#1 (Sec. \ref{sss:btlmany})     & a & c & -- & -- & \checkmark \\ \hline
\#2 (Sec. \ref{sss:btlcmdsmf})   & a & c & a & a & \checkmark \\ \hline
\#3 (Sec. \ref{sss:modmismatch}) & a & c & a & a &  $\times$ \\ \hline
\#4 (Sec. \ref{sss:modmismatch}) & a & c & a & a &  $\times$ \\   \hline
\#5 (Sec. \ref{sss:modmismatch}) & b & c & a & a &  \checkmark \\   \hline
\end{tabular}
\caption{The synthetic and analytic models adopted in each
  experiment. Symbols a, b, c refer to the graphical models in
  Figure~\ref{f:shapegraphs}.}
\label{t:summaryexp}
\end{table}

\subsubsection{FPD-BTL with multiple source tasks}
                                                 \label{sss:btlmany}
Experiment \#1:
Here, the performance of an isolated target LSU-UOS filter is compared
to FPD-BTL with multiple source filters.  State synthesis
model~\eqref{e:simple_system} is adopted with state sequence, $x_t$,
noisily observed in each of $n$ scalar i.i.d. data channels, $n=2, 11,
101, 1\,001$ (i.e.\ for $n_\sx=n-1=1, 10, 100, 1\,000$ source tasks,
respectively, plus 1 target task). In all settings, $\xnpars=10^{-5}$,
$\ynpars=10^{-3}$~\eqref{e:defnpars}, $\lo{t}=1$, $\hi{t}=50$,
MC\,=\,500~runs. Performances are evaluated via TNSE~\eqref{e:deftnse}
and AVR~\eqref{e:defvr}, in comparison to the isolated target
filter. We observe the following:
\begin{figure}[htb]
    \centering
    {~\hfill a) \hspace{0.49\textwidth} b) \hfill ~} \\[-0.2em]
   {\includegraphics[width=0.49\textwidth]{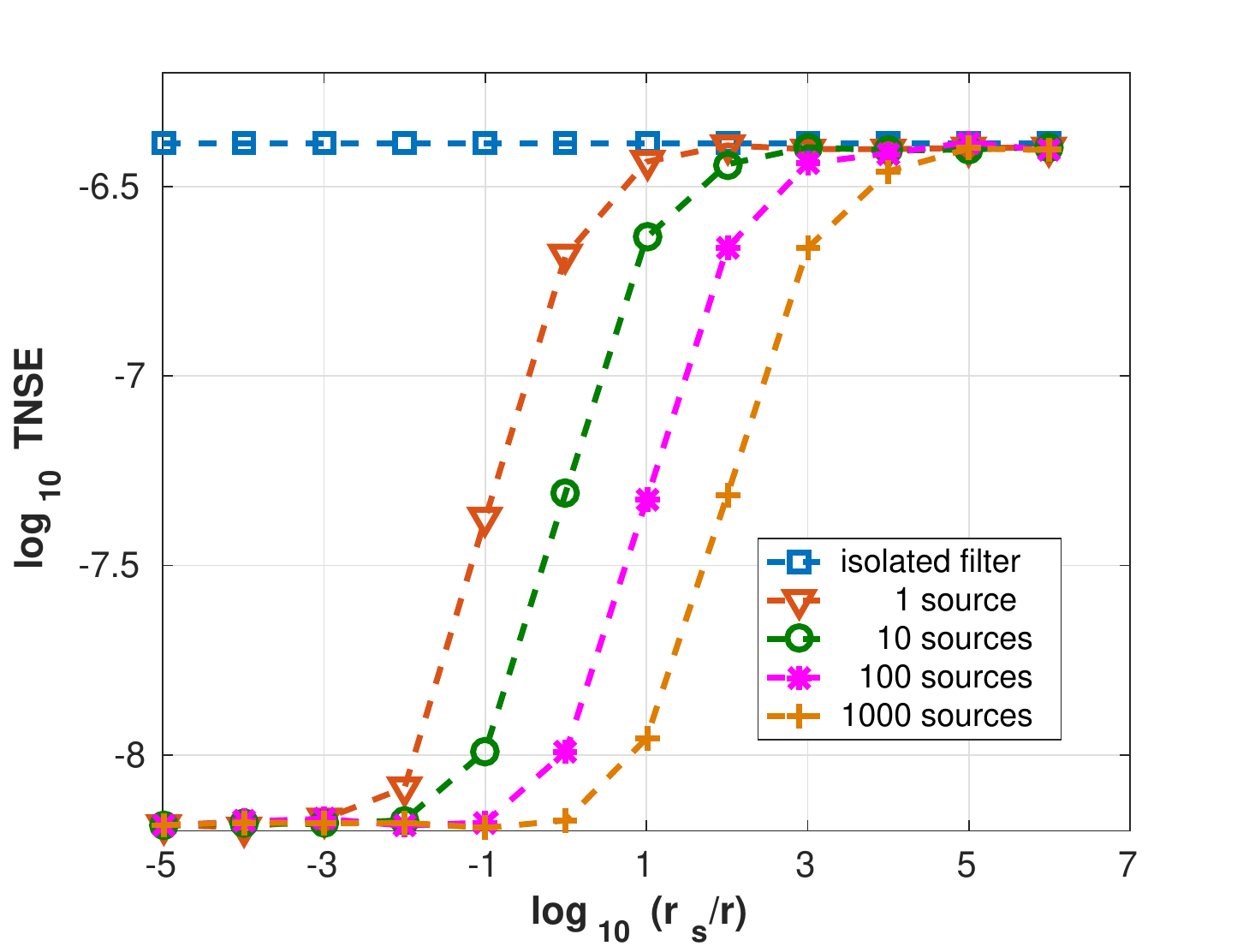}}
   \hfill
   {\includegraphics[width=0.49\textwidth]{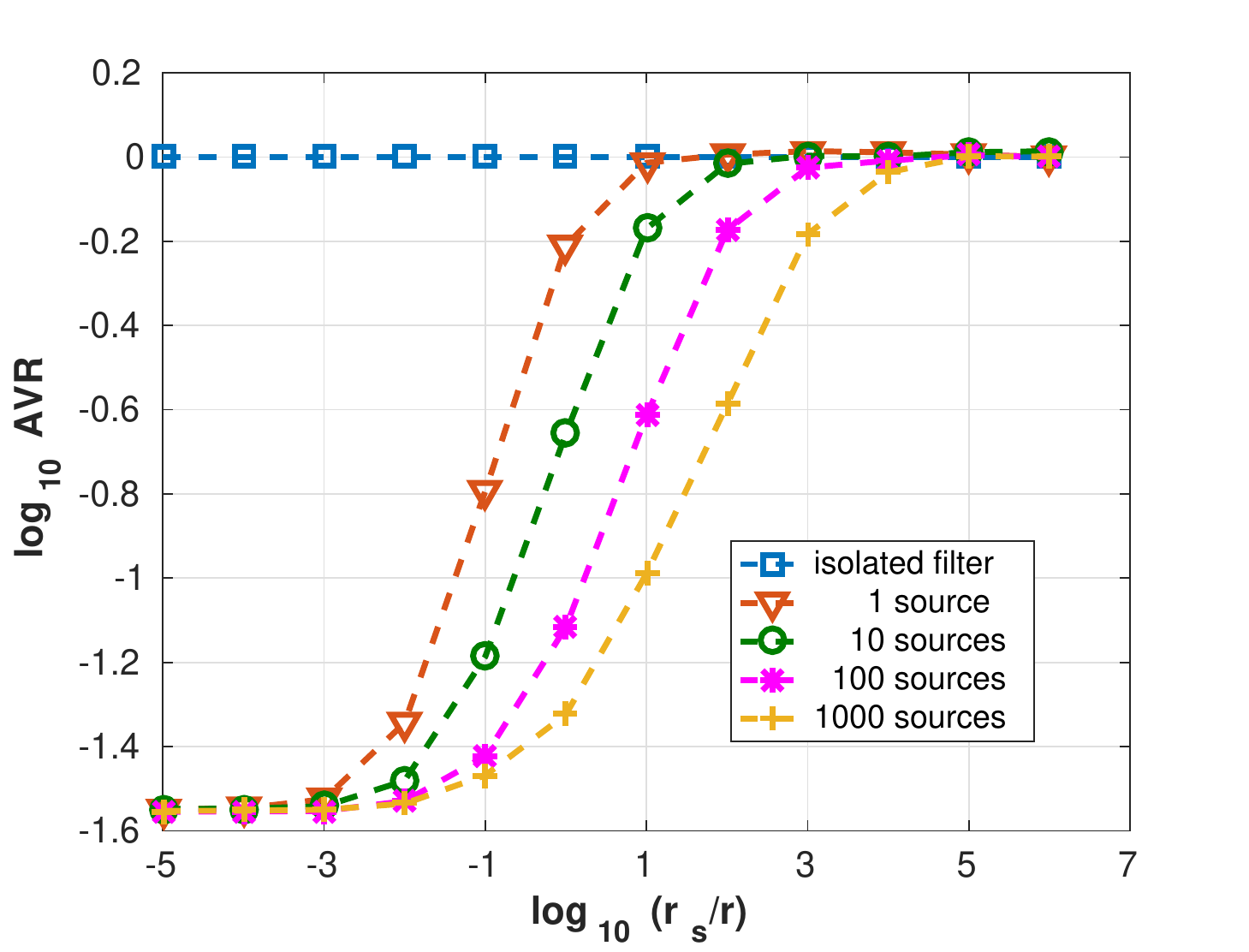}}\\
  \caption{Experiment \#1: Comparison of isolated target filtering
    task and FPD-BTL with multiple source filtering
    tasks. a)\,TNSE~\eqref{e:deftnse} and b)\,AVR~\eqref{e:defvr},
    system~\eqref{e:simple_system},
various numbers of source tasks.
    }
  \label{f:tnse_vol_simple_system}
\end{figure}
\begin{enumerate}[label=(\roman*)]
\item FPD-BTL exhibits \emph{positive transfer} above threshold; i.e.
  the transferred source knowledge improves the performance of the
  target.
\item FPD-BTL is \emph{robust}; i.e.\ the target becomes isolated from
  the sources (transfer is quenched) in cases of poor-quality source
  knowledge (below threshold, where the source state-predictive
  variance is high relative to that of the isolated target). In this
  way, \emph{negative transfer}---a~hazard of transfer learning
  algorithms~\cite{FolQui:18}, \cite{JirPavQui:20}---is eliminated.
\item
The $\ynpars_\sx/\ynpars$ threshold increases monotonically with the
number of sources.  For $n_\sx\!\!=\!\!1$, the threshold is
$\ynpars_\sx\!\approx\!10\,\ynpars$.
However, poor-quality sources continue to deliver positive transfer to
the target up to a threshold, $\ynpars_\sx\!=\!10n_\sx\, \ynpars$, in
the multi-source case. The threshold is found to be independent
of~$\hi{t}$.
\item
All performances saturate in very positive transfer regimes (i.e.
where $\ynpars_\sx\ll\ynpars$).
The bound is determined by $\xnpars$ and~$\ynpars$~\eqref{e:defnpars},
which are invariant.
\end{enumerate}

The containment probability, $p\subs{c}$~\eqref{e:defcpr}, is
identically 1 for all the runs, i.e.\ all the ``true'' (simulated)
states are contained in the posterior LSU-UOS support.

\subsubsection{Comparison of FPD-BTL, BCM and DSMF for $n=2$ data channels}
                                                    \label{sss:btlcmdsmf}

Experiment \#2:
In all forthcoming experiments, $n=2$, i.e.\ 2 data channels, with the
single LSU-UOS source task ($n_s=1$) processing $d_{\sx,t}$ and the
single target task processing $d_t$ in FPD-BTL. The fully modelled
alternatives (BCM and DSMF) process $d_{\sx,t}$ and $d_t$ together. In
the current experiment (\#2), common state process, $x_t\in\mS{r}^2$,
is synthesized via system~\eqref{e:simple_system} with
$\xnpars=10^{-5}$, and
$\ynpars=10^{-3}$ in the $n=2$ conditionally iid data channels,
$\lo{t}=2\,000$, $\hi{t}=4\,000$ and MC\,=\,50~runs.

\begin{figure}[htb]
    \centering
    {~\hfill a) \hspace{0.49\textwidth} b) \hfill ~} \\[-0.2em]
{\includegraphics[width=0.49\textwidth]{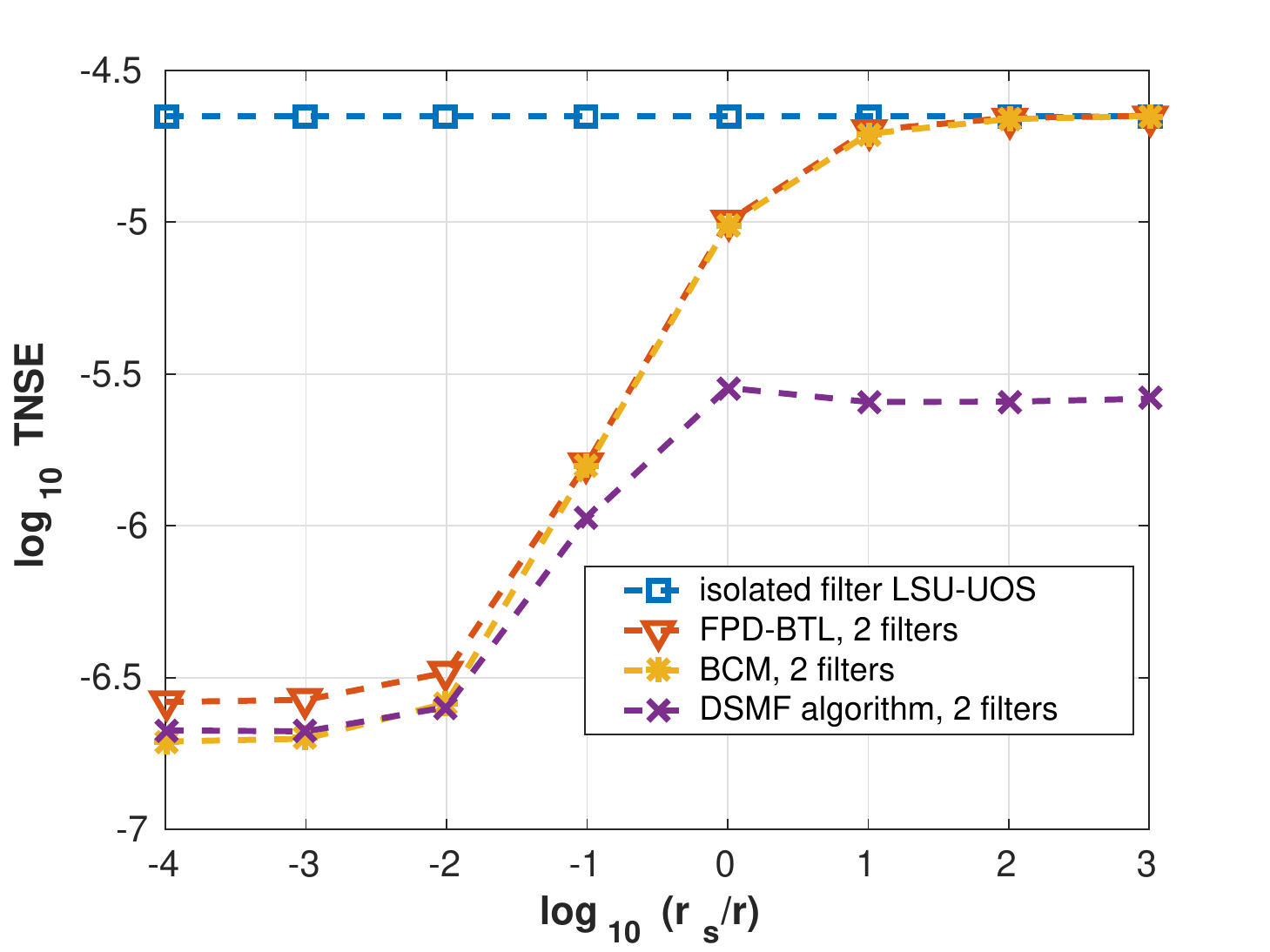}}
\hfill
{\includegraphics[width=0.49\textwidth]{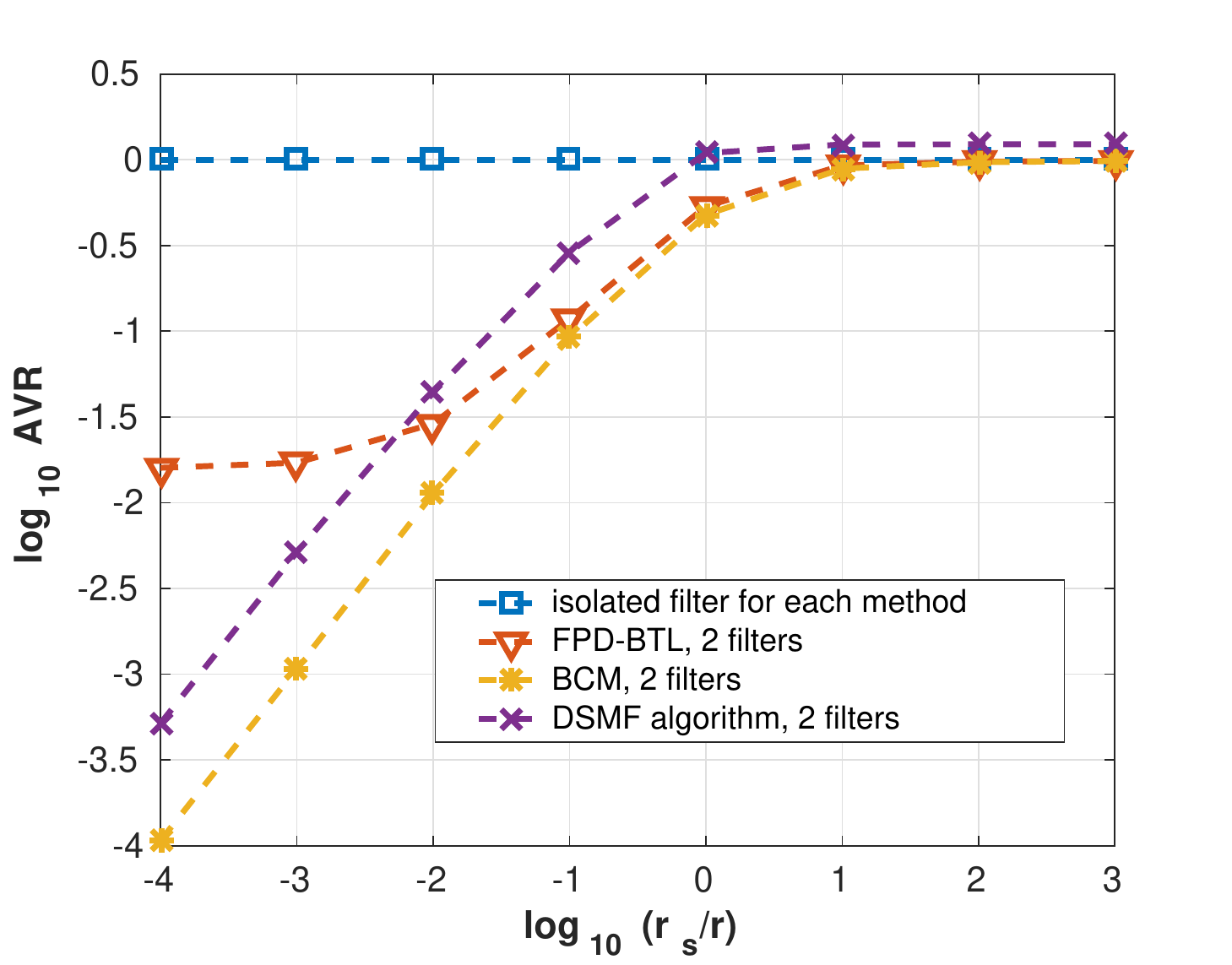}}
 \\
{c)}\\[-0.2em]
 {\includegraphics[width=0.49\textwidth]{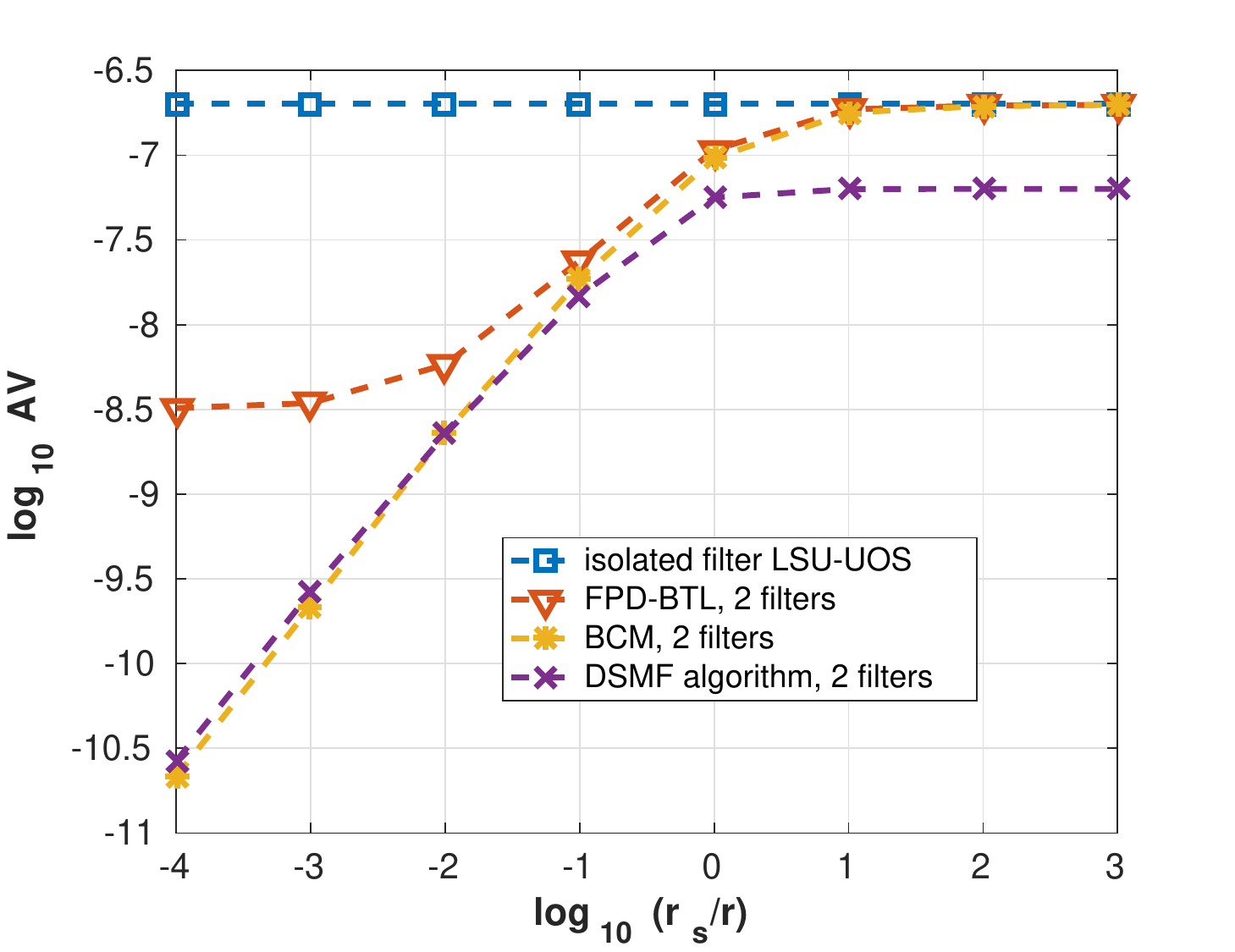}}
  \caption{Experiment \#2: Filtering performance of FPD-BTL compared
    to the isolated LSU-UOS task~\eqref{e:simple_system}, and to
    complete modelling alternatives (BCM and DSMF): a)
    TNSE~\eqref{e:deftnse}, b) AVR~\eqref{e:defvr} and c)
    AV~\eqref{e:defv}.  }
  \label{f:volr_simple_system_compl}
\end{figure}

The performance of the isolated target task (processing $d_t$ alone)
is compared to our FPD-optimal BTL scheme and to the joint processing
alternatives under three performance criteria
(Figure~\ref{f:volr_simple_system_compl}).  All the characteristics
are decreasing as $\ynpars_\sx$ decreases. We note the following:
\begin{enumerate}[label=(\roman*)]
\item The threshold for FPD-BTL and BCM is
  $\log_{10}\ynpars_\sx/\ynpars=1$, in agreement with
  experiment~\#1. The threshold for DSMF is
  $\log_{10}\ynpars_\sx/\ynpars=0$.
\item Below threshold ($\log_{10}\ynpars_\sx/\ynpars>1$), the absolute
  performance quantities (TNSE~\eqref{e:deftnse},
  Figure~\ref{f:volr_simple_system_compl}a, and AV~\eqref{e:defv},
  Figure~\ref{f:volr_simple_system_compl}c) coincide for the Bayesian
  LSU-UOS methods (FPD-BTL and BCM); the DSMF shows better
  performance.
\item Above threshold ($\log_{10}\ynpars_\sx/\ynpars<1$), the
  characteristics mentioned above coincide for complete modelling
  methods (BCM and DSMF), outperforming FPD-BTL.
\item Since the analytic and synthetic models match in the complete
  modelling approaches (BCM and DSMF), they provide better estimation
  than FPD-BTL, and DSMF dominates over BCM due to its tighter
  geometric approximation, although with a lower threshold.
\end{enumerate}

Finally, we comment on the containment probability, $p\subs{c}$ (not
shown in Figure~\ref{f:volr_simple_system_compl}), i.e.\ the
probability that the ``true'' (simulated) state is contained in the
inferred $\mS{x}_t$, respectively (see the second bullet of
Section~\ref{ss:evalucrit}). For the Bayesian LSU-UOS approaches
\emph{and if} the analytic and synthetic models match,
$p\subs{c}=1$. For DSMF, $p\subs{c}<1$ in many operating settings,
depending on the system, noise, length of the burn-in period etc.,
however, very close to 1. This suggests that the containing ellipsoid
in DSMF is estimated too small or mis-centered, missing the true state
occasionally.

Experiment \#2 was repeated with the system~\eqref{e:another_system},
providing very similar findings. However, above threshold, $p\subs{c}$
for DSMF decreases to about 0.91.

\subsubsection{Modelling mismatch}                     \label{sss:modmismatch}

In the remaining three experiments (\#3--\#5, see
Table~\ref{t:summaryexp}), our aim is to explore the freedom that
derives from the fact that FPD-BTL is a multiple modeller framework,
allowing the source modeller to adopt an analytic model different from
the target's (in all these experiments, we focus on $n_\sx=1$ source
modeller).

In particular, we explore the additivity that arises from the
case---often encountered in practical distributed inference and
multisensor settings---where the source is an \emph{expert}, meaning
that the analytic model for its (local) source data, $d_{\sx,t}$,
matches the synthetic model for $d_{\sx,t}$ better than the
analysis-synthesis arrangement for $d_t$ in the target task
(Figure~\ref{f:shapegraphs}c). Our experiments will focus on the case
where the target's analysis model for $d_t$ is mismatched with respect
to its synthesis model, while no such mismatch exists for the (expert)
source's data, $d_{\sx,t}$. The hypothesis we seek to test is that
positive transfer from the source's matched state predictor,
$f(x_{\sx,t}|d_\sx(t-1))$~\eqref{eqn:time-updt-UOS-approx}, can
improve the target's filtering performance, relative to complete
modelling approaches---BCM and DSMF
(Section~\ref{ss:methodcompar})---where mismatching occurs in
\emph{both} channels, $d_{\sx,t}$ and $d_{t}$.

\paragraph{Experiment \#3: Rotation of a pair of eigenvalues of $\asynt$}    \label{p:eigrot}

In the system~\eqref{e:simple_system}, the state transition matrix in
the synthesis model, $\asynt$, has a~pair of complex conjugate
eigenvalues $0.9049\pm0.003i$.  These eigenvalues are rotated
(mismatched) by angle $\pm \varphi$ (radians, preserving conjugacy),
in the target's analysis model (FPD-BTL), and in the completely
modelled approaches (BCM, DSMF). The interval, $\varphi\in(-0.007,
0.067)$, represents the range (found experimentally) when
$\mS{x}^\cap_t\neq\emptyset$ in Theorem~\ref{t:t1a}, and the
estimation is numerically stable, i.e.\ the intersection
in~\eqref{eqn:data-updt-unif2} is nonempty.
\begin{figure}[h!]
    \centering
    {~\hfill a) \hspace{0.49\textwidth} b) \hfill ~} \\[-0.2em]
{\includegraphics[width=0.49\textwidth]{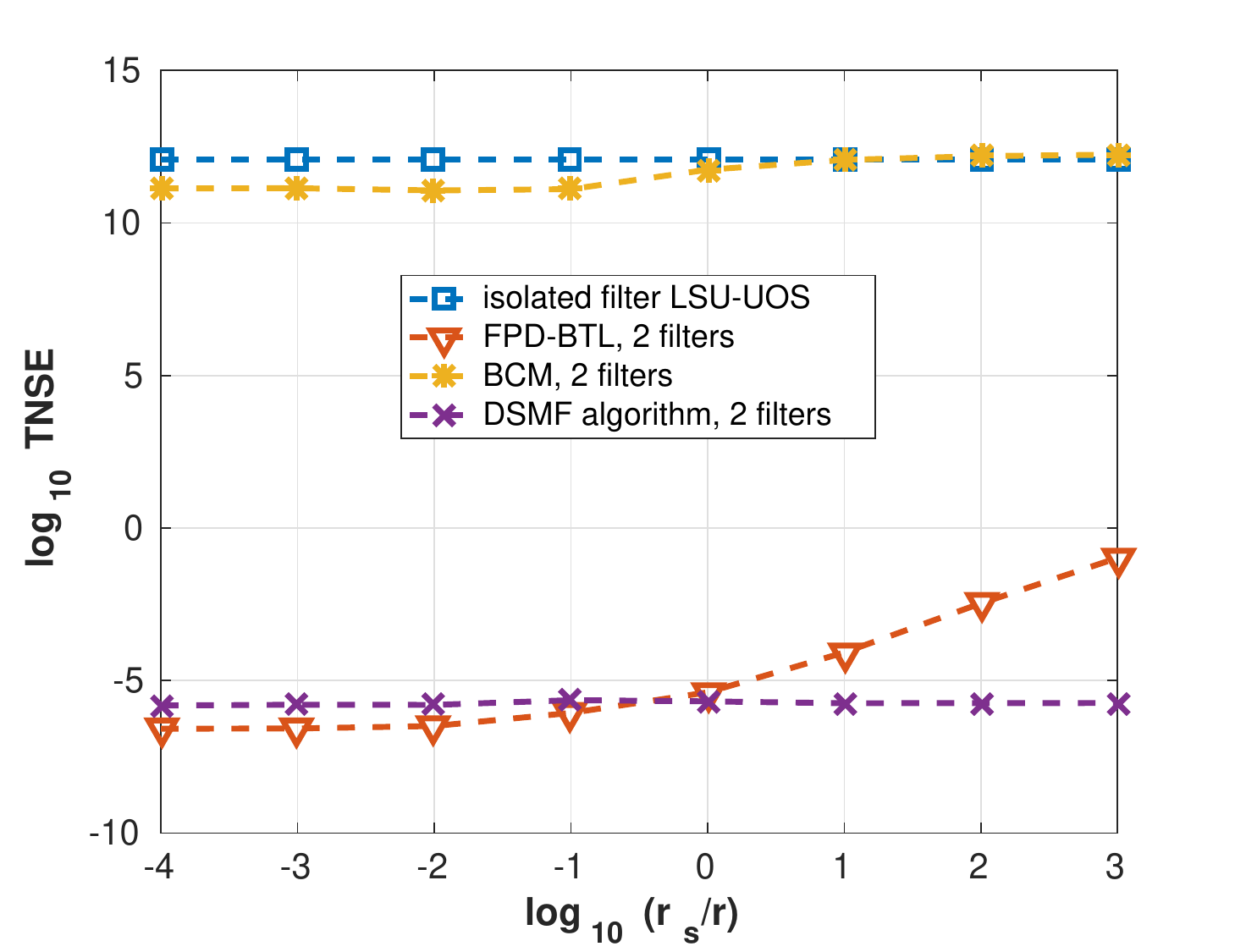}}
\hfill
{\includegraphics[width=0.49\textwidth]{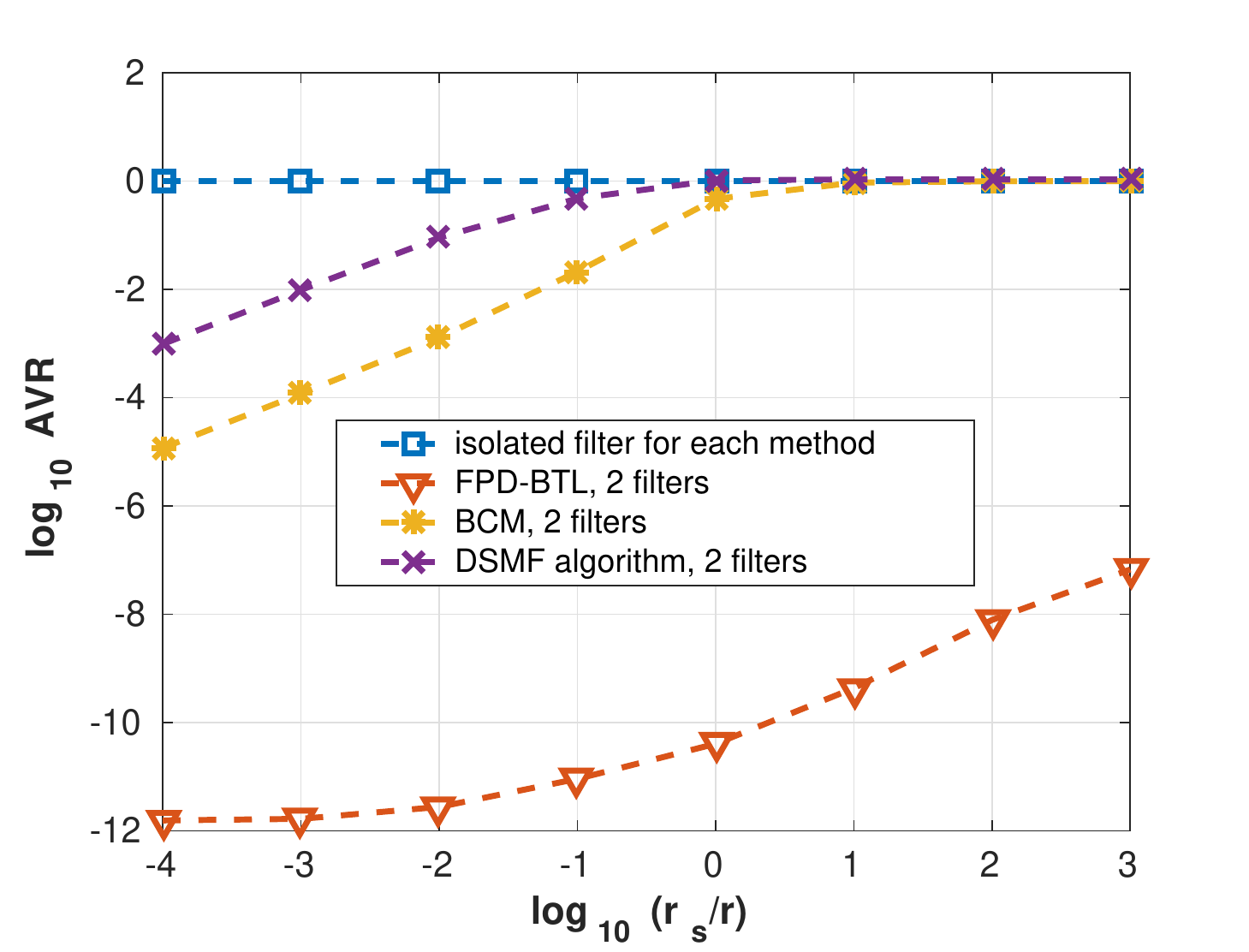}}
 \\
{c)}\\[-0.2em]
{\includegraphics[width=0.49\textwidth]{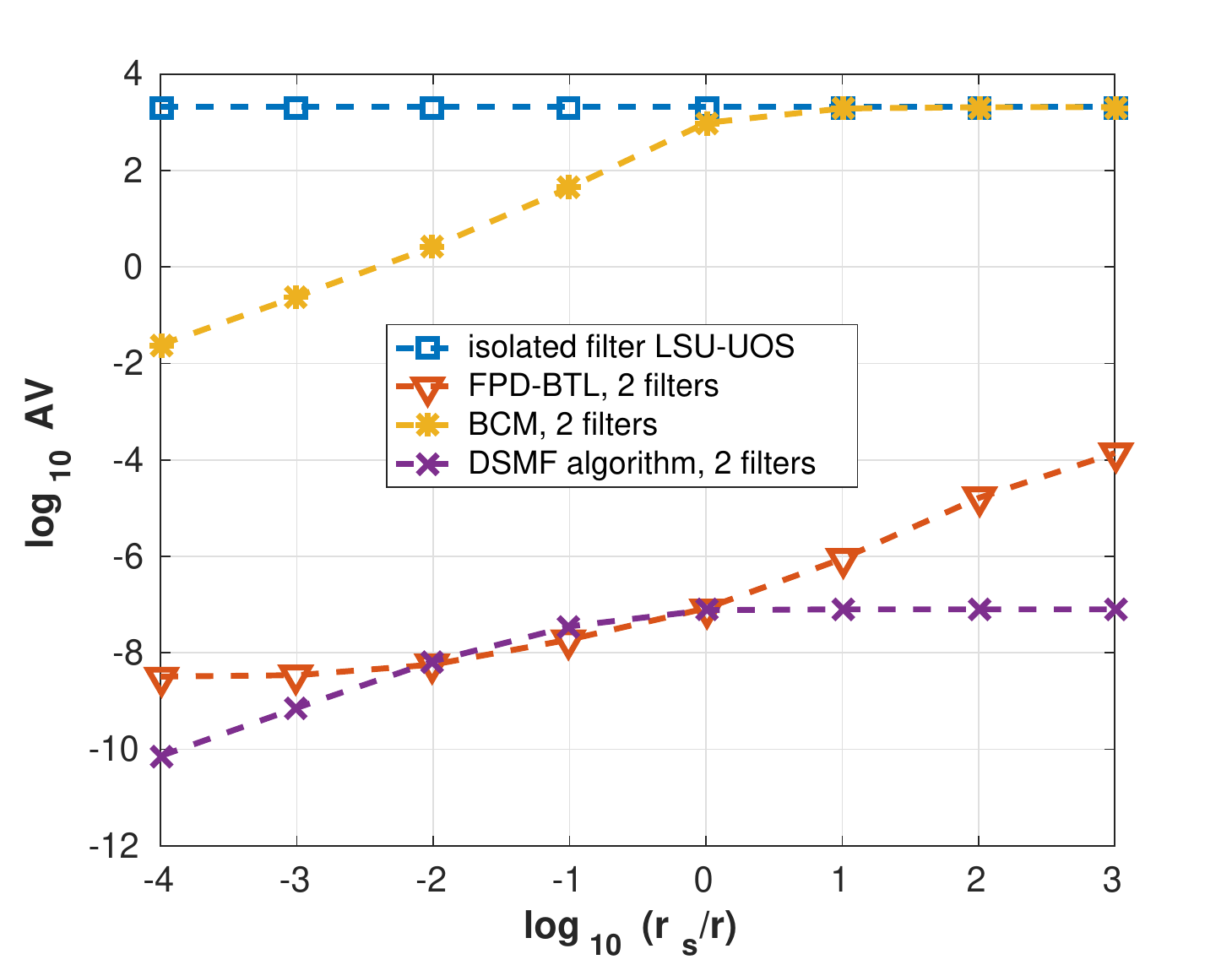}}
\caption{Experiment \#3: a) TNSE~\eqref{e:deftnse}, b)
  AVR~\eqref{e:defvr} and c) AV~\eqref{e:defv}, rotational
  mismatch---$\varphi=0.067$---of the eigenvalues of
  $A$~\eqref{e:simple_system} in the target's analysis model
  (FPD-BTL), and in the complete analysis models (BCM and DSMF,
  V-shaped complete model (Figure~\ref{f:shapegraphs}a)).}
  \label{f:mismatch_rot1}
\end{figure}
To repeat: the data are synthesized with $\varphi=0$, i.e.\ with the
$\asynt=A$ in~\eqref{e:simple_system}, according to the V-shaped graph
(Figure~\ref{f:shapegraphs}a). The same matrix, $\aanal=A$, is used
for the source analytic model. For the target analytic model, $A$ is
perturbed with $\varphi=0.067$. This bounding value of $\varphi$ in
the interval above is adopted in order to illustrate relative
algorithmic performance under significant analysis-synthesis mismatch.
Also, $\ynpars=10^{-3}$, $\xnpars=10^{-5}$~\eqref{e:defnpars},
$\lo{t}=2\,000$, $\hi{t}=4\,000$, MC\,=\,50~runs
(Figure~\ref{f:mismatch_rot1}).
We note the following:
\begin{enumerate}[label=(\roman*)]
\item The TNSEs for the (mismatched) isolated target filter and BCM
  are very high (Figure~\ref{f:mismatch_rot1}a), whereas the TNSE for
  FPD-BTL preserves the same positive transfer characteristic as was
  achieved with the matched target case (i.e.\ $\varphi=0$ ,see
  Figure~\ref{f:volr_simple_system_compl}a).
\item AV for FPD-BTL (Figure~\ref{f:mismatch_rot1}c) also preserves
  the same positive transfer characteristic as for $\varphi=0$,
  outperforming BCM.
\item DSMF is robust to this kind of modelling mismatch: its AV
  achieves values close to the matched ($\varphi=0$) case
  (Figure~\ref{f:volr_simple_system_compl}a). The TNSE of DSMF
  deteriorates insignificantly relative to the matched case, and is
  almost constant with $\ynpars_\sx/\ynpars$. In this experiment, DSMF
  outperforms the other methods.
\item The $\ynpars_\sx/\ynpars$ thresholds for BCM and DFSM are
  preserved.
\item FPD-BTL and DSMF are robust to $\ynpars_\sx\gg\ynpars$, in the
  sense that they achieve very low TNSE in this regime, both in
  absolute terms \emph{and} relative to the mismatched isolated
  target.
\end{enumerate}

Note that when $\varphi$ is varied from 0 (matching) to 0.067 (maximal
computationally stable mismatch), the performance measures (TNSE, AVR,
AV) increasingly deviate from the results with $\varphi=0$ (not
illustrated here).
The same is true as $\varphi$ decreases from 0 to $-0.007$, with the
results $\varphi=-0.007$ being similar to those shown for the other
extremum ($\varphi=0.067$) in Figure~\ref{f:mismatch_rot1}.

\paragraph{Experiment \#4: Dilation of eigenvalues of $\asynt$}

In this experiment, once again, the V-shaped model (see
Figure~\ref{f:shapegraphs} and Table~\ref{t:summaryexp}) is adopted in
the synthesis of the $\xsize=3$-dimensional state process, $x_t$, and
of the two $\ysize=2$-dimensional bivariate data channels, $d_{\sx,t}$
and $d_t$, for the system in~\eqref{e:another_system}, with
$\ynpars=10^{-3}$, $\xnpars=10^{-5}$~\eqref{e:defnpars}, $\lo{t}=200$,
$\hi{t}=400$, MC\,=\,500.
This system has 3 distinct real eigenvalues, which suffer a~common
dilation (i.e.\ scaling) mismatch in the target's analysis model,
$\aanal$; i.e.\ the target adopts the state transition matrix,
$\aanal=\sigma\asynt=\sigma A$~\eqref{e:another_system}, with
$\sigma=1.4$ (Section~\ref{sss:matrixmis}).
%
\begin{figure}[h!]
 \centering
    {~\hfill a) \hspace{0.49\textwidth} b) \hfill ~} \\[-0.2em]
{\includegraphics[width=0.49\textwidth]{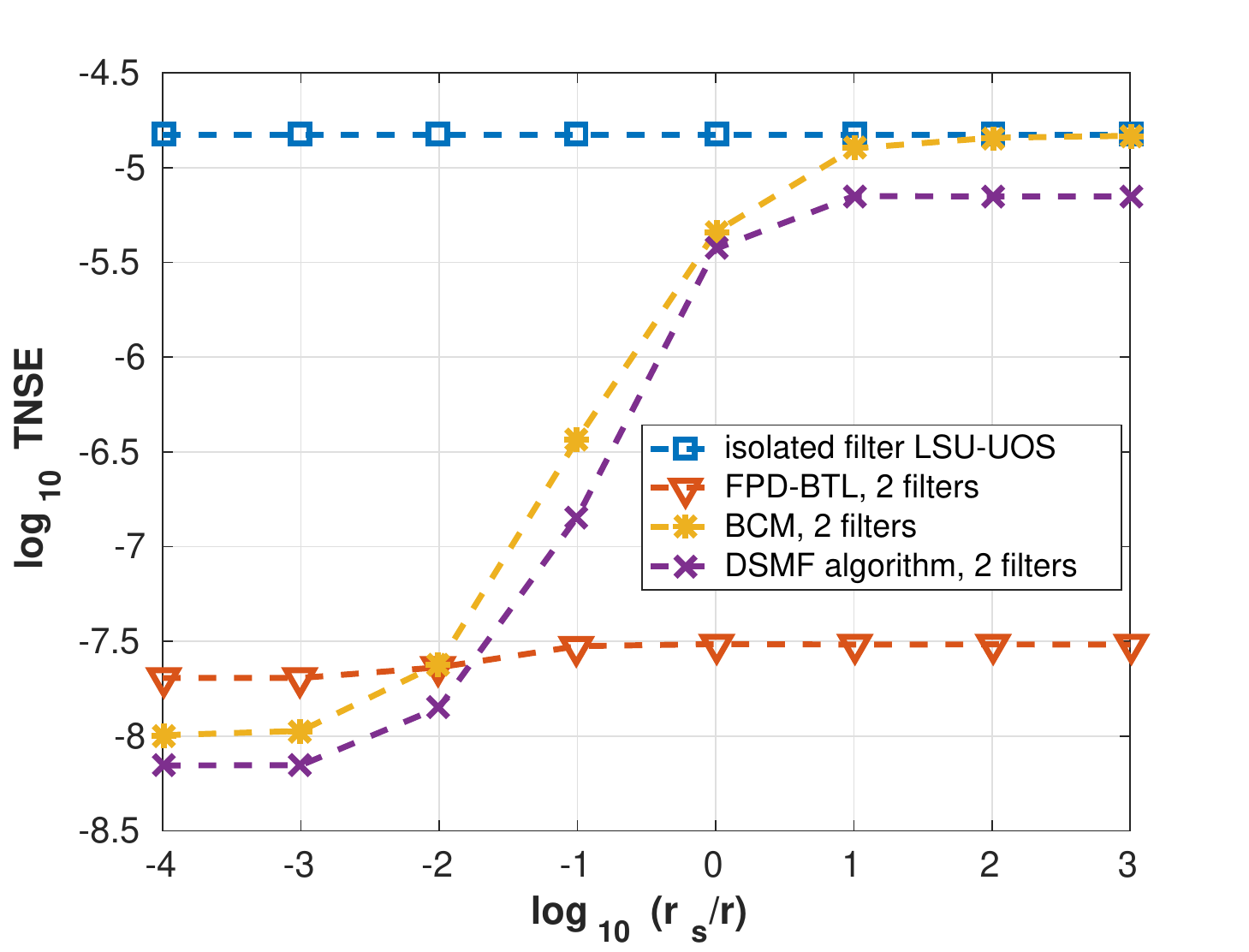}}
\hfill
{\includegraphics[width=0.49\textwidth]{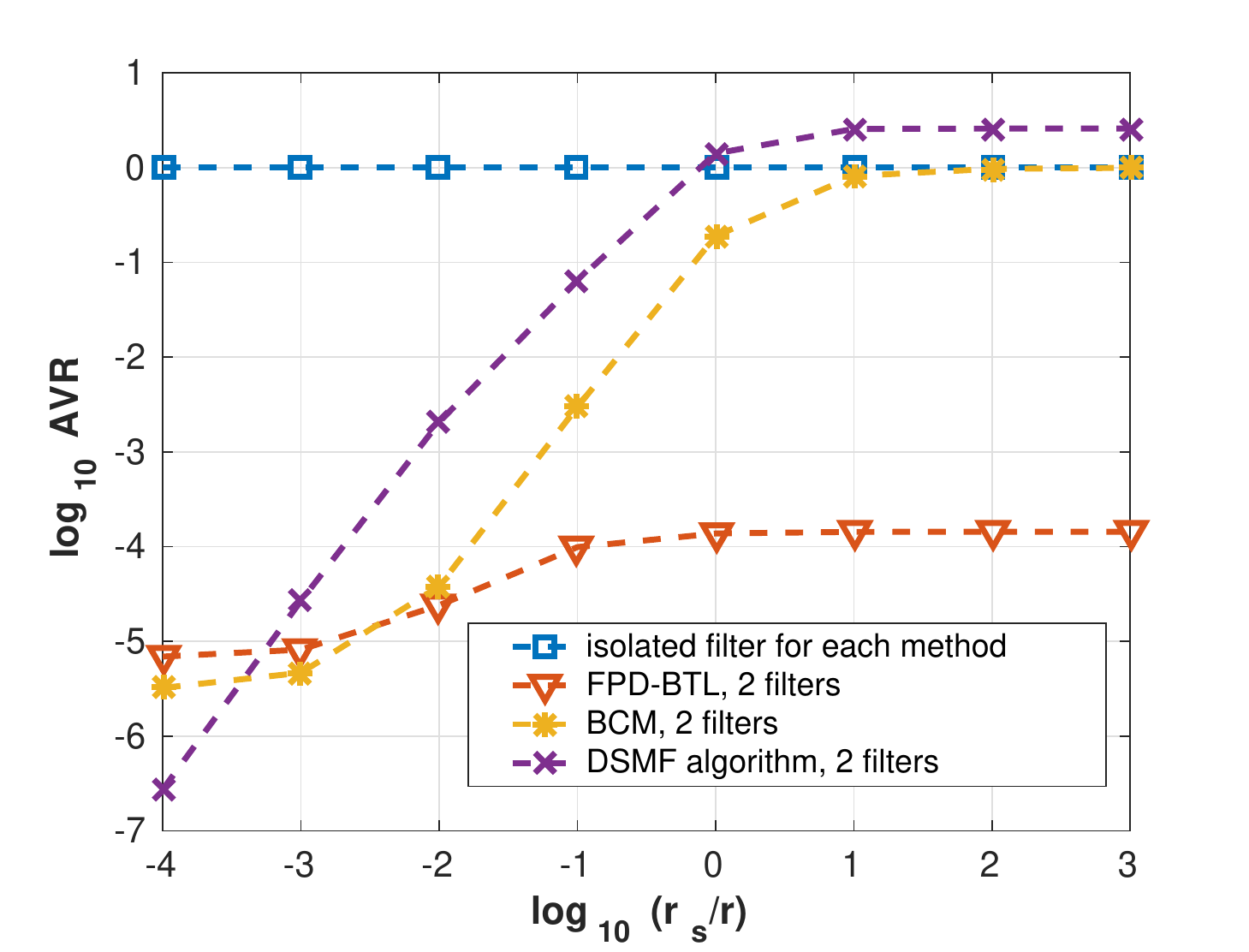}}\\
    {~\hfill c) \hspace{0.49\textwidth} d) \hfill ~} \\[-0.2em]
{\includegraphics[width=0.49\textwidth]{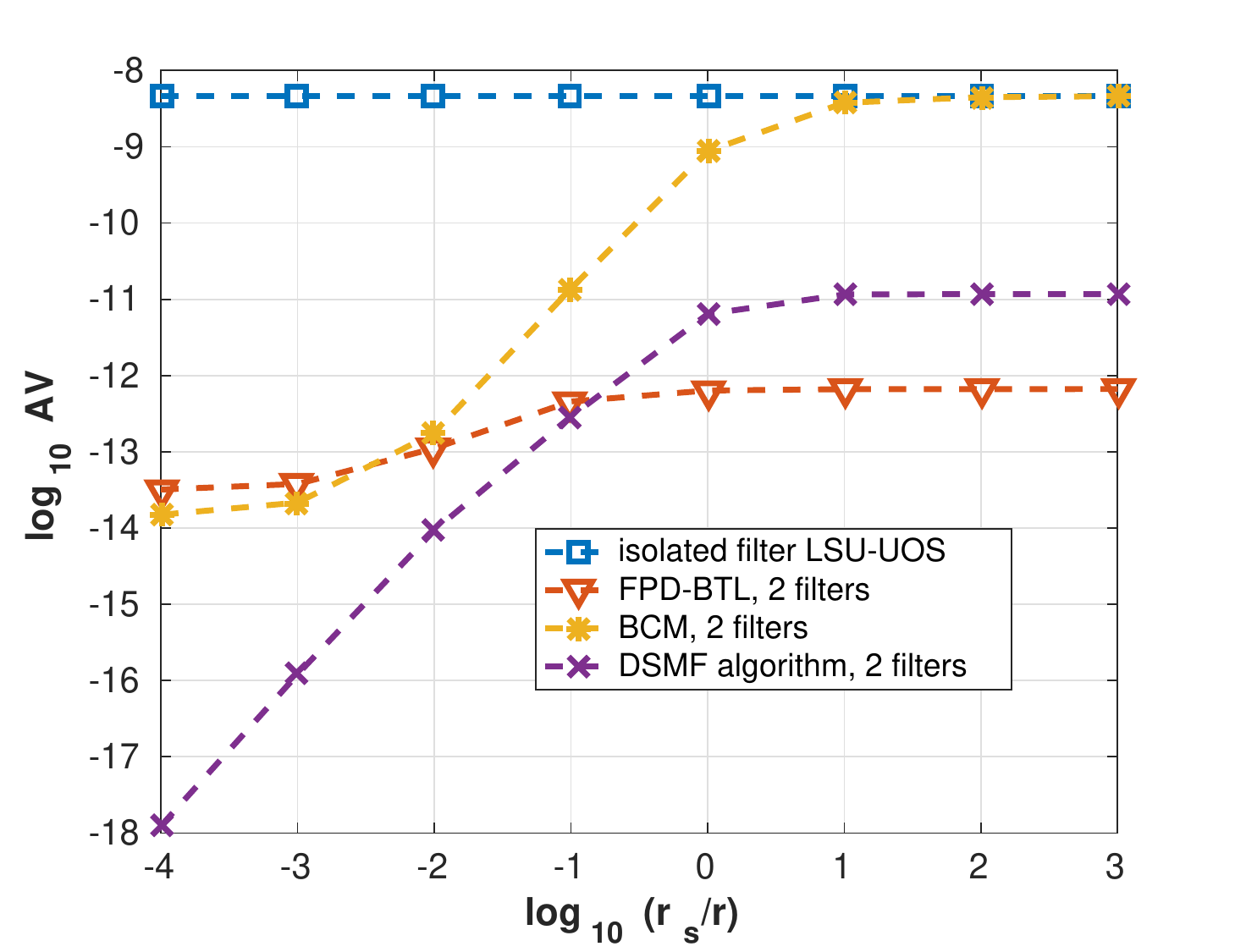}}
\hfill
{\includegraphics[width=0.49\textwidth]{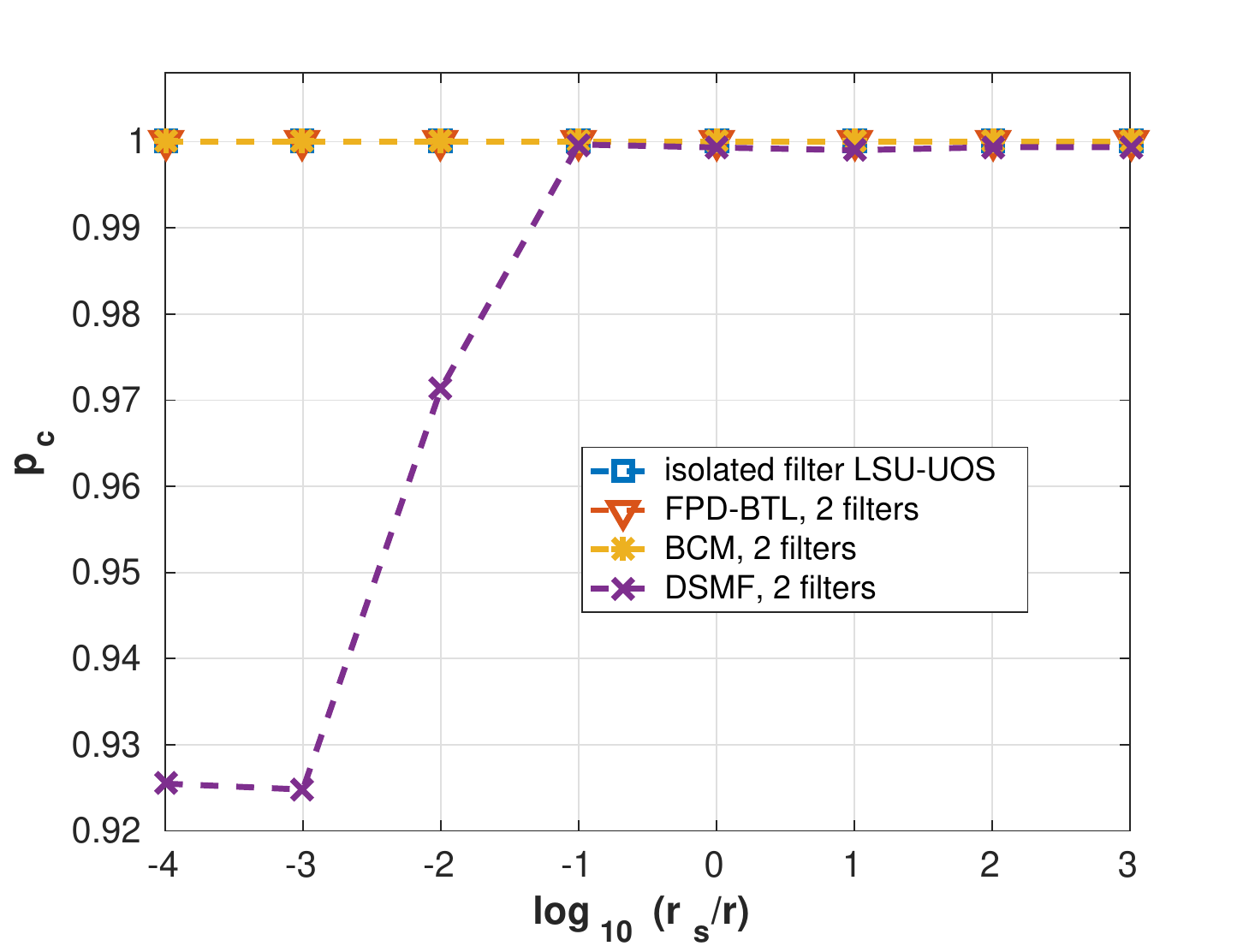}}
  \caption{Experiment \#4: a) TNSE~\eqref{e:deftnse}, b)
    AVR~\eqref{e:defvr}, c) AV~\eqref{e:defv} and d)
    $p\subs{c}$~\eqref{e:defcpr}, with dilation---$\sigma=1.4$---of
    the of eigenvalues of $A$~\eqref{e:another_system} in the target
    analysis model (FPD-BTL), and in the complete analysis model (BCM
    and DSMF. V-shaped complete model (Figure~\ref{f:shapegraphs}a)).
  }
  \label{f:mismatch_mult}
\end{figure}
Figure~\ref{f:mismatch_mult} displays all four performance measures
defined in Section~\ref{ss:evalucrit}. We note the following:
\begin{enumerate}[label=(\roman*)]
\item FPD-BTL exhibits strongly positive transfer below threshold
  (i.e.\ for $\ynpars_\sx>10\,\ynpars$ in this $n_\sx=1$ case), which
  is explained in the next paragraph.
\item For a wide range of $\ynpars_\sx/\ynpars$, of FPD-BTL performs
  better than the other methods.
\item Both methods based on complete modelling (i.e.\ BCM and DSMF)
  have a~similar TNSE (Figure~\ref{f:mismatch_mult}a).
\item DSMF exhibits negative transfer below threshold
  (Figure~\ref{f:mismatch_mult}b).
\item The containment probability, $p\subs{c}$, for DSMF rapidly falls
  (i.e.\ deteriorates) above threshold at $\ynpars_\sx=0.1\,\ynpars$
  (Figure~\ref{f:mismatch_mult}). The Bayesian methods maintain
  $p\subs{c}=1$, $\forall \ynpars_\sx/\ynpars$.
\end{enumerate}

There is an apparent saturation of the FPD-BTL performance below
threshold (Figures~\ref{f:mismatch_mult}a--c); i.e.\ those FPD-BTL
performance measures are far better (lower) than those of the isolated
filter, as $\ynpars_\sx/\ynpars$ increases, but saturating below
threshold.
This can be explained in the following way: if the source observation
noise, $\ynpars_\sx$, is high, its local data
update~\eqref{eqn:data-updt-unif2} via $y_{\sx,t}$ provides no
learning/concentration for $x_{\sx,t}$, because the source data
strip~\eqref{eqn:strip},~\eqref{eqn:data-updt-unif2} is a~superset of
the filtering inference~\eqref{eqn:data-updt-UOS-approx} prior state
predictor's orthotope~\eqref{eqn:time-updt-UOS-approx}.  Therefore,
the source evolves only via time updates. Effectively, its uncertainty
increases each step by the state noise half-width,
$\xnpars$~\eqref{e:defnpars}. For $\xnpars$ \emph{large}, the measure
of the source posterior support~\eqref{eqn:data-updt-UOS-approx} grows
quickly, and the transferred state predictor brings no more knowledge
to the target, after some filtering horizon, $t\subs{H}<\hi{t}$. In
his case, the target effectively becomes isolated from the source for
$t>t\subs{H}$.  For $\xnpars$ \emph{small}, the filtering
uncertainty~\eqref{eqn:data-updt-UOS-approx} grows more slowly, and,
effectively, $t\subs{H}>\hi{t}$. Had we increased $\hi{t}$ to
$t\subs{H}$ sufficiently, the FPD-BTL performances would, indeed, have
saturated at those if the isolated target, confirming that FPD-BTL is,
indeed, robust.

As $\sigma$ increases from 1 (analysis-synthesis matching) to 1.4, the
deviations of the performance measures from those at $\sigma=1$
increase (not illustrated).
For $\sigma<1$ and $\sigma>1.4$, these deviations are insensitive to
$\sigma$ as well, i.e.\ the effect of the eigenvalue dilation is most
significant in the interval $1<\sigma<1.4$.


Finally, note that these findings are influenced only minimally by
non-isotopic dilation of the eigenvalues of
$\asynt$~\eqref{e:another_system}, for instance the case in which only
the largest-radius eigenvalue is perturbed by $sigma$.

\paragraph{Experiment \#5: State noise mismatch}

Here, the U-shaped synthesis model (Figure~\ref{f:shapegraphs}b) is
adopted with system~\eqref{e:another_system}, modified so that $\asynt
\equiv 1.4$. The operating parameter, $\alpha>0$~\eqref{e:ugraph},
therefore controls the mismatch in the target's analytic model for
$x_t$ in FPD-BTL (Figure~\ref{f:shapegraphs}c) (i.e.\ the target
filter adopts $e_t=0$ in~\eqref{e:ugraph}), as well as in the complete
modelling (BCM and DSMF) which assume a~V-shaped analysis model (see
Figure~\ref{f:shapegraphs}a and
Section~\ref{sss:noisemis}). Specifically, in this experiment, $\alpha
= 0.4$, which was found to induce the maximal state mismatch for which
the FPD-BTL algorithm remained numerically stable.
\begin{figure}[h!]
 \centering
    {~\hfill a) \hspace{0.49\textwidth} b) \hfill ~} \\[-0.2em]
{\includegraphics[width=0.49\textwidth]{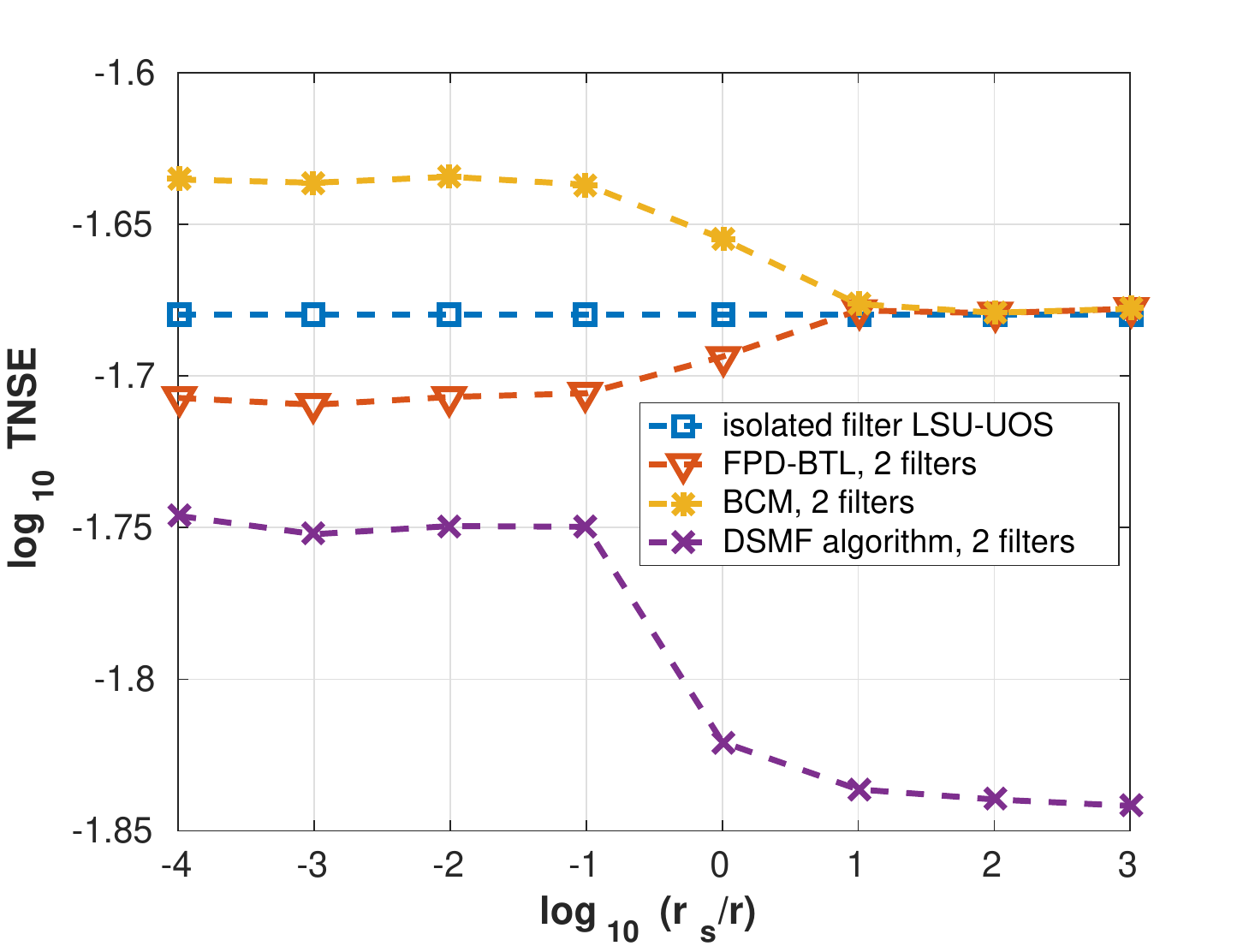}}
\hfill
{\includegraphics[width=0.49\textwidth]{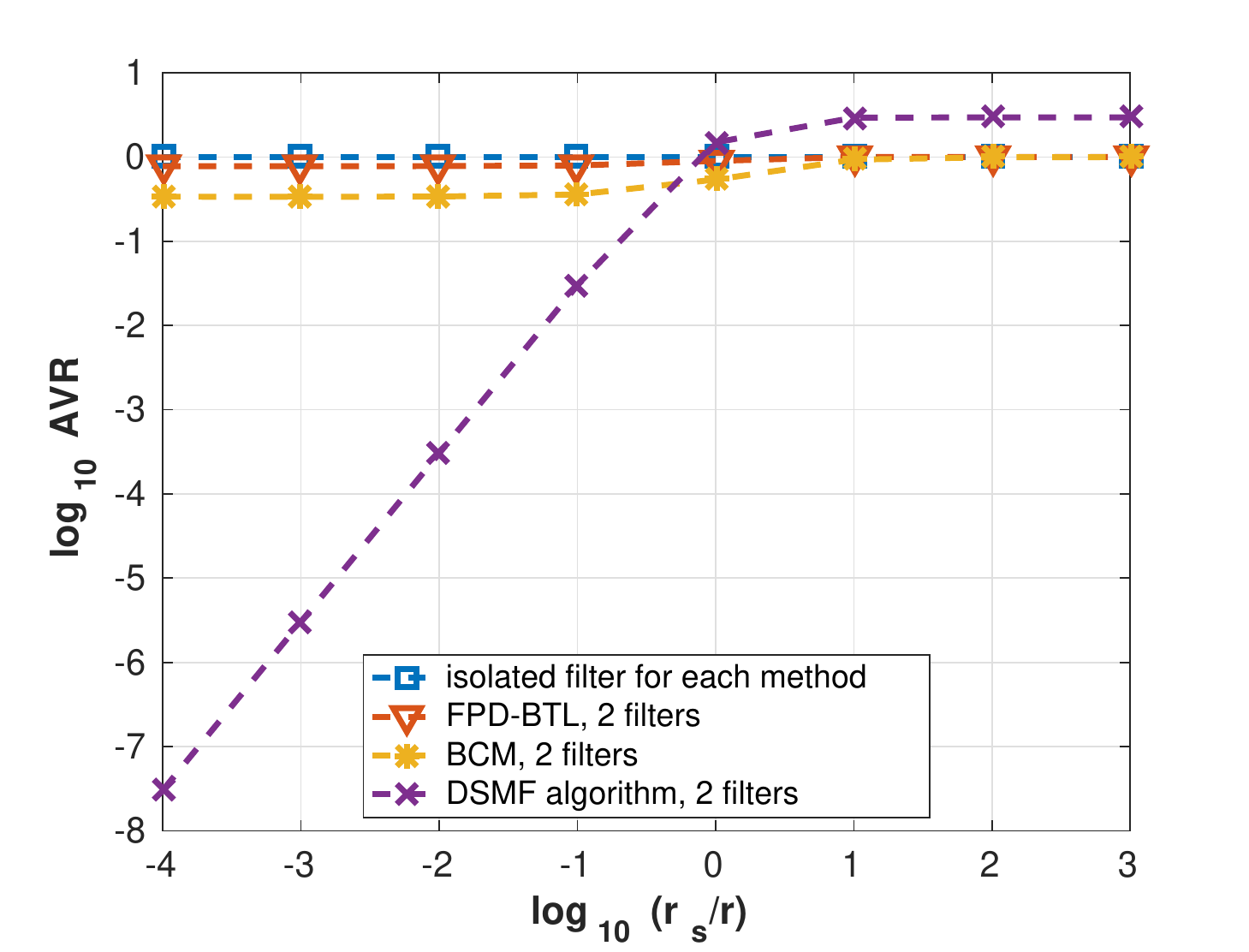}}\\
    {~\hfill c) \hspace{0.49\textwidth} d) \hfill ~} \\[-0.2em]
{\includegraphics[width=0.49\textwidth]{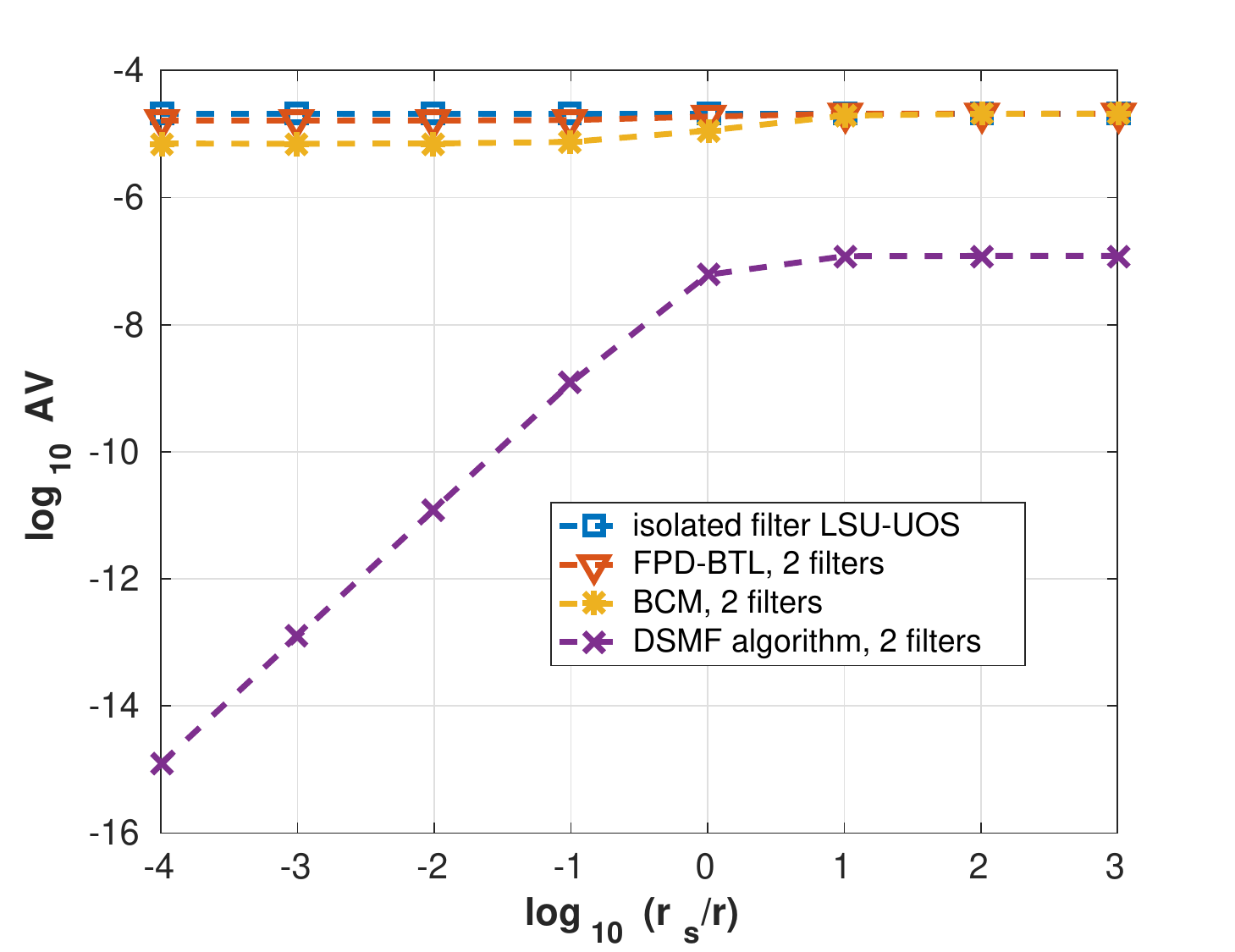}}
\hfill
{\includegraphics[width=0.49\textwidth]{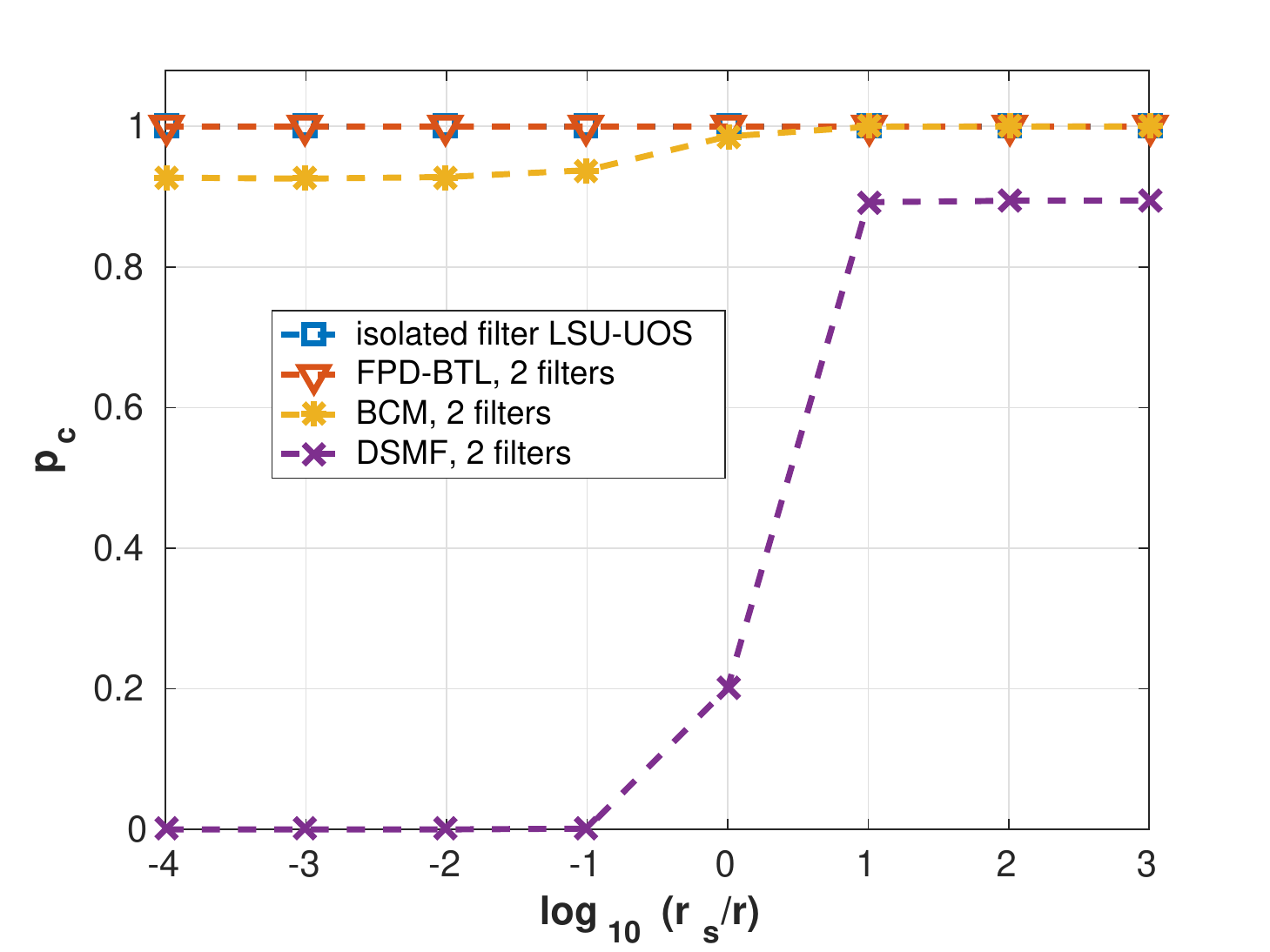}}\\
\caption{Experiment \#5: a) TNSE~\eqref{e:deftnse}, b)
  AVR~\eqref{e:defvr}, c) AV~\eqref{e:defv} and d)
  $p\subs{c}$~\eqref{e:defcpr} for a~U-shaped synthesis model
  (Figure~\ref{f:shapegraphs}b) with $\alpha=0.4$~\eqref{e:ugraph}
  and system~\eqref{e:another_system} with
  $\asynt=\aanal_\sx=\aanal=1.4\,A$.
 State analysis model mismatches in FPD-BTL target filter ($\alpha=0$)
 and in the complete analysis models of BCM and DSMF (V-shaped graphs
 with $\alpha=0$).
  %
  }
  \label{f:mismatch_noise}
\end{figure}
Also, $\ynpars=10^{-3}$, $\xnpars=10^{-2}$, $\lo{t}=50$, $\hi{t}=400$,
MC\,=\,200~runs.
All four performance measures (Section~\ref{ss:evalucrit}) are graphed
in Figures~\ref{f:mismatch_noise}a--d respectively. We note the
following:
\begin{enumerate}[label=(\roman*)]
\item The scale of the vertical axis in Figure~\ref{f:mismatch_noise}a
  is far smaller than in the TNSE graphs at earlier experiments;
  i.e.\ positive transfer is far smaller than in those earlier
  experiments.  Also, Figures~\ref{f:mismatch_noise}b
  and~\ref{f:mismatch_noise}c show similarly small positive transfer
  for the Bayesian methods (FPD-BTL and BCM).
\item In Figure~\ref{f:mismatch_noise}a, the complete modelling cases
  (BCM and DSMF) have opposing trends to those in earlier experiments.
  Furthermore, BCM exhibits negative transfer (for TNSE) \emph{above}
  threshold.
\item DSMF has negative transfer below threshold
  (Figure~\ref{f:mismatch_noise}b).
\item Although DSMF exhibits a~significant reduction in estimate
  uncertainty above threshold (Figure~\ref{f:mismatch_noise}c), its
  containment probability, $p\subs{c}$, falls to zero in this regime
  (Figure~\ref{f:mismatch_noise}d).  This underlines the fact that AVR
  and AV can be a misleading performance measures if quoted in the
  absence of $p\subs{c}$.
\end{enumerate}
This experiment demonstrates that all the tested alternatives to our
FPD-BTL algorithm perform unreliably when exposed to state noise
mismatch in the analysis model. In contrast, FPD-BTL demonstrates
robustness below the observation noise threshold,
$\ynpars_\sx>10\,\ynpars$, and weakly positive transfer above
threshold.

\subsection{Computational costs of the filtering algorithms}      \label{sss:times}
To assess the relative computational costs of the 3 algorithms,
$n\!=\!2$ scalar observation channels are synthesized via the V-shaped
graph (Figure~\ref{f:shapegraphs}a) with
system~\eqref{e:simple_system}.  Common state process,
$x_t$~\eqref{eqn:SS-model-state-evo}, is sequentially estimated via
FPD-BTL (Algorithm~\ref{alg:alg}), and by the fully-modelled
alternatives (BCM, Section~\ref{sss:bcm}, and DSMF,
Section~\ref{sss:dsmf}), now suppressing all the mismatches in
analysis which we studied in Section~\ref{ss:results}. The algorithms
are implemented in Matlab~2016a on an Intel i5-7500, 3.4\,GHz machine
hosting Linux operating system. All algorithms are run sequentially
(no parallel processing).  $\hi{t}=1\,000$ and MC\,=\,1\,000~runs per
setting of $\ynpars_\sx$ (there were 8~such settings). The runtimes
were 946\,s (FPD-BTL), 625\,s (BCM) and 1\,605\,s (DSMF).  The
comparative run-time for the isolated LSU-UOS filtering
algorithm~\eqref{eqn:data-updt-UOS-approx},
\eqref{eqn:time-updt-UOS-approx} processing only scalar target data
channel, $d_t$ (Figure~\ref{f:shapegraphs}c), was 598\,s.

With $n\geq 2$ observation channels, BCM and DSMF compute just one
time update per $n$ data updates per step of these algorithms. A
fusion operation is also required in every step of DSMF. Meanwhile, in
FPD-BTL, the source and target tasks each involve one data- and
time-update step (lines II. and III. in Algorithm~\ref{alg:alg},
respectively), with the single transfer being computationally trivial
(set intersection~\eqref{e:xconstr}).  Note also that FPD-BTL requires
only that a UOS state predictor,
$f_\sx(x_{\sx,t}|d_\sx(t-1))$~\eqref{eqn:time-updt-UOS-approx} be made
available to the target in each step in the algorithm (line~I. of
Algorithm~\ref{alg:alg}), but does not stipulate how the source
actually calculates this. In this sense, the source data- and
time-updates can be omitted from the target's computational budget,
approximately halving the FPD-BTL runtime quoted above, and
significantly outperforming the completely modelled alternatives (BCM
and DSMF), which are centralized algorithms. This parallel nature of
the FPD-BTL algorithm---parallelizable filtering operations with
a~trivial fusion/transfer operation (i.e. intersection) at the central
target task---is a key attribute, recommending FPD-BTL in
resource-critical applications that arise in sensorized
environments~\cite{CheHoYu:17}.

\subsection{Discussion}                                   \label{ss:discussion}

The key distinction between our optimal Bayesian transfer learning
algorithm (FPD-BTL), and the fully-modelled Bayesian (BCM) and fusion
(DSMF) alternatives, lies in the fact that the source LSU-UOS task is
an independent modeller, transferring only its local state predictor,
$f_\sx(x_{\sx,t}|d_\sx(t-1))$~\eqref{e:fac1}---i.e. its local
(approximate) Bayesian sufficient statistics, $\lo{x}^+_t$
and~$\hi{x}^+_t$
\eqref{eqn:time-updt-UOS-approx}---for processing by the target
task. Our aim with these experiments has been to assess whether this
provides robustness to mismatches that can affect the fully modelled
alternatives, since FPD-BTL allows expert local (and independent)
source knowledge to be transferred in support of the target.

We empirically defined a~\emph{threshold} using the source observation
noise parameter, $\ynpars_\sx$, and the target observation noise
parameter, $\ynpars$~\eqref{e:defnpars}, at $\ynpars_\sx=10\,\ynpars$
for $n_\sx=1$ source (see, for example,
Figure~\ref{f:volr_simple_system_compl}). FPD-BTL has \emph{positive
  transfer} above threshold ($\ynpars_\sx\leq 10\,\ynpars$) and it is
\emph{robust} (i.e. the transfer is rejected and the target reverts to
an isolated LSU-UOS task) below threshold ($\ynpars_\sx\geq
10\,\ynpars$). In the multi-source case ($n_\sx>1$), the threshold
depends on $n_\sx$ as $\ynpars_\sx\!=\!10n_\sx\, \ynpars$ (see
Figure~\ref{f:tnse_vol_simple_system}).

In the case of two filtering tasks ($n=n_\sx+1=2$), with matching of
the synthetic and analytic models in both filters, FPD-BTL is robust
below threshold and shows positive transfer above threshold, where its
performance measures---TNSE~\eqref{e:deftnse}, AVR~\eqref{e:defv} and
AV~\eqref{e:defvr}---saturate
(Figure~\ref{f:volr_simple_system_compl}). The complete modelling
methods (BCM and DSMF) saturate only in TNSE and not in uncertainty
measures (AVR and AV), where they depend on $r_\sx$ linearly, since
they process raw source observations, $y_{\sx,t}$, using the source's
known observation model (both of which do not need to be processed in
FPD-BTL).  Also, if the synthetic and analytic models match, the DSMF
algorithm yields improved TNSE below threshold, compared to BCM, due
to tighter geometric approximation of the state set, $\mS{x}_t$
(Section~\ref{ss:evalucrit}), by a~general ellipsoid than by
a~(conservative) orthotope.

The DSMF algorithm needs a burn-in period because it is more sensitive
to its initialization, unlike the Bayesian LSU-UOS methods. Therefore,
the lower bound for the performance evaluation epoch had to be set as
$\lo{t}\gg1$ for DSMF in section~\ref{ss:evalucrit}.  Practically,
$\lo{t}=\hi{t}/2$ was used. Without this burn-in, the performance
criteria for DSMF were all much worse, and, in fact, suffered
a~negative transfer.

In Section~\ref{sss:modmismatch}, we studied synthesis-analysis
matching in the source ($n_\sx=1$) and mismatching in the target.
FPD-BTL maintained its positive transfer above threshold and
robustness below threshold in all the studied cases, including sure
containment probability, $p\subs{c}=1$~\eqref{e:defcpr}. Deterioration
of filtering performance for the complete modelling methods (BCM and
DSMF)---and their non-robustness to modelling mismatch---occurred in
different ways, depending on the particular system and type of
mismatch. For instance, note the greatly increased positive transfer
in FPD-BTL vs BCM in Figure~\ref{f:mismatch_rot1}; in FPD-BTL vs both
BCM and DSMF in Figures~\ref{f:mismatch_mult}
and~\ref{f:mismatch_noise}; note the negative transfer in
Figures~\ref{f:mismatch_mult}b (DSMF), \ref{f:mismatch_noise}a (BCM),
and~\ref{f:mismatch_noise}b (DSMF), compared to positive and robust
transfer performance in FPD-BTL in these experiments; and, finally,
note this $p\subs{c}\rightarrow 0$ for DSMF in
Figure~\ref{f:mismatch_noise}d, while it remains at unity for FPD-BTL.

The state support set intersections, \eqref{eqn:data-updt-unif2}
and~\eqref{e:xconstr}, were never empty if the target synthesis and
analysis models matched.  However, in the cases of target model
mismatch (Section~\ref{sss:modmismatch}), the values of the mismatch
operating parameters ($\varphi$ in Experiment~\#3, $\sigma$ in
Experiment~\#4, and~$\alpha$ in Experiment~\#5) were chosen
appropriately, so that the target synthesis and analysis models did
not differ excessively. Otherwise, the intersections,
\eqref{eqn:data-updt-unif2} and~\eqref{e:xconstr}, may have been
empty, and transfer stopped (Theorem~\ref{t:t1a}, final statement). It
was observed that intersections in the data
update~\eqref{eqn:data-updt-unif2} had a tendency to be empty more
often (and it was more sensitive to inappropriate values of the
operating parameters) than in the transfer~\eqref{e:xconstr}. In
experiments, these cases checked and discarded.

In summary, the FPD-BTL algorithm has the following properties:
\begin{itemize}
  \item it does not require a complete model of the source, nor its
    interaction with the target;
  \item in analysis, the source and target are independent tasks
    (multiple modelling),
  \item it requires transfer only of the UOS source state
    predictor~\eqref{eqn:time-updt-UOS-approx} as a local expert
    Bayesian sufficient statistic for processing by the target; these
    statistics---$\lo{x}^+_t$ and $\lo{x}^-_t$---summarize all that is
    relevant in $d_\sx(t)$, $A_\sx$, $B_\sx$, $C_\sx$, for transfer
    learning in the target;
  \item in the case of target analysis model mismatch, the target
    accepts the source state predictor, avoiding imposition of this
    mismatch on the source, as occurs in the complete modelling
    strategies (BCM and DSMF);
  \item it readily scales to transfer of state predictors from
    multiple ($n_\sx>1$) sources;
  \item its threshold occurs at $\ynpars_\sx\!=\!10\,n_\sx\, \ynpars$,
    where $n_\sx$ is number of sources; therefore, increasing the
    number of sources can compensate for poorer quality of individual
    source filters;
  \item the containment probability, $p\subs{c}$, is unity and never
    collapses, even when the target analysis and synthesis models
    mismatch;
  \item the computational costs of FPD-BTL at the target are far
    smaller than the costs of the complete modelling methods at the
    processing/fusion centre. Local data are processed locally
    (intrinsic parallelization), and the FPD-optimal BTL operator for
    LSU-UOS class---i.e.\ the required minimization of
    Kullback-Leibler divergence---reduces to intersection of
    supporting orthotopes at the target.
\end{itemize}
\section{Conclusion}                                   \label{sec:conclusion}

In this paper, we have proposed a Bayesian transfer learning (BTL) algorithm based on
fully probabilistic design (FPD), and have applied it to transfer between filtering tasks described by linear state-space models with uniformly distributed noises on orthotopic support (i.e. LSU-UOS filters).

As already noted, FPD-BTL is truly a \emph{transfer} learning technique,
as opposed to a joint/multi-channel inference scheme, since the source
is an independent modeller from the target (multiple modelling) and it transfers the
result of its local Bayesian inference task---state prediction---to support the inference task (i.e. state filtering) at the target,
in line with the core notions of transfer learning \cite{PanYan:10}.
FPD-BTL is also a truly \emph{Bayesian} transfer learning algorithm,
where the source's fully probabilistic inference, $f_\sx(x_{\sx,t}|d_\sx(t-1))$
(Figure \ref{fig:transfer-diagram}), is transferred, rather than the source data,
$d_\sx(t)$, or merely their statistics.
Therefore, a hallmark of BTL is that the target processes the source's (Bayesian)
sufficient statistics and  local modelling knowledge,  $f_\sx(\cdot)$,
rather than requiring the transfer of the raw source data. We can see this,
for instance, in the second line of transfer step I. in Algorithm \ref{alg:alg},
which processes UOS state predictor \eqref{eqn:time-updt-UOS-approx}, with shaping parameters (statistics), $\lo{x}^{+}_{t+1}$ and $\hi{x}^{+}_{t+1}$, rather then the joint processing
of $d_\sx(t)$ and $d(t)$ in  BCM  \eqref{e:dupdc}.

FPD-BTL generally involves mean-field-type integrals which can be hard to compute.  Its specialisation in LSU-UOS filtering tasks results in the FPD-justified geometric intersection operator for the transfer step, which is computationally trivial for orthotopes.  Moreover, this operator
can be readily extended to multiple source tasks (Theorem \ref{t:t2}), with exceptional implementational speed (i.e. minimal increase in computational cost with $n_{\sx}>1$ ).

The proposed BTL operator is a function of the source uncertainty
(measured as variance or entropy), and not simply the source's state estimate. It is this which enables robust transfer, i.e. the rejection of poor-quality source knowledge.
In addition, the application of FPD-BTL to the UOS class has proved to be
theoretically significant, since it demonstrates that the
non-robustness of earlier Gaussian-based FPD-optimal BTL schemes,
e.g. in \cite{FolQui:18}, is a consequence of higher-moment loss in the
induced mean-field operator, and is not an intrinsic limitation of FPD
methodology.

Finally, note that the the proposed BTL technique can be used to transfer from any source learning tasks (filters) that yield sequential state predictors, including particle filters, point-mass filters, etc.

The results presented here can be further developed in the following ways:
\begin{itemize}
\item Currently, there is no measure of trust in the reliability of
the source(s): the target fully accepts the source knowledge
\eqref{e:fac1}. The future research will focus on the target's (Bayesian) modelling of the sources' reliability.
\item   
  Further research is required into the case where the sets in the transfer~\eqref{e:xconstr} and data updates~\eqref{eqn:data-updt-unif2}  are disjoint. This can probably be resolved  via a~sub-unity transfer weight. 
\item The isolated LSU-UOS filtering algorithm (Section \ref{sec:prelim}) comprises two local
  approximations per filtering step. The accumulated approximation error can be reduced via more flexible geometric support approximations, such as zonotopes \cite{TanWanZhaShe:20},
  \cite{ScoRaiMarBra:16}, with bounding guarantees available via global approximations based on empirical approximations (e.g. particle filters~\cite{Lietal:16}).
\item In future work, the target can itself be a joint network modeller, independent of the source(s),  as in  \cite{PapQui:21}, further enhancing the positive transfer.
\item The currently adopted FPD-BTL scheme involves static transfer, in the sense that the
  source knowledge is transferred in the form of the marginal state predictor at each time step. By transferring joint distributions over multiple time steps---i.e. dynamic transfer---the source's temporal (dynamic) knowledge can be exploited at the target \cite{PapQui:18}.
\end{itemize}

\section*{Acknowledgment}
This research has been supported by The Czech Science Foundation (GA\v{C}R) [grant 18-15970S].

\appendix

\section{Proofs}                           \label{s:app}

\subsection{Proof of Theorem~\ref{t:t1a}}              \label{a:proof1}

Let us partition the integration domain, $\mS{x}_{\sx,t}$, of
$f_\sx(x_{t}|d_\sx(t-1))$ into sets $\mS{x}^\cap_{t}$ and
$\mS{x}^\bullet_{t}$ (see Figure~\ref{f:venn12}), so that
$\mS{x}^\cap_{t}=\mS{x}_{\sx,t}\cap\mS{x}_{t}$ and
$\mS{x}^\bullet_{t}=\mS{x}_{\sx,t}\cap\mS{x}_{t}^c$, where
$\mS{x}_{t}^c$ is the complement of $\mS{x}_{t}$.
Alternatively:
$\mS{x}^\cap_{t}=\left\{x_t\left|\,x_t\in\mS{x}_{\sx,t} \land x_t\in\mS{x}_{t} \right.\right\}$
and
$\mS{x}^\bullet_{t}=\left\{x_t\left|\,x_t\in\mS{x}_{\sx,t} \land x_t\not\in\mS{x}_{t} \right.\right\}$.
It holds that $\mS{x}^\cap_{t} \cap \mS{x}^\bullet_{t} = \emptyset$ and
$\mS{x}^\cap_{t} \cup \mS{x}^\bullet_{t} = \mS{x}_{\sx,t}$.
The set, $\mS{x}^\bullet_{t}$, is
open on the border with $\mS{x}_t$, i.e.
$\mS{x}^\bullet_{t}\cap\mS{x}_t=\emptyset$.

Also, $\mS{y}_{t}|x_t$ is the support of $f(y_t|x_t)$ and
$\breve{\mS{y}}_{t}|x_t$ of $\breve{f}(y_t|x_t,f_\sx)$, both
a function of $x_t$.
The set $\mS{y}_{t}|x_t\neq\emptyset$,
$\forall x_t\in\mathbb{R}^\xsize$,~\eqref{e:LSU-dupdt}.

The KLD~\eqref{e:klddef} can be written, using Fubini's theorem, as
  \begin{equation}                         \label{e:klddecomp}
  \kld{\breve{f}}{f^I}  =
\! \int\limits_{\mS{x}_{\sx, t}} \!\! f_s(x_{t}|d_\sx(t-1))
\,\kldsym_y(x_t) \diff x_t + \kldsym_{x1} + \kldsym_{x2},
  \end{equation}
where
\begin{eqnarray*}
  \kldsym_y(x_t) &=&
  \!\int\limits_{\breve{\mS{y}}_{t}|x_t}\!\! \breve{f}(y_{t}|x_t,f_\sx)\,
  \ln\frac{\breve{f}(y_{t}|x_t,f_\sx)}{f(y_{t}|x_{t})} \diff y_t,  \nonumber \\
  \kldsym_{x1} &=& \!\int\limits_{\mS{x}^\cap_{t}} \!\!
  f_\sx(x_{t}|d_\sx(t-1))
  \,\ln\frac{f_\sx(x_{t}|d_\sx(t-1))}{f(x_{t}|d(t-1))}
  \diff x_t, \nonumber \\
  \kldsym_{x2} &=& \!\int\limits_{\mS{x}^\bullet_{t}} \!\!
  f_s(x_{t}|d_\sx(t-1))
  \,\ln\frac{f_\sx(x_{t}|d_\sx(t-1))}{f(x_{t}|d(t-1))}
  \diff x_t. \nonumber \\
\end{eqnarray*}

Let us consider three (exhaustive) cases of the
relationship between the sets $\mS{x}_{\sx,t}$
and~$\mS{x}_{t}$:
\begin{enumerate}[(i)]
\item \label{itemi} $\mS{x}_{\sx,t} \subseteq \mS{x}_{t}$,
  i.e. $\mS{x}^\bullet_{t}=\emptyset$, which is a~special case
  (Figure\,\ref{f:venn12}, case\,\ref{itemi}). Then
  $\kldsym_{x2}=0.$ Also, $\kldsym_{x1}$ is a~non-negative, finite constant
  independent of~$\breve{f}(y_t|x_t,f_\sx)$. The only term influencing
  $\kld{\breve{f}}{f^I}$ is $\kldsym_y(x_t)$. To be finite, it must
  hold that $\breve{\mS{y}}_{t}|x_t \subseteq \mS{y}_{t}|x_t$, where
  $\mS{y}_{t}|x_t$ is the support of~$f(y_{t}|x_{t})$. To minimize
  $\kldsym_y(x_t)$, we must minimize the normalization constant of
  $\breve{f}(y_t|x_t)$, i.e.\ extend maximally the support of
  $\breve{f}(y_t|x_t,f_\sx)$ to maximize its measure relating
  $\breve{\mS{y}}_{t}|x_t = \mS{y}_{t}|x_t$. Then,
  ${f}^o(y_t|x_t,f_\sx) \propto f(y_t|x_t)$, given
  $x_t\in\mS{x}^\cap_{t}\ (\equiv\mS{x}_{\sx,t})$. After substitution
  into~\eqref{e:optimized}, the FPD-optimal joint pdf ${f}^o\propto
  f(y_t|x_t)\, f_\sx(x_{t}|d_\sx(t-1))$, i.e. it is zero $\forall
  x_t\not\in\mS{x}^\cap_{t}$.
\item \label{itemii} $(\mS{x}_{\sx,t} \not\subseteq \mS{x}_{t}) \land
  (\mS{x}_{\sx,t}\cap\mS{x}_{t}\neq\emptyset)$, i.e.\ both
  $\mS{x}^\cap_{t}$ and~$\mS{x}^\bullet_{t}$ are non-empty
  (Figure\,\ref{f:venn12}, case\,\ref{itemii}). In this general
  case, we adopt ${f}^o(y_t|x_t,f_\sx)\propto f(y_t|x_t)$,
  $\forall x_t\in\mS{x}^\cap_{t}$, in consequence of~\ref{itemi}, and prove
  its optimality here. We introduce the following sequences,
  $k\in\mathbb{N}$:
  \begin{itemize}
  \item $f^{(k)}(x_t|d(t-1))>0,\ \forall x_t\in\mS{r}^\xsize$ and $\forall k$,
  with $\lim\limits_{k\rightarrow+\infty} f^{(k)}(x_t|d(t-1)) =
  f(x_t|d(t-1))$,
  \item $\breve{f}^{(k)}(y_{t}|x_t,f_\sx)\propto
  f(y_t|x_t)\,\xi^{(k)}(x_t)$, where $\xi^{(k)}(x_t)>0,\ \forall
  x_t\in\mS{r}^\xsize$ and $\forall k$, with
  $\lim\limits_{k\rightarrow+\infty} \xi^{(k)}(x_t)
  = \chi(x_t \in \mS{x}^\cap_{t})$.
  \end{itemize}
These sequences converge to uniform pdfs.

The sequences defined above are chosen so that
$0<\frac{\breve{f}^{(k)}(y_{t}|x_t,f_\sx)}{f^{(k)}(x_t|d(t-1))}\leq
a^{(k)}<+\infty$ $\forall x_t\in\mS{x}_{s,t}$,
$y_t\in \breve{\mS{y}}_{t}|x_t$ and
$\lim\limits_{k\rightarrow+\infty}a^{(k)}<+\infty$.  This
choice guarantees a~finite
KLD~\eqref{e:klddef}.

The definitions above imply
\begin{eqnarray}
  \kldsym_y^{(k)}(x_t) &=&
  \!\int\limits_{\breve{\mS{y}}_{t}|x_t}\!\! \breve{f}^{(k)}(y_{t}|x_t,f_\sx)\,
  \ln\frac{\breve{f}^{(k)}(y_{t}|x_t,f_\sx)}{f(y_{t}|x_{t})} \diff y_t,  \nonumber \\
  \kldsym^{(k)}_{x1} &=& \!\int\limits_{\mS{x}^\cap_{t}} \!\!
  f_\sx(x_{t}|d_\sx(t-1))
  \,\ln\frac{f_\sx(x_{t}|d_\sx(t-1))}{f^{(k)}(x_{t}|d(t-1))}
  \diff x_t. \nonumber
\end{eqnarray}
We define the sequence, $\kld{\breve{f}}{f^I}^{(k)}$, where
$\lim\limits_{k\rightarrow+\infty}\kld{\breve{f}}{f^I}^{(k)}=\kld{\breve{f}}{f^I}$,
and, similarly to~\eqref{e:klddecomp}, we can express it formally as
\begin{eqnarray}                                           \label{e:kldseq}
\hspace{-2em}
\kld{\breve{f}}{f^I}^{(k)}\!\!\!\!\!
  & = &
  \underbrace{
    \!\!\! \int\limits_{\mS{x}^\cap_{t}} \!\! f_\sx(x_{t}|d_\sx(t-1))\,\kldsym_y^{(k)}(x_t) \diff x_t
    + \kldsym^{(k)}_{x1}
  }\limits_{S^{(k)}} + \nonumber\\
&+&
  \underbrace{
    \!\!\!\int\limits_{\mS{x}^\bullet_{t}}\!\!f_\sx(x_{t}|d_\sx(t-1))
   \!\!\!\int\limits_{\breve{\mS{y}}_{t}|x_t}\!\!\!\!\breve{f}^{(k)}(y_{t}|x_t,f_\sx)\,
   \ln\frac{f_\sx(x_t|d_s(t-1))}{f(y_t|x_t)}\diff y_t \diff x_t
  }\limits_{Q^{(k)}}+\nonumber \\
&+&
  \underbrace{
    \!\!\!\int\limits_{\mS{x}^\bullet_{t}}\!\!f_\sx(x_{t}|d_\sx(t-1))
   \!\!\!\int\limits_{\breve{\mS{y}}_{t}|x_t}\!\!\!\!\breve{f}^{(k)}(y_{t}|x_t,f_\sx)\,
   \ln\frac{\breve{f}^{(k)}(y_{t}|x_t,f_\sx)}{f^{(k)}(x_t|d(t-1))}
   \diff y_t
   \diff x_t.
  }\limits_{R^{(k)}}
  \nonumber\\
   ~
\end{eqnarray}
The term $S^{(k)}$ in~\eqref{e:kldseq} can be taken directly to the
limit ($k\rightarrow+\infty$) and treated as in case~\ref{itemi},
with the optimal result
${f}^o \propto f(y_t|x_t)\, f_\sx(x_{t}|d_\sx(t-1))$,
$\forall x_t\in\mS{x}^\cap_{t}$.
According to the definitions of the sequences, $\breve{f}^{(k)}(y_{t}|x_t,f_\sx)$
and~$f^{(k)}(x_t|d(t-1))$, and recalling that $\mS{x}^\bullet_{t}$ is
open, the terms $Q^{(k)}$ and~$R^{(k)}$ in~\eqref{e:kldseq} converge
pointwise to zero for $x_t\in\mS{x}^\bullet_{t}$.
%
Therefore, if $k\rightarrow+\infty$,
KLD~\eqref{e:kldseq} is finite $\forall x_t\in\mS{x}^\bullet_{t}$.
Hence, given $x_t\in\mS{x}^\bullet_{t}$,
%
%
${f}^o(y_t|x_t,f_\sx)
=\lim\limits_{k\rightarrow+\infty}\breve{f}^{(k)}(y_{t}|x_t,f_\sx)=0$
for all $y_t\in\mS{r}^\ysize$, i.e.\
$\breve{f}=0$ for
$x_t\in\mS{x}^\bullet_{t}$.
If $x_t\not\in\mS{x}_{\sx,t}$, then
$\breve{f}=0$
also~\eqref{e:optimized}.
Without changing
$\breve{f}$,
we can define ${f}^o(y_t|x_t,f_\sx)
\propto f(y_t|x_t)\,\chi(x_t \in \mS{x}^\cap_{t})$.
\item                               \label{itemiii}
$\mS{x}_{\sx,t}\cap\mS{x}_{t}=\emptyset$, then
 $\mS{x}^\cap_{t}=\emptyset$ and ${f}^o(y_t|x_t,f_\sx) \equiv 0$,
 $\forall x_t\in\mS{r}^\xsize$ (Figure\,\ref{f:venn12},
  case\,\ref{itemiii}).
In this case,
as a consequence of the target's full trust in the source's state predictor
(see the first bullet point in Section~\ref{sec:conclusion}),
transfer learning is impossible and we define
${f}^o(y_t|x_t,f_\sx) \equiv f(y_t|x_t)$,
i.e. ${f}^o(y_t|x_t,f_\sx)$ is conditionally independent of $f_\sx$,
and $\mS{x}^o_{t}\equiv\mS{x}_{t}$.
\end{enumerate}

To conclude the proof, \emph{if} $\mS{x}^\cap_t=\emptyset$, then
the sought pdf,
${f}^o(y_t|x_t,f_\sx)$,
conditioned on the transferred source state predictor, $f_\sx$,
is proportional to $f(y_t|x_t)$,
if $x_t\in\mS{x}^\cap_t$, and it is zero if $x_t\not\in\mS{x}^\cap_t$;
i.e.\ ${f}^o(y_t|x_t,f_\sx)\propto
f(y_t|x_t\in(\mS{x}_{\sx,t}\cap\mS{x}_{t}))$. Hence,
$\mS{x}^o_{t}=\mS{x}^\cap_{t}$.
Conversely, \emph{if} $\mS{x}^\cap_{t}=\emptyset$, then ${f}^o(y_t|x_t,f_\sx) \equiv
f(y_t|x_t)$ (i.e.\ the isolated target observation model)
and $\mS{x}^o_{t}\equiv\mS{x}_{t}$.
%


\subsection{Proof of Theorem~\ref{t:t2}}           \label{a:proof2}


Let $n=3$. The filter 1 is the target filter and filters~2 and~3 are the
source filters. Perform transfer learning between the filters~2 and~3
by intersecting their state supports (Theorem~\ref{t:t1a}). We obtain
an abstract filter providing all the
knowledge in filters~2 and~3 available for transfer to filter~1.
The transfer between the
abstract filter and filter~1, then results in another intersection. As
intersection is a commutative and associative operation, we
obtain~\eqref{e:xconstrnet}. The same procedure can be applied for any
$n\geq3$.


\bibliographystyle{ieeetr} 
\bibliography{ref}

\end{document}